\documentclass{article}

\usepackage[preprint]{neurips_2026}

\usepackage[utf8]{inputenc}
\usepackage[T1]{fontenc}
\usepackage{hyperref}
\usepackage{url}
\usepackage{booktabs}
\usepackage{amsfonts}
\usepackage{nicefrac}
\usepackage{microtype}
\usepackage{xcolor}
\usepackage{graphicx}
\usepackage{subcaption}
\usepackage{algorithm}
\usepackage{algorithmic}
\usepackage{amsmath}
\usepackage{amssymb}
\usepackage{mathtools}
\usepackage{amsthm}

\newcommand{\cA}{\mathcal{A}}
\newcommand{\cB}{\mathcal{B}}

\newcommand{\cF}{\mathcal{F}}
\newcommand{\cG}{\mathcal{G}}
\newcommand{\cH}{\mathcal{H}}

\newcommand{\cN}{\mathcal{N}}
\newcommand{\cO}{\mathcal{O}}

\newcommand{\cS}{\mathcal{S}}
\newcommand{\cT}{\mathcal{T}}

\newcommand{\cV}{\mathcal{V}}

\newcommand{\cZ}{\mathcal{Z}}

\newcommand{\bfk}{\mathbf{k}}

\newcommand{\bfB}{\mathbf{B}}

\newcommand{\bfG}{\mathbf{G}}

\newcommand{\bfI}{\mathbf{I}}

\newcommand{\bfK}{\mathbf{K}}

\newcommand{\bfW}{\mathbf{W}}

\newcommand{\bfY}{\mathbf{Y}}
\newcommand{\bfZ}{\mathbf{Z}}

\theoremstyle{plain}
\newtheorem{theorem}{Theorem}[section]

\newtheorem{lemma}[theorem]{Lemma}
\newtheorem{corollary}[theorem]{Corollary}
\theoremstyle{definition}
\newtheorem{definition}[theorem]{Definition}
\newtheorem{assumption}[theorem]{Assumption}

\theoremstyle{remark}

\title{Learning Kernel-Based MDPs from Episodic Preferential Feedback}

\author{%
  Nikola Pavlovic\\
  Cornell University\\
  \texttt{np.358@cornell.edu} \\
  \And
  Sattar Vakili\\
  MediaTek Research\\
  \AND
  Qing Zhao \\
  Cornell University\\
}

\begin{document}

\maketitle

\begin{abstract}
Human feedback often arrives as preferences rather than calibrated numeric 
rewards, motivating reinforcement learning from preferential feedback, also 
referred to as reinforcement learning from human feedback (RLHF). We present 
a rigorous theoretical study of preference-only learning in episodic kernel 
MDPs. In each episode, the learner deploys two policies from a common start 
state and receives a single binary label indicating which trajectory is 
preferred, modeled by a Bradley--Terry--Luce link on the difference of 
cumulative (unobserved) rewards. Under kernel-based assumptions on the reward 
and transition functions (one of the most general models amenable to theoretical 
analysis) we develop preference-based value estimation and confidence sets 
tailored to end-of-episode comparisons.We
prove high-probability regret bounds that scale
sublinearly in the number of episodes, implying
that the value of the learned policy converges to
that of the optimal policy.
\end{abstract}

\section{Introduction}

Optimizing sequential decision-making using only preference-based feedback has recently emerged as an important problem \citep{wirth2017survey, christiano2017deep}. In many applications, including large language model alignment \citep{ouyang2022training}, robotics, and recommendation systems, humans find it natural to compare options rather than assign calibrated numeric scores. Preference feedback is often more reliable than numeric ratings, even if less informative. Since human feedback is costly, it becomes essential to develop efficient methods that can learn near-optimal policies while minimizing the number of preference queries required. This motivates the study of reinforcement learning from preferential feedback, commonly referred to as reinforcement learning from human feedback (RLHF).

In the RLHF framework for episodic Markov Decision Processes (MDPs), at each episode, the learner deploys policies and receives preference feedback indicating which trajectory is preferred, rather than observing numeric rewards. This binary feedback is typically modeled as a Bernoulli random variable using the Bradley-Terry-Luce (BTL) model, in which the probability of a preference is determined by applying a link function to the difference in cumulative rewards. Performance is measured in terms of regret, defined as the cumulative loss compared to the optimal policy. The key challenge is learning from minimal one-bit feedback per episode, where the trajectory-level signal provides no direct information about per-step credit assignment.

Several works have studied preference-based RL in tabular MDPs, but these 
algorithms are computationally intractable even for small state spaces. 
\citet{wu2023making} provide the first computationally efficient algorithm 
for preference-based RL in linear MDPs, achieving near-optimal regret via 
randomization to handle trajectory-level feedback. However, the linear MDP 
assumption is restrictive. For example, \citet{lee2023demystifying} showed that the required 
feature dimension can scale with the size of the state space. Kernel-based 
models offer a natural generalization that captures nonlinear structure while 
remaining amenable to theory, and connect to neural networks via the Neural 
Tangent Kernel~\citep[NTK,][]{jacot2018neural}. Kernel RL under standard 
reward feedback has been studied extensively
\citep{yang2020function,chowdhury2019online,maran2024no,maran2024local}, and sublinear regret has only recently been achieved \citep{maran2024local}.

 \subsection{Main Results}\label{Sec: Main_Results}

In this work, we develop the first algorithm for preference-based RL in kernel MDPs that achieves provable sublinear regret for all kernels in the Matérn family. Our algorithm attains cumulative regret of order $\widetilde{\cO}\left(K^{1-0.5(\beta_p-1)^2/\beta_p(\beta_p+1)}\right)$, where $\beta_p>1$ is the eigen-decay parameter defined in Assumption~\ref{ass:eigen_assumption}. In addition, the algorithm can be implemented in time polynomial in the number of episodes $K$, thereby providing the first computationally efficient no-regret algorithm for preference-based feedback in kernel MDPs.

In the kernel RL literature~\citep{vakili2021optimal, whitehouse2023sublinear, maran2024local}, achieving sublinear regret in the number of episodes has been a central goal, as it implies convergence to the optimal policy as the number of episodes grows. For Matérn kernels, sublinear regret was only very recently established for MDPs with standard scalar-valued reward feedback in \citet{maran2024local}.\footnote{
Although a sharper upper bound is claimed in \cite{Vakili23-KerRL}, the derivation appears to rely on a subtle issue in the covering argument, preventing the stated regret bound from following. This issue is discussed in detail in
\citet[App.~F]{maran2024local}; see also Sec.~\ref{sec:discuss_appendix} for a broader discussion.
}
The authors develop a no-regret algorithm for scalar-feedback MDPs over a broad class of smooth environments, with kernel MDPs arising as a special case. However, the currently known implementations are not computationally efficient
\citep[App.~D]{maran2024no}; in particular, it remains unclear whether they can be implemented in time polynomial in the number of episodes. We discuss computational complexity in kernel MDPs further in Sec.~\ref{subsec:comp_complexity}.

Preference feedback is substantially more challenging than action-level scalar reward feedback. Unlike scalar feedback, which provides continuous signals containing both magnitude and distributional information, preference feedback compresses the information from an entire episode into a single binary comparison. As a result, both amplitude information (reward magnitude) and fine-grained temporal information (per-step reward assignment) are lost. Nevertheless, we show that \emph{near-optimal policies can be learned using only $K$ bits of feedback}, corresponding to one binary preference label per episode across $K$ episodes.

Achieving sublinear regret under preference-based feedback requires overcoming three interrelated technical challenges: (i) controlling the \emph{log-covering number} of the value function class, which depends on the RKHS norm of the estimators; (ii) designing exploration noise that induces sufficient trajectory-level exploration while remaining amenable to efficient covering arguments; and (iii) carefully tuning regularization and covering parameters to balance bias, variance, and statistical complexity simultaneously.

\subsection{Technical Challenges and Contributions}\label{sec:tech_contributions}
We provide an overview of the three main technical challenges and corresponding contributions here, with full technical details deferred to Sec.~\ref{sec:per_gaurantees_tech_challenge}.
\paragraph{(i) Controlling the log-covering number via estimator norm bound.}
In kernel MDPs, confidence bounds depend on the log-covering number of the value-function class, which in turn scales with the RKHS norms of the estimators. This is a fundamental difference from linear MDPs, where the log-covering numbers are only logarithmic in the episode count \citep{jin2020provably}.  Under trajectory-level preference feedback, localization arguments used in action-level kernel RL \citep{maran2024no, janz2020bandit}, no longer apply, and naive estimators can exhibit rapidly growing RKHS norms, leading to overly conservative confidence sets. Our approach controls this complexity by learning the reward component from preferences using regularized kernel logistic regression, with the regularization explicitly tuned to keep the estimator norm (and thus the covering number of the induced function class) small.

\paragraph{(ii) Designing trajectory-level exploration with efficiently coverable sample paths.}
Preference feedback requires exploration at the trajectory level, but deterministic bonuses cannot be applied because the realized trajectory is unknown at decision time. We therefore use randomized exploration through a Gaussian process perturbation that can be injected online. Since GP sample paths do not belong to the RKHS induced by their covariance kernel
almost surely \citep{kanagawa1807gaussian}, we cannot cover the randomized estimator by placing it in a ball of the original RKHS. Instead, we use a high-probability Gaussian covering set: although a GP sample is not
contained in any fixed finite-radius RKHS ball, it lies with high probability within\(\epsilon\), in \(\ell_\infty\), of a suitably enlarged RKHS ball
 \citep{van2008rates}. This allows us to derive covering bounds for the posterior GP perturbation directly.

\paragraph{(iii) Tuning regularization and covering parameters to balance 
bias, variance, and complexity.}
Regularization plays a dual role in our work: it controls statistical accuracy (bias and 
variance) and also complexity via covering numbers. Unlike linear value iteration 
\cite{jin2020provably}, where log covering numbers are only polylogarithmic 
in the mesh $\epsilon$, kernel MDPs yield polynomial log covering bounds 
(see~\eqref{eq:main_text_cover}). This makes tuning $\epsilon$ non-trivial: 
too small inflates confidence widths, too large inflates misspecification 
error. We tune both the regularization and covering mesh $\epsilon$ jointly 
to keep confidence bounds meaningful under preference feedback.


Together, these ingredients yield the first preference-based kernel MDP algorithm with provable sublinear regret.

\subsection{Related Work}

In recent years, preference-based reinforcement learning has gained significant attention in both bandit and MDP settings. In the bandit setting, that corresponds to a degenerate case with a single state, action-level preference (dueling bandits) has been extensively studied across various families of reward functions and preference assumptions \citep{yue2009interactively, ailon2014reducing, saha2021adversarial, bengs2022stochastic, wu2023borda, verma2024neural, kayal2025bayesian}\footnote{The body of literature on dueling bandits is vast, see  \citet{bengs2021preference} for more references}.\\

Preference-based MDPs have also received significant attention; however, theoretical guarantees in this setting often require optimization over a large policy space, making the approach computationally infeasible. \citet{pacchiano2021dueling} consider linear preference-based MDPs with trajectory-level preference and provide sub-linear regret guarantees. However, in a linear setting, the transition model is assumed to be known. Furthermore, the algorithm requires optimization across a large policy class, making it unclear how the approach could be implemented. \citet{chen2022human} further generalizes \citet{pacchiano2021dueling}, allowing the preference function to be an unknown element of a large function space; however, as in \citet{pacchiano2021dueling}, it also requires optimization over the entire policy space. The works of \citet{zhan2023provable, zhan2023query} suffer from similar scaling issues due to their policy-search approach. To the best of our knowledge, the first algorithm for preference-based feedback in linear MDPs that can be implemented efficiently is due to \citet{wu2023making}. To bypass the total policy search, \citet{wu2023making} incites trajectory-level exploration by injecting adequately sampled noise. The idea of randomized trajectory-level exploration has also appeared in works studying aggregate feedback \citep{cassel2024near, shani2020optimistic, efroni2021reinforcement}. Preference feedback is strictly harder than aggregate feedback: while aggregate feedback provides a continuous signal in $[0, H]$, preference feedback compresses this to a single bit, losing both the magnitude of reward differences and fine-grained distributional information\\
Also relevant to our work are papers on kernel MDPs with standard, immediate, state-action feedback. The setting in which the reward and transition operators are general members of an RKHS was first considered by \citet{yang2020function}. In the paper, the authors provide regret guarantees for a wide class of kernels but do not ensure regret sublinearity kernel  in the Mat\'ern family. Under the kernel MDP assumption, the authors of \citet{vakili2024kernel} provide sublinear regret guarantees for the more challenging average-reward setting but with a constraining "optimistic closure" assumption on the value function which removes the need for value iteration. Several other works have considered kernel-based RL under additional structural assumptions \citep{yang2020reinforcement, chowdhury2019online} but do not obtain sublinear regret for Mat\'ern kernels of general smoothness.\\
Recently, the study of smooth state-action feedback MDPs has emerged \citep{maran2024no,maran2024local}. Instead of imposing smoothness assumptions directly on the MDPs reward function or transition operator, as in our work, smoothness is imposed on the Bellman operator. The Mat\'ern kernel assumption implies the smoothness of the Bellman operator and is studied as a special case of a more general framework in \cite{maran2024no,maran2024local}. To the best of our knowledge, the algorithm in \cite{maran2024local} is the \textit{only} one that achieves sublinear regret for all Mat\'ern kernels.

\section{Problem Formulation}\label{sec:problem}

We consider an episodic Markov decision process $\mathrm{MDP}(\mathcal S,\mathcal A,H,\{P_h\}_{h=1}^H,\{r_h\}_{h=1}^H)$ with measurable state space $\mathcal S$,  action set $\mathcal A$ and horizon $H\in\mathbb N$.  For convenience of analysis, we assume that the state action product set is contained in a unit hypercube over $\mathbb{R}^{d}$. At step $h\in[H]$, taking action $a_h\in\mathcal A$ at state $x_h\in\mathcal S$ yields (deterministic) reward $r_h(x_h,a_h)\in[0,1]$ and transitions to $x_{h+1}\sim P_h(\cdot\mid x_h,a_h)$. A (deterministic) nonstationary policy is a mapping $\pi:\mathcal S\times[H]\to\mathcal A$, with $\pi_h(x)\in\mathcal A$ the action prescribed at step $h$.\\
For any policy $\pi$, we define the value and action-value functions
\begin{align*}
    &V_h^\pi(x)\ :=\ \mathbb E\!\left[\sum_{h'=h}^H r_{h'}(x_{h'},\pi_{h'}(x_{h'}))\ \bigg|\ x_h=x\right],\\
&Q_h^\pi(x,a)\ :=\ r_h(x,a)+\mathbb E\!\left[\sum_{h'=h+1}^H r_{h'}(x_{h'},\pi_{h'}(x_{h'}))\ \bigg|\ x_h=x,\,a_h=a\right],
\end{align*}

and $V_h^\star(x):=\sup_\pi V_h^\pi(x)$. We adopt the notation $[P_h V](x,a):=\mathbb E_{x'\sim P_h(\cdot\mid x,a)}[V(x')]$ so that the Bellman equations read ($V^{\pi}_{H+1}\equiv 0$):
\begin{align}\label{eq:bellman_eq}
Q_h^\pi(x,a)=\big(r_h+P_h V_{h+1}^\pi\big)(x,a),\; V_h^\pi(x)=Q_h^\pi\!\big(x,\pi_h(x)\big)    
\end{align}

\paragraph{Reproducing Kernel Hilbert Spaces and Kernel ridge regression} We will assume that the state-action value function belongs to the reproducing kernel Hilbert space(RKHS) of a known kernel function $k$. Let $\mathcal{Z}=\mathcal{S}\times \mathcal{A}$ be the state-action product set. For a positive-definite function $k: \mathcal{Z}\times \mathcal{Z}\rightarrow \mathbb{R}$ . let RKHS $\mathcal{H}_k$ denote the Hilbert space of functions over $\mathcal{Z}$ equipped  with an inner product satisfying $(*)\forall z\in \mathcal{Z}, k(\cdot, z) \in \mathcal{H}_k$ and $(**) \forall z\in \mathcal{Z}, f \in \mathcal{H}_k,  \langle f, k(\cdot, z)\rangle =f(z)$. This inner product induces the norm defined as $\|f\|_{k}=\langle f,  f\rangle_{\mathcal{H}_k}$.  We introduce the shorthand notation for the feature function $\phi(z)=k(z, \cdot)$, assume without loss of generality $\|\phi(z)\|_{\cH_k} \leq 1$, and that all features lie in the unit ball of $\cH_k$.\\
Hilbert space $\cH_k$ also admits an alternate characterization in terms of eigenvalues of the kernel function $k$. Under mild conditions on $k$, Mercer's theorem provides a characterization of an orthonormal basis of $\cH_k$
\begin{theorem}\citep{SVM_Book}
Let $\mathcal{W}$ be a compact metric space and $k: \mathcal{Z} \times \mathcal{Z}\rightarrow \mathbb{R}$ be a continuous kernel. Furthermore, let $\Delta$ be a finite Borel probability measure supported on $\cZ$. Then, there exists an orthonormal system of functions $\{\psi_j\}_{j\in \mathbb{N}}$ in $\ell_2(\Delta,\mathcal{Z})$  and a sequence of non-negative values
$\{\lambda_j\}_{j\in \mathbb{N}}$ satisfying $\lambda_1\geq \lambda_2\dots \geq 0$, such that $k(z,z')=\sum_{j\in \mathbb{N}}\lambda_j\psi_j(z)\psi_j(z')$ holds for all $z,z'\in \mathcal{W}$ and the convergence is absolute and uniform over $z,z'\in \mathcal{Z}$. 
\end{theorem}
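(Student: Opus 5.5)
The plan is the classical route via the integral operator. On $\ell_2(\Delta,\mathcal{Z})$ I would define
\[
(T_k f)(z)=\int_{\mathcal{Z}} k(z,z')f(z')\,d\Delta(z'),
\]
and then prove three things: (a) $T_k$ is compact, self-adjoint and positive; (b) the spectral theorem gives the eigenpairs $(\lambda_j,\psi_j)$; (c) the kernel expansion converges absolutely and uniformly. I take the compact metric space in the statement to be the support set $\mathcal{Z}$ itself, which is what the conclusion needs.

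For (a), continuity of $k$ on the compact set $\mathcal{Z}\times\mathcal{Z}$ makes $k$ bounded and uniformly continuous. Boundedness gives $k\in\ell_2(\Delta\otimes\Delta)$, so $T_k$ is Hilbert--Schmidt and hence compact. Symmetry of $k$ makes $T_k$ self-adjoint. Positive definiteness gives $\langle T_k f,f\rangle\ge 0$: approximate $f$ by continuous functions and pass to the limit of Riemann-type sums of $\sum_{i,l}c_ic_l k(z_i,z_l)\ge 0$.

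For (b), the spectral theorem for compact self-adjoint positive operators yields an orthonormal system $\{\psi_j\}$ in $\ell_2(\Delta)$ with eigenvalues $\lambda_1\ge\lambda_2\ge\dots\ge 0$ accumulating only at $0$. This system spans $\overline{\mathrm{ran}\,T_k}$. For each $\lambda_j>0$ I fix the continuous representative $\psi_j=\lambda_j^{-1}T_k\psi_j$. It is continuous, in fact uniformly, because $k$ is uniformly continuous and $\|\psi_j\|_2=1$, via Cauchy--Schwarz.

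For (c), the main obstacle is upgrading $L^2$ convergence to pointwise, absolute and uniform convergence. The steps are as follows.
\begin{itemize}
\item Set $k_n(z,z')=k(z,z')-\sum_{j\le n}\lambda_j\psi_j(z)\psi_j(z')$. This is the kernel of the positive operator $T_k-\sum_{j\le n}\lambda_j\psi_j\otimes\psi_j$.
\item Show $k_n$ is a continuous positive-definite kernel. Then $k_n(z,z)\ge 0$, so $\sum_{j\le n}\lambda_j\psi_j(z)^2\le k(z,z)\le\sup k$ for all $z$.
\item By Cauchy--Schwarz, for every $z$ the series $\sum_j\lambda_j\psi_j(z)\psi_j(z')$ converges absolutely, uniformly in $z'$. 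Its limit agrees with $k(z,\cdot)$ in $\ell_2(\Delta)$, because $k(z,\cdot)$ is orthogonal to $\ker T_k$ as a consequence of $T_k\ge 0$.
\item Both sides are continuous in $z'$. Since $\mathrm{supp}\,\Delta=\mathcal{Z}$, $\ell_2$ equality upgrades to pointwise equality. In particular, on the diagonal, $\sum_j\lambda_j\psi_j(z)^2=k(z,z)$.
\item This diagonal series has nonnegative continuous terms and a continuous limit on a compact set. Dini's theorem makes the convergence uniform.
\item A final Cauchy--Schwarz on tails, $\big|\sum_{j>n}\lambda_j\psi_j(z)\psi_j(z')\big|^2\le\sum_{j>n}\lambda_j\psi_j(z)^2\sum_{j>n}\lambda_j\psi_j(z')^2$, transfers uniform convergence from the diagonal to all of $\mathcal{Z}\times\mathcal{Z}$.
\end{itemize}

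The two places needing the most care are both in (c): the support argument that turns $\ell_2$ equality into pointwise equality, and the Dini step.
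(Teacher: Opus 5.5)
Your proposal is correct. It is essentially the classical integral-operator proof of Mercer's theorem: Hilbert--Schmidt compactness of $T_k$, continuous representatives $\psi_j=\lambda_j^{-1}T_k\psi_j$, nonnegativity of the truncated kernels $k_n$ on the diagonal, pointwise identification of the limit using $\mathrm{supp}\,\Delta=\mathcal{Z}$ (the right reading of ``supported on''), Dini on the diagonal and Cauchy--Schwarz off it. The paper gives no proof of its own, only the citation to the SVM textbook, so this is the argument it is relying on.

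One justification needs fixing, namely ``$k(z,\cdot)\perp\ker T_k$ as a consequence of $T_k\ge 0$''. Operator positivity on $\ell_2(\Delta)$ gives nothing pointwise. What you need is that for $g\in\ker T_k$ the continuous function $T_k g$ vanishes $\Delta$-a.e., hence everywhere by full support, so $\langle k(z,\cdot),g\rangle=(T_k g)(z)=0$. Full support, via the test functions $\mathbf{1}_{B(z,r)}/\Delta(B(z,r))$ with $r\to 0$, is likewise what yields $k_n(z,z)\ge 0$.
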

In particular $\forall f \in \cH_k$ and $\{\zeta_j\}_{j\in \mathbb{N}}=\{\sqrt{\lambda_j}\psi_j\}_{j \in \mathbb{N}}$ form an orthonormal basis of the space $\cH_k$. Behavior of eigen-values of $\{\lambda_j\}_{j\in \mathbb{N}}$ dictates the difficulty of many learning  problems over $\cH_k$. In this work, we consider kernels from the Mat\'ern kernel family. Next to the Squared-Exponential (SE) kernel\footnote{The methodology developed here is directly applicable to the SE kernel, as its eigenvalues decay exponentially.}, it is one of the most widely used kernel families and is especially attractive for function modeling, as it is norm-equivalent to appropriate order Sobolev spaces \citep{kanagawa1807gaussian}. Due to this norm equivalence, it is also studied in the wider context of smooth MDPs \cite{maran2024no, maran2024local}.

\begin{assumption}(eigen-decay and bounded $\|\psi_j\|_{\infty}$ assumption)\label{ass:eigen_assumption}
    Consider a kernel (covariance) function $k:\cZ\times \cZ \rightarrow \mathbb{R}$ and let $\{\lambda_i\}_{j\in \mathbb{N}}$ be its eigenvalues. Furthermore, suppose that compact $\cZ$ is contained within a $d$-dimensional unit hypercube. Kernel $k$ is said to be polynomially-decaying if $\forall j, \lambda_j\leq \ C_ p j^{-\beta_p}$. Additionally $\ell_2$ eigen functions $\{\psi_j\}_{j \in \mathbb{N}}$ of the kernel function $k$ have a bounded $\ell_{\infty}$ norm , $\forall j, \|\psi_j\|_{\infty}\leq F$.
\end{assumption}
Both the assumption on polynomial-eigen decay and $\ell_{\infty}$ bound on $\ell_2$-eigen function are crucial in obtaining tractable regret bounds and are thus almost ubiquitously made across bandit and kernel RL literature \citep{Vakili23-KerRL, vakili2021information, whitehouse2023sublinear,  yang2020function, salgia2023random, yeh2023sample}. For Matern  kernels both parameters $\beta_p$ have closed form expressions in terms of the smoothness parameter $\nu$, $\beta_p=\frac{2\nu}{d}+1$.\\
We briefly introduce Gaussian Processes (GPs), which we use to design 
frequentist estimators of the value function. A GP on the state-action 
domain $\cZ$ is a zero-mean random process with covariance function $k$, 
i.e.\ $\mathbb{E}[f(z)]=0$ and $\mathbb{E}[f(z)f(z')]=k(z,z')$. Given 
samples $\bfY_t$ of $f$ at points $\bfZ_t$, the posterior mean and variance 
are:
\begin{align}
    \mu_{t}(z) & =\bfk_{\mathbf{Z}_t}(z)^{\top}\left(\tau \mathbf{I}_t+
    \mathbf{K}_{\mathbf{Z}_t,\mathbf{Z}_t}\right)^{-1}\mathbf{Y}_t, 
    \label{eqn:posterior_mean}\\
    \sigma^2_t(z)& =k(z,z)-\bfk_{\mathbf{Z}_t}^{\top}(z)\left(\tau
    \mathbf{I}_t+\mathbf{K}_{\mathbf{Z}_t,\mathbf{Z}_t}\right)^{-1}
    \bfk_{\mathbf{Z}_t}(z), \label{eqn:posterior_variance}
\end{align}
where $\bfk_{\bfZ_t}=[k(z,z_1),\dots, k(z,z_t)]$ and 
$\bfK_{\bfZ_t,\bfZ_t}=\{k_{z_i,z_j}\}_{i,j=1}^{t}$. We model the $Q$ 
function as a GP sample, updating the posterior as noisy samples arrive; 
the posterior mean serves as our estimator, and the posterior variance 
naturally drives exploration. We refer the reader to 
Sec.~\ref{sec:app_B} for further details.

\paragraph{Kernel MDP assumption~\citep{Vakili23-KerRL, yang2020function}:}\label{assumption:ker_MDP}
We impose the following assumption on the parametric form of the reward function and the transition operator $\forall z\in\mathcal S\times\mathcal A,\ h\in[H]$:
\begin{align}\label{eq:kernel-mdp}
&r_h(z)=\langle \phi(z),\theta\rangle_{\cH_k}\quad P_h(s'\mid z)=\big\langle \phi(z),\Psi_h(s')\big\rangle_{\cH_k}
\end{align}
where $\forall s'\in \cS, \Psi_h(s'),\theta\in \cB_k(1)$, introducing the short-hand notation for $\cH_k$-ball, $\cB_k(r): =\{f\in \cH_k| \|f\|_{k}\leq r\}$ . Note that we assume the reward function is homogeneous throughout the episode, while the transition operator may vary. We assume throughout that this representation defines a valid transition kernel:
\(P_h(\cdot\mid z)\) is nonnegative and integrates to one for every $(z,h)$. Under the conditions in eq.(\ref{eq:kernel-mdp}) the state-action value function of an arbitrary policy $\pi$ is provably an element of $\cH_k$, moreover $Q^{\pi}_{h}\in \cB_k(H)$ \citep[Lemma 3]{yeh2023sample}.

\paragraph{Interaction protocol with preference feedback.}
Learning proceeds over $K$ \emph{comparison rounds}. At round $k\in[K]$, 
an initial state $x_1^k\in\mathcal{S}$ is chosen (possibly adversarially) 
and the learner selects a policy pair $(\pi,\pi')$, executing each from 
$x_1^k$ to generate trajectories
\begin{align*}
&\tau_k=(x_1^{k},a_1^{k},\ldots,x_{H+1}^{k}),\quad a_h^{k}=\pi(x^{k}_h);\\
&\tau'_k=({x'}_1^{k},{a'}_1^{k},\ldots,{x'}_{H+1}^{k}),\quad 
{a'}_h^{k}=\pi'(x'^{k}_h).
\end{align*}
No per-step rewards are observed; instead, the learner receives a single 
binary label
\begin{align}\label{eq:pref-model}
y_k\sim\mathrm{Ber}\!\left(\sigma\!\left(\sum_{h=1}^H r_h(z^{k}_h)
-\sum_{h=1}^H r_h(z'^{k}_h)\right)\right), \qquad 
\sigma(z):=\frac{1}{1+e^{-z}},
\end{align}
modeled via a Bradley--Terry--Luce logistic link on the difference of 
cumulative (unobserved) rewards. As in logistic bandits 
\citep{abeille2021instance,metelli2025generalized}, performance depends on 
the maximum inverse link gradient
$\kappa_{\cZ}=\max_{x\in[-H,H]}(\sigma'(x))^{-1}$,
which measures how identifiable the reward is from preference feedback: 
large $\kappa_{\cZ}$ corresponds to a flat link function and poor 
learnability.
\paragraph{Objective and regret.} The goal is to learn an optimal policy $\pi^\star$ (maximizing $V_1^\pi$) using only preference feedback \eqref{eq:pref-model}. Since each round requires two length-$H$ rollouts, the total number of environment steps after $K$ rounds is $ T=2KH$. We measure performance by the \emph{pairwise (weak) regret}
\begin{align}\label{eq:regret}
\mathfrak{R}(K):= \sum_{k=1}^{K}\left[V_1^\star(x_1^k)-\frac{\big(V_1^{\pi}(x_1^k)+\,V_1^{\pi'}(x_1^k)\big)}{2}\right]
\end{align}
which credits the learner with the average of the two policies it deploys in each comparison round. We aim to design an algorithm whose regret grows sublinearly in $K$  under the kernel MDP structure~\eqref{eq:kernel-mdp}.
%

\section{Algorithm Description}

In this section, we introduce our algorithm (\textbf{P}reference based \textbf{R}egularized \textbf{O}ptimization via \textbf{S}ampled \textbf{T}rajectory induced \textbf{O}ptimism ) \textsc{PROSTO}.
At the start of the $k^{\text{th}}$ episode, we estimate the state-action value function ${\widehat Q}^{k}_{h}$ based on the trajectory-level feedback from the previous $k-1$ episodes and the state transition samples. Due to the inherent difficulty of trajectory-level feedback, we employ separate approaches to estimate the reward and the next-state expected value function. At time $h$, we obtain the final estimate by  adding the individual estimates, $\widehat\theta_k+
\widehat\Psi_{h}\widehat V^{k}_{h+1}$. \\
By using the MDP parametrization in eq.(\ref{eq:kernel-mdp}), we can rewrite the difference of reward sums obtained by playing $\pi,\pi'$ in $k^{\text{th}}$ episode as $\sum_{h=1}^{H}\theta^{\top}\left(\phi\left(z^{k}_h\right)-\phi\left(z'^{k}_h\right)\right)$, where $z^{k}_h$ is a short-hand for for the state-action pair $(x^{k}_h,a^{k}_h)$.
 Along with eq.(\ref{eq:pref-model}), this parametrization
motivates the application of the maximum likelihood estimator (MLE) for the reward vector $\theta$, utilizing the preference feedback $\{y_j\}_{j=1}^{k-1}$ as the outcome variable and the trajectory differences $\sum_{h=1}^{H} \phi(z^{j}_h) - \phi(z'^{j}_h),j\leq k-1$ as the set of covariates. Specifically, an MLE incorporating a norm-weighted penalty is employed.
\begin{align}\label{eq:reward_theta_calc}
    \widehat\theta_{k}=\mathrm{argmin}_{\theta} &-\left(\sum_{i=1}^{k-1} y_i\log\sigma\left(\theta^{\top}\overline{\phi}_i\right)+(1-y_i)\log\left(1-\sigma\left(\theta^{\top}\overline{\phi}_i\right)\right)\right)+\tau\|\theta\|^2_{\cH_k}
\end{align}
where $\overline{\phi}_i= \sum_{h=1}^{H} \phi\left(z^{i}_h\right)-\phi\left(z'^{i}_h\right)\label{eq:trajectory_lvl_feature}$ is the trajectory difference feature and $\sigma(\cdot)$ is the  sigmoid function.
\begin{algorithm}[ht]
\caption{\textsc{PROSTO}}
\label{alg: algorithm}
\begin{algorithmic}[1]
    \STATE \textbf{Input}: Error probability $\delta_{\textsc{err}}$
    \FOR {$k \in \{1, 2,\dots K\}$}
    \STATE  Calculate $\widehat\theta_k$ from eq.(\ref{eq:reward_theta_calc}) 
    \STATE Generate 2 samples of exploration GPs, $\{\varepsilon_k, \varepsilon'_k\}$ from eq.(\ref{eq:noise_definition})
        \FOR {$h \in \{H, H-1,  \dots 1\}$}
            \STATE  Calculate $\widehat{Q}^{k}_h, \widehat{Q}'^{k}_h$ estimators from eq.(\ref{eq:Q_estimate}):
        \ENDFOR
        \FOR {$h \in \{1,2\dots H\}$}
            \STATE Play $a^{k}_h=\mathrm{argmax}_a \widehat Q^{k}_h(x^{k}_h, a) ,\; a'^{k}_h=\mathrm{argmax}_a \widehat Q'^{k}_h(x'^{k}_h, a)$ and collect transition samples.
        \ENDFOR
        \STATE  Collect preference-feedback label $y_k$
    \ENDFOR
  
\end{algorithmic}
\end{algorithm}

\paragraph{Exploration noise $\varepsilon_k$ and the reward estimator $\widehat\theta_{k}$.}
To design the estimator $\widehat \theta_{k}$ for the state-action value function, we will use the kernel logistic ridge regressor (see eq.(\ref{eq:reward_theta_calc}))(KLRR). However, as is common in RL applications, we need to ensure that the set of trajectories used to compute KLRR is sufficiently informative (i.e. exploratory). To incentivize trajectory-level exploration, we need to design an exploration bonus to be added to $\widehat\theta_{k}$. A difficulty in promoting exploration with trajectory-level feedback is that a commonly used elliptic (deterministic) bonus requires complete knowledge of the trajectory up front, at the beginning of each episode. This is clearly impossible without  fully knowing the transition dynamics, and to overcome this issue, we use a randomized (linear) exploration bonus \citep{cassel2024near, shani2020optimistic}.
In kernel-based RL, exploration noise is drawn from a Gaussian Process (GP). We draw $\varepsilon_k$ from a GP to promote exploration along the directions of those eigenfunctions of the empirical covariance operator $\bfW_k=\sum_{i=1}^{k-1}\overline{\phi}_i\otimes\overline{\phi}_i+\tau\mathbf{Id}$ that have small eigenvalues:
\begin{align}\label{eq:noise_definition}
    \varepsilon_k\sim \mathbf{GP}(0,\mathrm{Cov}(z_1,z_2)), \quad \mathrm{Cov}(z_1,z_2)=\beta^2_{\text{reward}}(\delta)\langle\phi(z_1),\phi(z_2)\rangle_{\bf{W}^{-1}_k}
\end{align}
 where $\beta_{\text{reward}}(\delta)$ is the confidence width defined in Lemma (\ref{Lemma:reward_confidence_bounds_app}). Intuitively, we identify underexplored directions and encourage play through these directions. Reward estimate with exploration (confidence) bonus is now simply calculated as $\widehat\theta_{k}+\varepsilon_k$.

\paragraph{Calculating the Value functions $\widehat{V}^{k}_h,\widehat{V}'^{k}_h$.}
As is usually the case in MDP's \citep{jin2020provably} we calculate the state-action and value functions recursively via value iteration. The difference in our setting is that the reward vector and the expected next-state value are estimated separately. As in eq.(\ref{eq:reward_theta_calc}), the reward estimate $\widehat{\theta}_k$ is calculated via kernel logistic regression, while the next state expected value function is calculated via the usual kernel ridge regression. In particular, for  $\pi$(and entirely analogous for $\pi'$): $\widehat{Q}^k_{H+1}=0$,
\begin{align}\label{eq:Q_estimate}
\widehat{Q}^{k}_h= \left[\widehat{\theta}_k+\varepsilon_{k}+\left(\sum_{i=1}^{k-1} \phi(z^i_h)\otimes\phi(z^i_h)+\lambda \bf{Id}\right)^{-1}\Phi^{h}_{k-1}[\widehat{V}^{i}_{h+1}(x^{i}_{h+1})]_{i\leq k-1}+b^{k}_{h}\right]_{\pm\beta_{\text{clip}}(H-h+1)}\nonumber
\end{align}
where $b^{k}_h(z)=\beta_{\text{trans}}(\delta, \varepsilon)\sqrt{\phi^{\top}(z)\left(\sum_{i=1}^{k-1} \phi(z^i_h)\otimes\phi(z^i_h)+\lambda \bf{Id}\right)^{-1}\phi(z)}$ is the deterministic elliptic potential used to incite optimism in estimating the expected next state value function and $\Phi^{h}_{k-1}=[\phi(z^{i}_h)]_{i\leq k-1}$ is the feature matrix. Here $\beta_{\text{trans}}(\delta, \epsilon)$ is the confidence width of the kernel ridge regression, which, unlike the bandit case, also depends on the covering mesh of the value function cover. Due to value iteration, confidence bounds need to hold over the entire value function class, thus the confidence width scales logarithmically with the size of the $\ell_{\infty}$ cover. Lastly, $[x]_{\pm C}$ is the clipping operator that projects $x$ into the interval $(-C, C)$. For an exact expression on $\beta_{\text{clip}}$\footnote{Note that as in \cite{cassel2024near}, the clipping interval is symmetric around zero. Because $\varepsilon_k$ is random, using one-sided clipping interval as in \cite{jin2020provably} would result in a constant $\cO(1)$ bias in the confidence bounds.}, please see the lemma(\ref{lemma: bounding_ell_infty}).

Both policies are designed greedily with respect to the state-action function, that is, the action is chosen so that the empirical state-action function is maximized $\widehat{V}^{k}_{h}(x)=\max_{a\in \cA}\widehat{Q}_h^{k}(x, a)$.

\paragraph{Tuning regularizers $\tau, \lambda$ and the covering mesh $\epsilon$.}  Tunning the norm penalty weights for both $\widehat\Psi^{k}_h\widehat V^{k}_{h+1},\widehat\theta_k$ transition and reward estimates is crucial to achieving sub-linear regret in our algorithm. In addition to allowing for more granular bias-variance trade-off \citep{whitehouse2023sublinear}, adjusting $\lambda,\tau$ in our work enables a sharp bound on the log-covering number. Reward-regularizer $\tau$ directly influences the RKHS norm of $\|\widehat\theta_k\|_{\cH_k}$ and the covering numbers of the sample paths $\varepsilon_k$(see Lemma.(\ref{lemma:gp_cover_noise})), which in part determine how complex a function class we need to cover to achieve uniform bounds over all value functions. Transition-regularizer $\lambda$ further ensures that the covering number of the elliptic bonus $b^{k}_h(\cdot)$  is not too large. Additionally, unlike the bandit setting, our setting requires tuning the covering accuracy $\epsilon$. If $\epsilon$  is too small, the covering number(see eq.(\ref{eq:main_text_cover})) is large and hence the confidence width $\beta_{\text{trans }}(\delta, \epsilon)$ is too wide thus making the estimator $\widehat Q^k_{h}$ inaccurate. On the other hand, if $\epsilon$ is too big, the value function cover is too coarse, leading again to an inaccurate estimator and vacuous regret bounds.
We tune the regret with respect to these three parameters essentially requiring a solution to a piece-wise linear 3-dimensional optimization problem(see. eq(\ref{eq:help_eq_14})):
\begin{align}
   \tau= H^2\log(K)^{\beta_p}K^{\frac{2}{\beta_p+1}}\quad \lambda= \log(K)^{\beta_p}K^{\frac{2}{\beta_p+1}}\quad \epsilon=H^{2}\kappa_{\cZ}K^{-\frac{\beta_p-1}{2(\beta_p+1)}}
\end{align}

Finally, we design policies $\pi, \pi'$ as greedy policies with respect to the estimated state-action value function. In particular, at the start of $k^{\text{th}}$ episode, each policy draws an independent exploration noise vector $\varepsilon_k$(from the distribution described in eq.(\ref{eq:noise_definition})) and calculates its own $Q$-estimate as in eq.(\ref{eq:Q_estimate}).

\section{Performance Guarantees and Technical Challenges}\label{sec:per_gaurantees_tech_challenge}

In this section, we state the cumulative regret upper bound of our algorithm $\textsc{PROSTO}$ and further expand on the challenges unique to learning with preference feedback in the kernel setting.
\begin{theorem}\label{Thrm:main_theorem}
    Consider the algorithm described in Algorithm.(\ref{alg: algorithm}), under Assumptions(\ref{ass:eigen_assumption}), (\ref{eq:kernel-mdp}) and with trajectory-preference based feedback described in eq.(\ref{eq:pref-model}). By setting the error probability as $\delta\leq 0.25\Phi(-1)$  the cumulative regret defined in eq.(\ref{eq:regret}), can be bounded with probability at least $1-\delta$ as: 
    \begin{align}\label{eq: final_reg_bound_main} 
          \mathfrak{R}(K)\leq \widetilde{\cO}\left(\kappa_{\cZ}\max\left(H^{2.5}K^{1-\frac{(\beta_p-1)^2}{2\beta_p(\beta_p+1)}},H^{3}K^{1-\frac{\beta_p-1}{2(\beta_p+1)}}\right)\right)
      \end{align}
    where $\kappa_{\cZ}$ is the maximum inverse gradient defined in Sec.(\ref{sec:problem}) and $\Phi(\cdot)$ is the cumulative distribution function of the standard normal variable.
\end{theorem}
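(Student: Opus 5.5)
The proof follows the randomized trajectory-level optimism template of \citet{wu2023making,cassel2024near}, adapted to the kernel setting. We work on a good event $\mathcal{E}$ of probability at least $1-\delta$ on which three bounds hold simultaneously. First, a reward confidence bound $\|\widehat\theta_k-\theta\|_{\bfW_k}\leq \beta_{\text{reward}}(\delta)$, with $\beta_{\text{reward}}$ scaling as $\kappa_{\cZ}\sqrt{\gamma_K(\tau)+\log(1/\delta)}+\sqrt{\tau}\,\kappa_{\cZ}$; this is Lemma~\ref{Lemma:reward_confidence_bounds_app}, obtained from the self-concordance/first-order analysis of regularized logistic MLE with the gradient of $\sigma$ lower bounded by $1/\kappa_{\cZ}$. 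Second, a uniform transition confidence bound
\[
|(\widehat\Psi_h^k-P_h)V(z)|\leq \beta_{\text{trans}}(\delta,\epsilon)\,\|\phi(z)\|_{(\Sigma_h^k+\lambda\bfI)^{-1}}+\cO(\epsilon)
\]
for all $V$ in the value class $\cV$. Third, $\ell_\infty$ bounds on the GP noise (Lemma~\ref{lemma: bounding_ell_infty}), which justify the clipping level $\beta_{\text{clip}}$.

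The transition bound requires a log-covering number for $\cV$. Functions in $\cV$ are clipped maxima of $\widehat\theta_k+\varepsilon_k+(\text{KRR mean})+b_h^k$. We cover each part separately. The term $\widehat\theta_k$ lies in an RKHS ball of radius $\cO(\sqrt{k/\tau}\,H)$, since the regularization controls the norm of the minimizer. By Lemma~\ref{lemma:gp_cover_noise}, the noise $\varepsilon_k$ lies with high probability within $\epsilon$ of an enlarged RKHS ball. The KRR mean lies in a ball of radius $\cO(H\sqrt{k/\lambda})$. The bonus is covered through its operator parametrization. Under polynomial eigendecay, each piece contributes roughly $(R/\epsilon)^{2/(\beta_p-1)}$, which gives \eqref{eq:main_text_cover}, and therefore $\beta_{\text{trans}}\approx H\sqrt{\gamma_K(\lambda)+\log N(\epsilon)}+\sqrt{\lambda}H$.

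Next comes optimism with constant probability. Writing $\widehat V^k_1-V^\star_1$ via the simulation lemma along the trajectory of $\pi^\star$, the bonus $b^k_h$ dominates the transition error, so the remainder is $\sum_h\langle\widehat\theta_k+\varepsilon_k-\theta,\phi(z^\star_h)\rangle$ in expectation. This equals $\langle \widehat\theta_k-\theta+\varepsilon_k,\bar\phi^\star\rangle$ with $\bar\phi^\star=\mathbb{E}\sum_h\phi(z_h^\star)$. Given the history, $\langle\varepsilon_k,\bar\phi^\star\rangle$ is Gaussian with standard deviation $\beta_{\text{reward}}\|\bar\phi^\star\|_{\bfW_k^{-1}}$. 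It therefore exceeds the estimation error with probability at least $\Phi(-1)$, which is why the statement requires $\delta\leq0.25\Phi(-1)$. The clipping bias is also negligible on this event. The standard argument of \citet{cassel2024near} (comparing against an independent noise copy) then bounds regret by a constant multiple of $\mathbb{E}_{\varepsilon}$ of the estimated-minus-true value gap of the played policies. That gap splits into three terms: a reward term $\beta_{\text{reward}}\sum_k\|\bar\phi_k\|_{\bfW_k^{-1}}$ (with $\bar\phi_k$ the played trajectory-feature difference), a transition term $\sum_{k,h}\beta_{\text{trans}}\|\phi(z_h^k)\|_{(\Sigma+\lambda)^{-1}}$, and martingale terms.

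The reward term is handled by the elliptic potential lemma, after converting the expected trajectory features into realized features $\bar\phi_k$ at the cost of an Azuma martingale term. This gives $\beta_{\text{reward}}\sqrt{HK\gamma_K(\tau)}$ and the analogous bound for transitions, with $\gamma_K(\lambda)\lesssim (K/\lambda)^{1/\beta_p}\log$. Martingale and covering-bias terms contribute $H^2\sqrt K+HK\epsilon$. Substituting the stated $\tau,\lambda,\epsilon$ balances the exponents and yields the two branches of \eqref{eq: final_reg_bound_main}: the $K^{1-(\beta_p-1)/(2(\beta_p+1))}$ branch comes from the bias terms $\sqrt{\tau},\sqrt\lambda,\epsilon K$, and the other branch comes from the covering-number term. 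The main obstacle is the covering step. The GP noise is not in the RKHS, and the norm of $\widehat\theta_k$ grows like $\sqrt{k/\tau}$. The delicate point is to keep $\log N(\epsilon)$ polynomially smaller than $\lambda$ while $\epsilon K$ stays sublinear. I would first try van der Vaart--van Zanten concentration with radius $\sqrt{\log(1/\epsilon)}$ scaling, applied to the $\bfW_k^{-1}$-covariance GP, whose RKHS is contained in $\sqrt{1/\tau}$ times $\cH_k$.
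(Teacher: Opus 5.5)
Your optimism step is \emph{absolute}, and that is a genuine gap. You compare $\widehat V^k_1$ with $V^\star_1$ along $\pi^\star$ and use anti-concentration of $\langle\varepsilon_k,\bar\phi^\star\rangle$ for the single-trajectory feature $\bar\phi^\star=\mathbb{E}\sum_h\phi(z^\star_h)$. The price of that optimism is then paid through $\widehat V^{\pi}_1-V^{\pi}_1$. Its reward part, $\langle\widehat\theta_k+\varepsilon_k-\theta,\mathbb{E}_\pi\sum_h\phi(z^k_h)\rangle$, is again tested against a single-trajectory feature; doing the same for $\pi'$ and adding gives a sum of features, not a difference. So your claim that the reward term becomes $\beta_{\text{reward}}\sum_k\|\bar\phi_k\|_{\bfW_k^{-1}}$, with $\bar\phi_k$ a trajectory \emph{difference}, does not follow. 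It also cannot be repaired afterwards, because $\bfW_k$ is built only from differences. Concretely, constants lie in the Mat\'ern RKHS on $\cZ$ and $\langle 1,\phi(z)-\phi(z')\rangle_{\cH_k}=0$. Hence $u=1/\|1\|_{\cH_k}$ is an eigenvector of $\bfW_k$ with eigenvalue $\tau$ for every $k$, and in every episode
\[
\beta_{\text{reward}}\|\textstyle\sum_h\phi(z_h)\|_{\bfW_k^{-1}}\ \ge\ \frac{\beta_{\text{reward}}H}{\sqrt\tau\,\|1\|_{\cH_k}}\ \ge\ \frac{3\kappa_{\cZ}H^2}{\|1\|_{\cH_k}},
\]
so your bound is linear in $K$. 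This is not slack in the bound. By the representer theorem $\widehat\theta_k\perp u$, while $\langle\theta,u\rangle$ is arbitrary and $\langle\varepsilon_k,u\rangle$ has standard deviation $\beta_{\text{reward}}/\sqrt\tau$. This is the BTL non-identifiability of a constant reward shift, and it is exactly where preference feedback departs from the aggregate-feedback template you borrow.

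The missing idea is \emph{relative} optimism against the comparator, as in Lemma~\ref{lemma:optimism_lemma_app}.
\begin{itemize}
\item Define $\widetilde V^{\pi'}_1=\mathbb{E}_{\pi'}[\sum_h\phi(z_h)^\top(\widehat\theta_k+\varepsilon_k)]$, i.e.\ $\pi'$ evaluated under $\pi$'s perturbed reward and the true dynamics.
\item One shows $V^\star_1-V^{\pi'}_1\le\widehat V^{\pi}_1-\widetilde V^{\pi'}_1+2\epsilon H$ with probability at least $\Phi(-1)-2\delta$. This uses anti-concentration of $\langle\varepsilon_k,\alpha\rangle$ with $\alpha=\mathbb{E}_{\pi^\star}\sum_h\phi-\mathbb{E}_{\pi'}\sum_h\phi$, which is a fixed direction given the history precisely because $\pi'$ is driven by the independent copy $\varepsilon'_k$.
\item Regret then splits as $[V^\star_1-\widehat V^{\pi}_1+\widetilde V^{\pi'}_1-V^{\pi'}_1]+[\widehat V^{\pi}_1-V^{\pi}_1+V^{\pi'}_1-\widetilde V^{\pi'}_1]$. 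In both brackets the reward error only meets differences of expected trajectory features, which the potential for $\overline{k}$ can sum.
\end{itemize}

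Even with relative optimism, your plan misses a second step. The term $\langle\varepsilon_k,\bar\phi_k\rangle$ cannot be handled by fixed-direction Gaussian concentration, since $\pi$ is greedy in $\varepsilon_k$ and $\bar\phi_k$ therefore depends on it. The linear-MDP Cauchy--Schwarz fix also fails, because $\|\varepsilon_k\|_{\bfW_k}=\infty$ for a GP path. Summing a per-$z$ bound over $h$ (as used for clipping) gives $\sum_h\|\phi(z_h)\|_{\bfW_k^{-1}}$, not $\|\bar\phi_k\|_{\bfW_k^{-1}}$. The paper instead uses a Dudley/Borell--TIS bound that holds uniformly over all trajectory pairs in $\cZ^{2H}$ (Lemma~\ref{lemma:bounding_estimation_noise_app}), at the cost of the $\sqrt{dH\log(\cdot)}$ factor in $\beta_a$.

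Finally, cover misspecification enters $\beta_{\text{trans}}$ as an additive $2\epsilon\sqrt K$, not just a pointwise $\cO(\epsilon)$. Under the stated tuning $\epsilon\sqrt K\asymp\sqrt\lambda$, so this term feeds the $K^{1-\frac{(\beta_p-1)^2}{2\beta_p(\beta_p+1)}}$ branch; only $\epsilon KH$ produces the other one.
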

 A detailed proof of Theorem.(\ref{Thrm:main_theorem}) can be found in Sec.(\ref{app:A}) in the Appendix.\\
For the remainder of this section, we highlight the technical difficulties unique to kernel MDPs and explain how our contributions address these challenges; see
Sec.~\ref{sec:discuss_appendix} for additional discussion.
\paragraph{Covering bounds for estimators in kernel  based MDPs.}\label{sec:difficulty_kernel_rl} 
Extending preference-based RL to kernel MDPs introduces technical obstacles absent in both linear preference-based RL and kernel RL with action-level rewards. The central challenge is controlling the log-covering number of the value-function class, which directly determines the widths of confidence intervals and is essential for optimism-based analyses~\citep{jin2020provably}.

In linear MDPs, the log-covering number is bounded polynomially in the ambient dimension, does not grow with the number of episodes, and only exhibits logarithmic growth with respect to the covering parameter $\epsilon$~\citep{jin2020provably}. Consequently, the covering mesh $\epsilon$ can be set to $1/\text{poly}(K)$ while the log-covering number remains $\widetilde{\cO}(1)$, leaving the regret order unchanged. In kernel settings, the situation is fundamentally different. Recall that the parameter $\beta_p = 2\nu/d + 1$ captures the smoothness of the kernel relative to the input dimension; the log-covering number of the value function class $\cV$  satisfies:~\citep{Vakili23-KerRL, yang2020function}:
\begin{align}\label{eq:main_text_cover}
    &\cV=\left\{\textsc{clip}\left(\max_{a\in \cA}\phi^{\top}(z)\theta+\beta\sqrt{\phi^{\top}(z)\bfB \phi(z)}, C_{\text{clip}}\right), \|\theta\|_{\cH_k}\leq R_{\text{norm}}, \|\bfB\|_2\leq \lambda^{-1}, \beta\leq B_{\text{optim}}\right\}\nonumber\\
    &\log \cN_{\infty}(\cV, \epsilon)= \cO\left(\left(\frac{R^{2}_{\text{norm}}}{\epsilon^2}\right)^{\frac{1}{\beta_p-1}}+\left(\frac{B_{\text{optim}}^2}{\lambda\epsilon^2}\right)^{\frac{2}{\beta_p-1}}\right)
\end{align}
The critical difficulty is that the exponent $1/(\beta_p - 1)$ diverges as $\beta_p \to 1$ (i.e., when $d \gg 2\nu$). Since the estimator norm $R_{\text{norm}}$, the confidence width $B_{\text{optim}}$, and the covering mesh $\epsilon$ are all polynomial in $K$, this means the log-covering number itself can grow as an arbitrary-degree polynomial in $K$, rendering standard regret analyses vacuous. As emphasized in  \cite{maran2024local}, it is exactly the settings where $\beta_p\sim 1$ that make it difficult to achieve no-regret bounds, even for action-level feedback.\\ 
Kernel preference feedback adds an additional layer of difficulty to bounding the log covering numbers; exploration necessitates the addition of a linear exploration bonus in the form of a GP(see eq.(\ref{eq:noise_definition})). This exploration noise is not sufficiently smooth and lies outside of the $\nu$-Mat\'ern RKHS w.p. 1 \citep{kanagawa1807gaussian}, making the bounds in eq.(\ref{eq:main_text_cover}) inapplicable. While sampling-based exploration in RL is certainly not new \citep{chowdhury2017kernelized}, value iteration poses new challenges for Thompson sampling. While  GP-TS \citep{chowdhury2019online} can simply disregard the complexity of the estimated/sampled posterior and only concentrate on the $\ell_{\infty}$ error bound, value iteration requires regularity of the estimators, implicitly through the covering bounds.

\paragraph{Related approaches.}The current state of the art for action-level kernel MDPs \citep{maran2024local} considers space partitioning in which the Bellman operator admits a low-error, locally linear approximation and avoids value iteration over a nonparametric functional family altogether. While \cite{maran2024local} achieves an impressive no-regret performance for a wide family of smooth MDPs, domain partitioning is fundamentally unapplicable in trajectory-level feedback.  The stochasticity of transitions prevents advance determination of which trajectory (and thus which part of the space) will be realized, precluding localized Bellman operator estimation. In addition, standard elliptic exploration bonuses cannot be computed causally; they require knowing the whole trajectory before the episode begins. 
An additional research direction relevant to preference learning with kernels is trajectory-level feedback learning in linear MDPs. Causality of the exploration bonus in linear MDP's is adressed by sampling Gaussian noise from the trajectory covariance~\citep{shani2020optimistic, cassel2024near}. Exploration in linear MDPs requires exploration only along $d$ directions, hence the addition of the exploratory noise does not alter the order of the log covering bounds in \cite{jin2020provably}.


\paragraph{Our technical contributions.}
The key to obtaining tight confidence bounds and meaningful regret guarantees is to simultaneously control the RKHS norm of the deterministic component of the $Q$-function estimator and the functional complexity of the sampled paths of the GP perturbation $\varepsilon_k$.  Our contribution addresses this through two approaches: (i) for KLRR and KRR estimators, this is done through RKHS and operator norm bounds; (ii) for GP perturbation, which does not have a finite norm in the $\nu$-Mat\'ern RKHS, we instead use a high-probability Gaussian covering argument.
\textbf{Tuning the reward regularizer $\tau$.} 
The central idea for controlling the KLRR estimator's RKHS norm is careful tuning of the regularizer $\tau$. Increasing $\tau$ places more weight on the norm penalty, yielding smaller-norm estimators, but creates a bias-variance trade-off. Critically, $\tau$ also controls the covering bounds of the exploration process; Intuitively, as shown in \citet{whitehouse2023sublinear}, tuning $\tau$ enables tighter bounds on the confidence width $\beta_{\text{reward}}(\delta)$, making the estimator more confident and thus allowing us to reduce exploration variance. However, the relationship between $\beta_{\text{reward}}(\delta)$ and $\tau$ is nonlinear; increasing $\tau$ excessively can worsen the confidence width of the KLRR estimator.

\textbf{Covering bounds for the posterior GP $\varepsilon_k$.}
As shown in \cite{kanagawa1807gaussian}, $\varepsilon_k$ does not possess sufficient smoothness to be in $\nu$-Mat\'ern RKHS and instead is a member of a more coarse RKHS, $\nu'<\nu-d/2$ Mat\'ern reproducing Hilbert space. As a result, the usual covering bounds in \cite{yang2020function} are inapplicable, and moreover, for $\beta_p<2$, the sampled GP does not belong to any Mat\'ern RKHS, thus necessitating an entirely different approach. To this end, we use the result of \cite{van2008rates}(Theorem 2.1) linking the log-covering bounds of the GP(with covariance $k(\cdot, \cdot)$)  sample paths and the probability that the GP stays within a($\ell_\infty$) ball(small ball probability). Intuitively, while the GP sample paths lie outside of any $\nu$-Mat\'ern RKHS
ball, with high probability they have a neighbor inside of a (reasonably sized) RKHS ball that is only $\epsilon$, away in $\ell_{\infty}$. Hence $\ell_{\infty}$, $\epsilon$-thickened RKHS ball contains almost all GP sample paths and has small metric entropy. With machinery from \cite{van2008rates}, we modify their result for the posterior GP $\varepsilon_k$. The posterior GP $\varepsilon_k$ is \textit{less random} than its prior $\mathbf{GP}(0, k)$ and thus we expect that the small ball probability for $\varepsilon_k$ is at least that of $\mathbf{GP}(0,k)$. This proves to be the case, and in a satisfying way, large $\tau$ necessitates less exploration noise while large $\beta_{\text{reward}}$ increases the noise. Indeed the log-covering bounds for the sample paths of $\varepsilon_k$ can be bounded (with high probability) as
\begin{align*}
   &\varepsilon_k\in G_{k, \epsilon,\delta} \quad \text{w.p.}\quad 1-\delta\\
   &\log \mathcal N_\infty(\mathcal G_{k, \epsilon,\delta},\epsilon)
    \le\widetilde{\mathcal O}\left(\left(\frac{\beta_r}{\sqrt{\tau}\epsilon}
        \right)^{\frac{2}{(\beta_p-1)}}+\log(1/\delta)\right).
\end{align*}

For more precise treatment of the covering bounds, please see Sec.(\ref{Sec:GP_stuff}).


\textbf{Tuning the transition regularizer $
\lambda$ and the covering parameter $\epsilon$.}
The transition regularizer $\lambda$ governs a bias-variance tradeoff 
analogous to $\tau$. The elliptic exploration bonus is defined through a norm induced by a $\lambda \mathbf{Id}$-shifted covariance operator, hence altering $\lambda$ directly alters the local behavior of the elliptic bonus and hence the covering bounds. A small $\lambda$ 
can provide highly confident(transition) estimator but inflates the covering number of the bonus; 
a large $\lambda$ reduces the covering number at the cost of overly 
pessimistic confidence intervals.
The covering mesh $\epsilon$ introduces a tradeoff that is unique to the kernel regime. In linear MDPs, the log covering number depends only polylogarithmically on $1/\epsilon$, so $\epsilon$ can be tuned freely without polynomial cost. In kernel MDPs, however, eq.~\eqref{eq:main_text_cover} shows that this dependence is polynomial in $1/\epsilon$, and can be of 
arbitrarily high degree in hard kernel instances ($\beta_p\to 1$). 
The two sides of the $\epsilon$ tradeoff are as follows. We run kernel regression on the nearest neighbor of $\widehat{V}^{k}_{h+1}$ in the 
$\ell_\infty$ cover rather than on $\widehat{V}^{k}_{h+1}$ itself, which 
introduces a misspecification error in learning the next-state value 
function $\Psi_h\widehat{V}^{k}_{h+1}$. This causes $\beta_{\text{trans}}$ 
to scale as $\widetilde{\mathcal{O}}(\epsilon\sqrt{K})$,%
\footnote{See eq.~\eqref{eq:transition_conf_width} for details.} 
so a large $\epsilon$ leads to trivially wide confidence intervals. 
Conversely, a small $\epsilon$ controls misspecification but implies
large bounds on the covering number. We tune $\epsilon$ to balance these two effects.

\section{Conclusion}

We study preference feedback in the kernel MDP setting and design a sublinear regret policy.  The main novelty of our approach is the derivation of the new GP-specific covering bounds and the tuning of the norm regularizer and the covering mesh to control the covering number of the value function class and, hence, the confidence width of the $Q$-function estimator. \\
Whether the regret bound of $\textsc{PROSTO}$ can be made order-optimal remains an open question. Domain partitioning 
approaches~\citep{janz2020bandit,valko2013finite}, which currently provide the only known path to order-optimal regret in kernel MDPs and contextual bandits, are incompatible with trajectory-level feedback, so a fundamentally different technique would be required.

\bibliographystyle{abbrvnat}
\bibliography{references}
\newpage
\onecolumn
\section{Appendix A.}\label{app:A}


The proof is conceptually split into three parts: GP prerequisite lemmas, estimation and optimism. For the GP part, we bound the covering bounds of the GP perturbation noise $\varepsilon_k$ (Lemma.(\ref{Lemma: cover_noise})) and also provide uniform bounds on the value of the GP function at feature functions (Lemma.(\ref{lemma:unifom_bound_function})). These are later utilized in deriving Lemmas.(\ref{lemma:conf_bound_trans_lemma_app}, \ref{lemma:bounding_estimation_noise_app})

For the estimation part, Lemma.(\ref{Lemma:reward_confidence_bounds_app}) provides the confidence bounds for the reward function estimate $\widehat\theta_r$ and Lemmas.(\ref{lemma: bounding_ell_infty}) and (\ref{Lemma:RKHS_bound_val_func}) are crucial in providing a bound on the log-covering number of the state value function, and subsequently proving the confidence bounds of the transition operator in Lemma.(\ref{lemma:conf_bound_trans_lemma_app}). Finally, in Lemma.(\ref{lemma:bounding_estimation_noise_app}) we bound the effect of the GP exploration noise $\varepsilon_k$(see eq.(\ref{eq:noise_definition})) on the estimation of the $Q$ -function.\\
On the optimism side, we show constant probability($\approx \Phi(-1)$) "approximate optimism" in Lemma.(\ref{lemma:optimism_lemma_app}) and then extend this to $\approx 1-o(1)$ probability optimism in Lemma.(\ref{lemma:regre_decomp}). Finally, in the Theorem.(\ref{Thrm:main_theorem}), we present the final regret upper bound.\\

\subsection{Axillary lemmas on the posterior GP }\label{Sec:GP_stuff}

We first derive log covering bound for the exploratory GP process sampled in eq.(\ref{eq:noise_definition}), and initially we concetrate on the derivation for the "vanila" Mat\'ern GP, $w$. The idea is that while the sampled GP, $w \sim \mathbf{GP}(0, k(\cdot, \cdot))$ is not in RKHS of $\nu$-Ma\'tern kernel, it can be shown that with high probability that $\varepsilon_k$ is within an $\epsilon$ (in the $\ell_{\infty}$ sense) of a reasonably small ball in RKHS of $\nu$-Mat\'ern kernel. In other words , with high probability we will show that:
$$
w \in M\cdot\{f\in \cH_k| \|f\|_{\cH_k}\leq 1\}+ \epsilon\cdot \{f\in \ell_\infty |\|f\|_{\infty}<1\}
$$

Hence, even though $\epsilon_k \in M\cdot\{f\in \cH_k| \|f\|_{\cH_k}\leq 1\}$ w.p. 0, the log covering number of the GP sample paths is, up to a constant the same as that of $B_{\cH_k}(1)$. Exactly this is shown in \citep{van2008rates}(Theorem 2.1). Combining the said result with the small ball probability bounds in \cite{van2011information} (Lemma 3) we have the following result:

\begin{lemma}\citep{van2008rates, van2011information}\label{Lemma: cover_noise}
    Consider a Ma\'tern GP , $w\sim \mathbf{GP}(0,k(\cdot, \cdot))$ sampled over a  compact domain $\cZ$. There exists a measurable set $\cG_{\epsilon, \delta}\subset \ell_{\infty}(\cZ)$ so that, $w\in \cG_{\epsilon, \delta}$ with probability $1-\delta$ and:
    \begin{align*}
    \log \cN_{\infty}(\cG_{\epsilon, \delta}, \epsilon)\leq \cO
    \left(\log(1/\delta)+\epsilon^{-\frac{d}{\nu}}\right)
    \end{align*}
\end{lemma}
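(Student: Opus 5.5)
The plan follows the van der Vaart--van Zanten argument (Theorem 2.1 of \citep{van2008rates}). Write $\cB_k(1)$ for the unit ball of $\cH_k$ and $\mathbb{B}_\infty(1)$ for the unit ball of $\ell_\infty(\cZ)$. For $M>0$ and mesh $\epsilon$ define
\[
\cG_{\epsilon,\delta} := M\,\cB_k(1) + \epsilon\,\mathbb{B}_\infty(1),
\]
with $M$ chosen below.

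\textbf{Step 1 (probability bound).} Let $\varphi(\epsilon):=-\log \Pr(\|w\|_\infty<\epsilon)$ be the small-ball exponent. Borell's isoperimetric inequality gives
\[
\Pr\bigl(w\notin M\cB_k(1)+\epsilon\mathbb{B}_\infty(1)\bigr)\le 1-\Phi\bigl(\Phi^{-1}(e^{-\varphi(\epsilon)})+M\bigr).
\]
Taking $M = -2\Phi^{-1}(e^{-\varphi(\epsilon)}) + \Phi^{-1}(1-\delta)$ makes the right-hand side at most $\delta$. Using $-\Phi^{-1}(y)\le\sqrt{2\log(1/y)}$ for small $y$, it is enough to take
\[
M \asymp \sqrt{\varphi(\epsilon)}+\sqrt{\log(1/\delta)}.
\]

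\textbf{Step 2 (covering bound).} Any $\epsilon$-net of $M\cB_k(1)$ in $\ell_\infty$ becomes a $2\epsilon$-net of $\cG_{\epsilon,\delta}$ once the radius is enlarged. Hence
\[
\log\cN_\infty(\cG_{\epsilon,\delta},2\epsilon)\le \log\cN_\infty(M\cB_k(1),\epsilon)=\log\cN_\infty(\cB_k(1),\epsilon/M).
\]
In \citep{van2008rates} the plain unit-ball entropy is bounded by the small-ball exponent itself, via $\log\cN(\cB_k(1),\epsilon/M)\lesssim \varphi(\epsilon/2)$ whenever $M\gtrsim\sqrt{\varphi}$. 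That bound used the disjoint-translates/Anderson argument: $N$ disjoint $\epsilon$-balls centred in $M\cB_k(1)$ each carry mass at least $e^{-M^2/2}e^{-\varphi(\epsilon)}$ by Cameron--Martin. This argument needs the specific choice of $M$ from Step~1. The extra $\log(1/\delta)$ enters only through $M^2$, so the result is
\[
\log\cN_\infty(\cG_{\epsilon,\delta},2\epsilon)\lesssim \varphi(\epsilon)+\log(1/\delta).
\]
Rescaling $\epsilon$ only changes constants.

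\textbf{Step 3 (small-ball estimate).} The remaining input is $\varphi(\epsilon)=\cO(\epsilon^{-d/\nu})$ for the Mat\'ern process, which is \citep[Lemma 3]{van2011information}. An alternative route uses the Kuelbs--Li correspondence together with the entropy of the Sobolev ball: under Assumption~\ref{ass:eigen_assumption}, $\log\cN(\cB_k(1),\epsilon)\asymp \epsilon^{-2/(\beta_p-1)}=\epsilon^{-d/\nu}$, and this gives the same exponent. Substituting into Step~2 yields the claimed bound $\cO(\log(1/\delta)+\epsilon^{-d/\nu})$.

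\textbf{Main obstacle.} The delicate step is Step~2 with a $\delta$-dependent radius. The standard van der Vaart--van Zanten statement fixes $M$ in terms of $\varphi(\epsilon)$ only. I would redo the disjoint-balls counting with the enlarged $M$ and check that the extra term is additive in $\log(1/\delta)$, not multiplicative. If direct counting is awkward, the fallback is to bound $\log\cN(M\cB_k(1),\epsilon)$ by the RKHS-ball entropy $(M/\epsilon)^{2/(\beta_p-1)}$. This also gives a valid, though weaker, polynomial bound of the form $\bigl(\sqrt{\varphi(\epsilon)+\log(1/\delta)}/\epsilon\bigr)^{2/(\beta_p-1)}$, which is closer in form to the version stated in the main text.
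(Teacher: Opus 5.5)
Your proposal is correct and follows the same route as the paper, which simply invokes Theorem~2.1 of \citet{van2008rates} with the Mat\'ern small-ball bound of \citet[Lemma~3]{van2011information}. Your ``main obstacle'' is not actually an obstacle: the disjoint-balls count gives $\log\cN_\infty(M\cB_k(1),2\epsilon)\le M^2/2+\varphi(\epsilon)$ for every $M>0$, with no condition such as $M\gtrsim\sqrt{\varphi(\epsilon)}$, so taking $M\asymp\sqrt{\varphi(\epsilon)}+\sqrt{\log(1/\delta)}$ makes the $\log(1/\delta)$ term additive automatically and the fallback is unnecessary. The one thing to fix is the alternative route in Step~3, which conflates two exponents: the sup-norm entropy of the Mat\'ern RKHS ball is of order $\epsilon^{-d/(\nu+d/2)}=\epsilon^{-2/\beta_p}$, and Kuelbs--Li turns this into the small-ball exponent $2/(\beta_p-1)=d/\nu$ (essentially the standard derivation of the Mat\'ern small-ball bound), whereas the $\epsilon^{-2/(\beta_p-1)}$ estimate you quote is only a crude truncation upper bound, and feeding it into Kuelbs--Li gives the strictly worse exponent $2d/(2\nu-d)$, or nothing at all when $\beta_p\le 2$.
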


We next generalize Lemma.(\ref{Lemma: cover_noise}) to the posterior GP $\varepsilon_k\sim \mathbf{GP}(0, \mathrm{Cov}_{k}(\cdot, \cdot))$. The idea is that as we sample new trajectory pairs, the uncertainty in the point estimate provably drops and thus  $k_{\text{posterior}}\preceq k$ in the positive-definite sense. This essentially implies that $\varepsilon_k$ is "less random" than $w$.  The subtlety here is that $\varepsilon_k$ is inflated by $\beta_{\text{reward}}$ and compressed by the regularizer $\tau$. We thus expect the covering bound to be trade-off between these two quantities.

\begin{lemma}[Posterior GP covering bound]\label{lemma:gp_cover_noise}
For the posterior $\varepsilon_t$ GP sampled from eq.(\ref{eq:noise_definition}) there exists a measurable set $\cG_{k,\epsilon, \delta}$ so that with probability at least $1-\delta$, $\varepsilon_t \in \cG_{t,\epsilon, \delta},\forall t\leq K $ and :
\begin{align*}
    \log\cN_{\infty}(\cG_{k,\epsilon, \delta})\leq \cO\left(\left(\frac{\beta_r}{\sqrt{\tau}\epsilon}\right)^{\frac{d}{\nu}}+\log(K/\delta)\right)
\end{align*}
\end{lemma}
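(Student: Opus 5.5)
The plan is to reduce the posterior process $\varepsilon_t$ to a rescaled prior Mat\'ern GP, and then combine Lemma~\ref{Lemma: cover_noise} with a union bound over $t\leq K$.

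\textbf{Step 1: kernel domination.} Since $\bfW_t \succeq \tau\mathbf{Id}$, we have $\bfW_t^{-1}\preceq \tau^{-1}\mathbf{Id}$. Hence
\begin{align*}
\mathrm{Cov}_t(z_1,z_2)=\beta_r^2\langle \phi(z_1),\bfW_t^{-1}\phi(z_2)\rangle
\end{align*}
satisfies $\mathrm{Cov}_t\preceq (\beta_r^2/\tau)\,k$ in the positive-definite sense. Here we use $\langle\phi(z_1),\phi(z_2)\rangle_{\cH_k}=k(z_1,z_2)$, and positive-definiteness is checked on finite combinations $\sum_i c_i\phi(z_i)$. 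Because of this domination we can write $\sqrt{\tau}\,\varepsilon_t/\beta_r \overset{d}{=} w-\tilde w$ in law, where $w\sim\mathbf{GP}(0,k)$ and $\tilde w\sim \mathbf{GP}(0,k-\tau\mathrm{Cov}_t/\beta_r^2)$. The difference kernel is PSD, so $\tilde w$ exists. Rather than rely on this coupling, a cleaner route is to redo the argument of \cite{van2008rates} (Theorem 2.1) directly for $\varepsilon_t$. That argument needs two ingredients:
\begin{enumerate}
\item a small-ball probability lower bound $-\log P(\|\bar\varepsilon_t\|_\infty<\epsilon)\leq \phi_0(\epsilon)$ for $\bar\varepsilon_t:=\sqrt{\tau}\varepsilon_t/\beta_r$;
\item the unit ball $\bar{\cH}_t$ of the RKHS of $\bar\varepsilon_t$, together with Borell's inequality.
\end{enumerate}

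\textbf{Step 2: transfer both ingredients.} For the small ball, Anderson's inequality applies. If the covariance of $\bar\varepsilon_t$ is dominated by $k$, then $w\overset{d}{=}\bar\varepsilon_t+\xi$ with $\xi$ an independent centered Gaussian, so $P(\|\bar\varepsilon_t\|_\infty<\epsilon)\geq P(\|w\|_\infty<\epsilon)$. The small-ball exponent of $w$ is $\phi_0(\epsilon)=\cO(\epsilon^{-d/\nu})$ (\cite{van2011information}, Lemma 3). For the RKHS ball, covariance domination implies $\bar{\cH}_t\subseteq \cB_k(1)$ with $\|f\|_{k}\leq \|f\|_{\bar{\cH}_t}$, so the entropy of $\bar{\cH}_t$ is at most that of $\cB_k(1)$. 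Borell's inequality then yields the following: with $M=-2\Phi^{-1}(e^{-\phi_0(\epsilon)})+\Phi^{-1}(1-\delta')\lesssim \sqrt{\phi_0(\epsilon)}+\sqrt{\log(1/\delta')}$, we have
\begin{align*}
P\bigl(\bar\varepsilon_t\in M\bar{\cH}_t+\epsilon\,\mathbb{B}_\infty\bigr)\geq 1-\delta'.
\end{align*}
Enlarging the set, this also holds with $M\cB_k(1)+\epsilon\,\mathbb{B}_\infty$, which no longer depends on $t$'s data.

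\textbf{Step 3: covering and union bound.} An $\epsilon$-cover of $M\cB_k(1)$ in $\ell_\infty$ gives a $2\epsilon$-cover of the thickened set. As in \cite{van2008rates}, its log size is bounded by $\phi_0(\epsilon)+\log(1/\delta')$, up to constants, using $\log\cN(\epsilon,M\cB_k(1))\lesssim\phi_0(\epsilon/2)+M^2$ (Kuelbs--Li). We rescale back with $\cG_{t,\epsilon,\delta}=(\beta_r/\sqrt{\tau})\bigl(M\cB_k(1)+\epsilon'\mathbb{B}_\infty\bigr)$, where $\epsilon'=\sqrt{\tau}\epsilon/\beta_r$. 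This gives the bound $\cO\bigl((\beta_r/(\sqrt\tau\epsilon))^{d/\nu}+\log(1/\delta')\bigr)$. Taking $\delta'=\delta/K$ and a union bound over $t\leq K$ produces the $\log(K/\delta)$ term. If $\beta_r$ is random, a deterministic upper bound on it should be used so that the sets are fixed.

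\textbf{Main obstacle.} The main obstacle is Step 2, in particular justifying the Anderson domination and the RKHS inclusion rigorously for processes on $\ell_\infty(\cZ)$. Both rely on $\bfW_t$ being data-dependent. I would handle this by conditioning on the history up to round $t$, so that $\mathrm{Cov}_t$ is fixed. The final set is then data-independent because it only uses $\cB_k(1)$, and the conditional probability bound integrates out.
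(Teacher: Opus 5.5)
Your proposal is correct and follows essentially the same route as the paper. Like the paper, you establish the covariance domination $\mathrm{Cov}_t\preceq(\beta_r^2/\tau)\,k$ from $\bfW_t\succeq\tau\mathbf{Id}$, use Anderson's inequality to transfer the Mat\'ern small-ball exponent $\cO(\epsilon^{-d/\nu})$ of \cite{van2011information} to the rescaled posterior, and finish with Theorem 2.1 of \cite{van2008rates}, which you unpack into Borell's inequality plus the Kuelbs--Li entropy bound. Your extra steps are details the paper leaves implicit, and they make the resulting set safe to use in the later union bound over the value-function cover:
\begin{itemize}
\item enlarging the posterior RKHS ball to the data-independent $M\cB_k(1)$ via $\bar{\cH}_t\subseteq\cB_k(1)$;
\item replacing $\beta_r$ by a deterministic upper bound;
\item the explicit union bound with $\delta'=\delta/K$, which produces the $\log(K/\delta)$ term.
\end{itemize}
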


\begin{proof}
The idea is to use the domination $k_{\text{posterior}}\preceq k$ in order to utilize Anderson's set inequality \cite{anderson1955integral}. The posterior  covariance can be written as: 
\begin{align*}
    \mathrm{Cov}(x,y)&=\beta^2_{r}\phi(x)^{\top}\left(\sum_{i=1}^{t-1}\overline{\phi}_i\otimes \overline{\phi}_i+\tau\right)^{-1}\phi(y)=\\
    &=\frac{\beta^2_r}{\tau}\underbrace{\left(k(x,y)- \overline{k}_t(x)^{\top}(\overline{\bfK}+\tau\bfI)^{-1}\overline{k}_{t}(y)\right)}_{k_{\text{post}}(x,y)}
\end{align*}

Where $\overline{k}_t(\cdot)=\begin{bmatrix}\sum_{h=1}^{H}k(z,z^{1}_h)-k(z,z'^{1}_h)\\
         \sum_{h=1}^{H}k(\cdot,z^{2}_h)-k(\cdot,z'^{2}_h)\\
         \dots\\
         \sum_{h=1}^{H}k(\cdot,z^{1}_h)-k(\cdot,z'^{1}_h)
         \end{bmatrix}$. We now have $k_{\text{post}}\preceq k$  from positive-definiteness of $(\overline{\bfK}_{t}+\tau\bfI)$ of the trajectory difference Gramian. We hence indeed have $\mathrm{Cov}(\cdot, \cdot)\preceq k(\cdot, \cdot)\frac{\beta^2_r}{\tau}$. We can now use Anderson's inequality to derive small-ball probabilities for the posterior GP $\varepsilon_k$ and then use \cite{van2008rates} to finish the argument. Indeed, with GP measure of $\varepsilon_t$ being conditioned on the sigma algebra $\cF_{t-1}$ spanned by the previous episodes, we have:
         \begin{align*}
             \mathbb P\!\left(
        \|\varepsilon_t\|_\infty\le r \right) \ge \mathbb P\!\left( \frac{\beta_r}{\sqrt{\tau}}\|w\|_\infty\le r \right).
         \end{align*}
        Thus by using small ball probability bound for the $\nu$-Mat\'ern GP from \cite{van2011information} we have:
        \begin{align*}
            -\log\mathbb P\!\left(
        \|\varepsilon_t\|_\infty\le \epsilon \right)\leq \cO\left(\left(\frac{\beta_r}{\sqrt{\tau}\epsilon}\right)^{\frac{d}{\nu}}\right)
        \end{align*}
        Direct application of Theorem 2.1 in \cite{van2008rates} now gives the desired bound.
\end{proof}

Next, we derive general uniform bounds on the feature-GP functional $\langle \varepsilon_k, \phi(z)\rangle \forall \cZ$. The idea is to use the H\"older continuity $(\alpha=\nu)$ of the Mat\'ern feature function along with \cite{dudley1967sizes} integral to bound the supremum of the GP in terms of the log-covering number.

\begin{lemma}[Uniform covariance normalized bounds]
\label{lemma:unifom_bound_function}
 Conditionally on the $\sigma$-algebra spanned by the $k-1$ episodes \(\mathcal F_{k-1}\), we can parametrically write the posterior GP as:
\[
    \langle \phi(z),\varepsilon_k\rangle
    =
    \beta_r G(W_k^{-1/2}\phi(z)),
\]
\footnote{Here \(\langle \phi(z),\varepsilon_k\rangle\) should be understood as a Gaussian
linear functional, not necessarily as the inner product with an
\(\mathcal H_k\)-valued random element.}where \(G\) is an isonormal Gaussian process on \(\mathcal H_k\). Then with
probability at least \(1-\delta\), simultaneously for all \(z\in\mathcal Z\),
\[
    |\langle \phi(z),\varepsilon_k\rangle|
    \le
    \beta_r\|\phi(z)\|_{W_k^{-1}}
    \left[
        C
        \sqrt{
            \frac d\alpha
            \log\!\left(
                C L_\phi
            \sqrt{\frac{kH^2+\lambda}{\lambda}}\right)}
        +
        \sqrt{2\log(1/\delta)}
    \right],
\]
where \(C>0\) is a universal constant.
\end{lemma}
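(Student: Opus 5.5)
The plan is to normalize the process and apply Dudley's entropy integral to the self-normalized Gaussian field, followed by Borell--TIS concentration. Conditionally on $\mathcal F_{k-1}$, $W_k$ is fixed, so $X(z):=\langle \phi(z),\varepsilon_k\rangle/\beta_r = G(W_k^{-1/2}\phi(z))$ is a centered Gaussian process indexed by $\mathcal Z$. Its variance is $\|W_k^{-1/2}\phi(z)\|_{\mathcal H_k}^2=\|\phi(z)\|^2_{W_k^{-1}}$, matching the covariance in eq.(\ref{eq:noise_definition}). Since the bound is multiplicative in $\|\phi(z)\|_{W_k^{-1}}$, I will work with the normalized process
\[
Y(z):=\frac{X(z)}{\|\phi(z)\|_{W_k^{-1}}}=G\!\left(u(z)\right),\qquad u(z):=\frac{W_k^{-1/2}\phi(z)}{\|W_k^{-1/2}\phi(z)\|_{\mathcal H_k}},
\]
which has unit variance everywhere. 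The claim then reduces to showing that $\sup_z|Y(z)|$ is at most the bracketed quantity with probability $1-\delta$.

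First I bound the canonical metric. For the isonormal process, $d_Y(z,z')=\|u(z)-u(z')\|_{\mathcal H_k}\le 2\|W_k^{-1/2}(\phi(z)-\phi(z'))\|/\|W_k^{-1/2}\phi(z)\|$, using the standard inequality $\|a/\|a\|-b/\|b\|\|\le 2\|a-b\|/\|a\|$. For the numerator I use $\|W_k^{-1/2}\|_{op}\le \tau^{-1/2}$ together with the H\"older continuity of the Mat\'ern feature map, $\|\phi(z)-\phi(z')\|_{\mathcal H_k}\le L_\phi\|z-z'\|^{\alpha}$. For the denominator I need a lower bound, and I get it from $\|W_k\|_{op}\le \sum_i\|\overline\phi_i\|^2+\tau\le 4kH^2+\tau$, since $\|\overline\phi_i\|\le 2H$. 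This gives $\|W_k^{-1/2}\phi(z)\|\ge \|\phi(z)\|/\sqrt{4kH^2+\tau}$. Assuming $\|\phi(z)\|$ is bounded below (for Mat\'ern kernels $k(z,z)$ is constant), the result is $d_Y(z,z')\le C L_\phi\sqrt{(kH^2+\tau)/\tau}\,\|z-z'\|^\alpha$, where $\tau$ plays the role of the $\lambda$ in the statement.

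Next I apply Dudley's integral. Because $\mathcal Z$ lies in the unit cube of $\mathbb R^d$, a Euclidean $\eta$-grid has at most $(C/\eta)^d$ points. By the H\"older bound, this gives $\log N(\mathcal Z,d_Y,r)\le \frac d\alpha\log(C L_\phi\sqrt{(kH^2+\tau)/\tau}/r)$. The diameter is at most $2$, and integrating the square root of this logarithmic entropy from $0$ to $2$ yields $\mathbb E\sup_z Y(z)\le C\sqrt{\frac d\alpha\log(CL_\phi\sqrt{(kH^2+\tau)/\tau})}$. The same bound holds for $\sup|Y|$ after symmetrizing and fixing a base point. Finally, Borell--TIS with $\sup_z\mathrm{Var}(Y(z))=1$ gives $\sup|Y|\le \mathbb E\sup|Y|+\sqrt{2\log(1/\delta)}$ with probability $1-\delta$. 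Multiplying back by $\beta_r\|\phi(z)\|_{W_k^{-1}}$ gives the statement.

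The main obstacle is the normalization step, specifically the lower bound on $\|W_k^{-1/2}\phi(z)\|$ and the continuity of $u$. One could instead apply Dudley to $X$ directly, but then the bound would carry $\sup_z\|\phi(z)\|_{W_k^{-1}}$ instead of the pointwise factor. The normalized route avoids that, but it requires the operator-norm bound on $W_k$ and $\inf_z k(z,z)>0$. There is also a measurability and separability issue, since $G$ is defined only as a linear functional. I would handle it by working with a separable version, which exists because $z\mapsto u(z)$ is continuous.
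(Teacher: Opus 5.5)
Your proposal is correct and follows the paper's proof essentially step for step: normalize to the unit-variance process $Y_z=G(u_z)$, bound the canonical metric $\|u_z-u_{z'}\|_{\mathcal H_k}$ using H\"older continuity of $\phi$ and two-sided operator bounds on $W_k$, cover $\mathcal Z\subset[0,1]^d$, apply Dudley's entropy integral, and conclude with Borell--TIS (the paper handles $\sup_z|Y_z|$ through the signed process $sY_z$, $s\in\{\pm1\}$). Your version is slightly more careful than the paper's, since it uses the actual regularizer in $W_k\succeq\tau\,\mathbf{Id}$, and it states explicitly the assumption $\inf_z k(z,z)>0$ that the paper's metric bound uses silently; your factor $\sqrt{(kH^2+\tau)/\tau}$ implies the stated $\sqrt{(kH^2+\lambda)/\lambda}$ whenever $\tau\ge\lambda$, as under the paper's tuning $\tau=H^2\lambda$.
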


\begin{proof}
All probability measures here are assumed to be conditioned on \(\mathcal F_{k-1}\). Define
\[
    Y_z:=
    \frac{\langle \phi(z),\varepsilon_k\rangle}{\beta_r\|\phi(z)\|_{W_k^{-1}}}.
\]
 $Y=(Y_z)_{z\in\mathcal Z}$ is a centered unit-variance Gaussian process. Its canonical metric is
\[
    d_Y(z,z')
    :=
    \left(\mathbb E|Y_z-Y_{z'}|^2\right)^{1/2}
    =
    \|u_z-u_{z'}\|_{\mathcal H_k}.
\]
where, $ u_z :=
    \frac{W_k^{-1/2}\phi(z)}
    {\|W_k^{-1/2}\phi(z)\|_{\mathcal H_k}}$. From Mat\'ern feature H\"older continuity  and $\lambda\mathbf{Id}\preceq\bfW_k\preceq (kH^2+ \tau) \mathbf{Id}$ we have:
\[
    d_Y(z,z')
    =
    \|u_z-u_{z'}\|
    \le
    2L_\phi \sqrt{\frac{kH^2+\lambda}{\lambda}}
    \|z-z'\|_2^\alpha .
\]

where $L_{\phi}$ is the feature H\"older constant. Denote as $C(k,\lambda)= \sqrt{\frac{kH^2+\lambda}{\lambda}}$ Since \(\mathcal Z\subset[0,1]^d\),
\[
    N(\mathcal Z,d_Y,\eta)
    \le
    \left(
        \frac{\left(C_{k,\lambda}L_\phi\right)^{1/\alpha}}{\eta^{1/\alpha}}
    \right)^d
\]

By Dudley's entropy integral for centered Gaussian processes,
\[
    \mathbb E\sup_{z\in\mathcal Z}|Y_z|
    \le
    C
    \int_0^2
    \sqrt{
        \log\bigl(2N(\mathcal Z,d_Y,c\eta)\bigr)
    }
    \,d\eta .
\]
The elementary bound $\int_0^2 \sqrt{a+b\log(L_{\phi}/\eta)}\,d\eta\le C\sqrt{a+b\log(CC_{k, \lambda}L_{\phi})}$
gives
\[
    \mathbb E\sup_{z\in\mathcal Z}|Y_z|
    \le
    C\sqrt{
        \frac d\alpha
        \log(CC_{k,\lambda}L_{\phi})
    }.
\]

Finally, apply the Borell--TIS \citep{adler2007random} inequality to the signed process
\[
    \widetilde Y_{(z,s)}:=sY_z,
    \qquad
    (z,s)\in\mathcal Z\times\{-1,+1\}.
\]
Since \(\sup_{z,s}\widetilde Y_{(z,s)}=\sup_z|Y_z|\) and $\sup_{z,s}\mathrm{Var}(\widetilde Y_{(z,s)})=1$

Borell--TIS gives, with probability at least \(1-\delta\),
\[
    \sup_{z\in\mathcal Z}|Y_z|
    \le
    \mathbb E\sup_{z\in\mathcal Z}|Y_z|
    +
    \sqrt{2\log(1/\delta)}.
\]
Combining the last two equations and multiplying by
\(\beta_r\|\phi(z)\|_{W_k^{-1}}\) gives the result.
\end{proof}

\subsection{Main Proof}

\begin{lemma}\cite{pmlr-v119-faury20a,metelli2025generalized}(eq.(9, 11))\label{Lemma:reward_confidence_bounds_app}
    Consider a kernel logistic bandit setting where environment feedback is sequentially generated as $y_t\sim \mathrm{Bernoulli}(\sigma(\overline{\phi}^{\top}_t\theta))$. For the maximum likelihood estimator $\widehat\theta_t$ in eq.(\ref{eq:reward_theta_calc}) and the reward-trajectory difference data-set  $\{\overline{\phi}_i, y_i\}_{i=1}^{t-1}$ the following confidence bound holds with probability at least $1-\delta$:
    \begin{align}\label{eq:conf_bounds_reward_app}
        \Gamma_1:\;\|\theta-\widehat\theta_t\|_{\bfW_t}\leq \underbrace{3H\kappa_{\cZ}\left(2\sqrt{2\log\left(\frac{1}{\delta}\right)+\log\det\left(\tau^{-1}\bfW_t+\mathbf{Id}\right)}+\sqrt{\tau}\right)}_{\beta_r(\delta)}
    \end{align}
    , where $\bfW_t$ is the covariance operator induced by the trajectory-difference features(see sec.(\ref{sec:app_B}) for details).
\end{lemma}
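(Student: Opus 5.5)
This is the kernelized version of the generalized-linear / logistic confidence set of \citet{pmlr-v119-faury20a}, transported to the trajectory-difference features $\overline{\phi}_i$. The plan is to redo the Faury et al.\ self-concordance argument in $\cH_k$ and check three things: the argument is dimension-free, the log-determinant term is well defined, and every appearance of $H$ comes from $\|\overline{\phi}_i\|_{\cH_k}\le 2H$.

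\textbf{Step 1: first-order condition.} Write $g_t(\theta)=\sum_{i<t}\sigma(\overline{\phi}_i^\top\theta)\overline{\phi}_i+\tau\theta$. The first-order optimality condition of eq.(\ref{eq:reward_theta_calc}) gives
\[
g_t(\widehat\theta_t)=\sum_{i<t}y_i\overline{\phi}_i .
\]
Since $y_i=\sigma(\overline{\phi}_i^\top\theta)+\eta_i$, we get
\[
g_t(\widehat\theta_t)-g_t(\theta)=S_t-\tau\theta, \qquad S_t=\sum_{i<t}\eta_i\overline{\phi}_i .
\]
The noise $\eta_i$ is conditionally centered and $1/2$-sub-Gaussian with respect to the filtration generated by past episodes. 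Because $\overline{\phi}_i$ is determined before $y_i$ is revealed, $S_t$ is a vector-valued martingale in $\cH_k$.

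\textbf{Step 2: mean-value linearization.} By the mean-value theorem, $g_t(\widehat\theta_t)-g_t(\theta)=\bfG_t(\widehat\theta_t-\theta)$, where
\[
\bfG_t=\sum_{i<t}\alpha_i\,\overline{\phi}_i\otimes\overline{\phi}_i+\tau\mathbf{Id}, \qquad \alpha_i=\int_0^1\sigma'\big(\overline{\phi}_i^\top(v\theta+(1-v)\widehat\theta_t)\big)\,dv .
\]
If all arguments of $\sigma'$ stay in $[-H,H]$, then $\alpha_i\ge\kappa_{\cZ}^{-1}$. This would give $\bfG_t\succeq\kappa_{\cZ}^{-1}\bfW_t$ directly, and hence
\[
\|\theta-\widehat\theta_t\|_{\bfW_t}\le\sqrt{\kappa_{\cZ}}\,\|S_t-\tau\theta\|_{\bfG_t^{-1}}\le \kappa_{\cZ}\,\|S_t-\tau\theta\|_{\bfW_t^{-1}} .
\]

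This step is the main obstacle, because $\widehat\theta_t$ is not projected and its norm is a priori uncontrolled. I would handle it with generalized self-concordance, $|\sigma''|\le\sigma'$, as in \citet{pmlr-v119-faury20a}. It gives
\[
\alpha_i\ge\frac{\sigma'(\overline{\phi}_i^\top\theta)}{1+|\overline{\phi}_i^\top(\theta-\widehat\theta_t)|},
\]
and the denominator is controlled by $\|\overline{\phi}_i\|_{\cH_k}\le 2H$ together with a bound on $\|\widehat\theta_t-\theta\|_{\cH_k}$. That bound comes from comparing the regularized objective at $\widehat\theta_t$ and at $\theta$, and scales like $\sqrt{t/\tau}$ in the worst case. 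The resulting factor is absorbed into the constant $3H$ (this is also where the $H$ multiplying the width arises). If this absorption is too lossy, the fallback is the projection step of Faury et al.\ onto $\{\|\theta\|_{\cH_k}\le 1\}$, which only costs constants.

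\textbf{Step 3: martingale bound.} Split
\[
\|S_t-\tau\theta\|_{\bfW_t^{-1}}\le\|S_t\|_{\bfW_t^{-1}}+\sqrt{\tau}\,\|\theta\|_{\cH_k},
\]
using $\bfW_t\succeq\tau\mathbf{Id}$ and $\|\theta\|_{\cH_k}\le1$; this produces the $\sqrt{\tau}$ term in $\beta_r(\delta)$. For $\|S_t\|_{\bfW_t^{-1}}$ I would invoke the infinite-dimensional self-normalized inequality of \citet{chowdhury2017kernelized}, via the method of mixtures with a Gaussian prior of covariance $\tau^{-1}$, after rescaling features by $1/(2H)$. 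This gives, with probability $1-\delta$ and uniformly in $t$,
\[
\|S_t\|_{\bfW_t^{-1}}\le 2\sqrt{2\log(1/\delta)+\log\det(\tau^{-1}\bfW_t+\mathbf{Id})} .
\]
The log-determinant is finite because it equals the determinant of the $(t-1)\times(t-1)$ Gram matrix $\mathbf{I}+\tau^{-1}\overline{\bfK}_t$ of the trajectory-difference kernel (Sylvester's identity). Combining Steps 2 and 3 yields eq.(\ref{eq:conf_bounds_reward_app}) with the stated constants.
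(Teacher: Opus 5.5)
There is a genuine gap, and it sits exactly at the step you flag in Step~2. For context, the paper does not carry out this step itself. It reduces the lemma to eqs.~(9) and~(11) of \citet{metelli2025generalized}, applied to the trajectory-difference kernel $\overline{k}$. In the proof of Lemma~\ref{Lemma:RKHS_bound_val_func} it imports the curvature comparison $\bfG_k\succeq\frac{3}{\kappa_{\cZ}}\bfW_k$ from the same reference. A self-contained proof therefore has to supply precisely this comparison.

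Write $D:=\|\theta-\widehat\theta_t\|_{\cH_k}$. Self-concordance and $\|\overline{\phi}_i\|_{\cH_k}\le 2H$ give $\alpha_i\ge[\kappa_{\cZ}(1+2HD)]^{-1}$, and hence $\|\theta-\widehat\theta_t\|_{\bfW_t}\le\kappa_{\cZ}(1+2HD)\|S_t-\tau\theta\|_{\bfW_t^{-1}}$. The multiplier $1+2HD$ is $O(H)$ only if $D=O(1)$, and your bound on $D$ does not deliver that. Comparing the objective at $\widehat\theta_t$ with its value at $0$ only gives $\|\widehat\theta_t\|_{\cH_k}\le\sqrt{(t-1)\log 2/\tau}$. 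So the multiplier is of order $H(1+\sqrt{t/\tau})$, and against the $\sqrt{\tau}$ bias term alone it already produces a width of order $\kappa_{\cZ}H\sqrt{t}$. This cannot be absorbed into $3H\kappa_{\cZ}$: for fixed $\tau$ it grows with $t$, and under the paper's tuning $\tau\asymp K^{2/(\beta_p+1)}\ll K$ it grows polynomially in $K$.

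The fallback does not close the gap either. Projecting replaces the estimator, whereas the lemma, the algorithm, and Lemma~\ref{Lemma:RKHS_bound_val_func} all use the unprojected minimizer of eq.~(\ref{eq:reward_theta_calc}). Moreover, the projection in \citet{pmlr-v119-faury20a} is a nonconvex weighted projection, not a Euclidean projection onto the unit ball.

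What is missing is a localization step that bounds $D$ by a constant from the estimating equation itself, rather than from the crude objective comparison. Since $\bfG_t\succeq\tau\mathbf{Id}$ and $\bfG_t\succeq\bfW_t/\bigl(\kappa_{\cZ}(1+2HD)\bigr)$, you get
\[
D\le\tau^{-1/2}\|S_t-\tau\theta\|_{\bfG_t^{-1}}\le 1+\sqrt{\kappa_{\cZ}(1+2HD)/\tau}\;\|S_t\|_{\bfW_t^{-1}} .
\]
This implicit inequality forces $D\le 2$ once $\tau\ge 8\kappa_{\cZ}H\|S_t\|^2_{\bfW_t^{-1}}$, that is, once $\tau\gtrsim\kappa_{\cZ}H\bigl(\log(1/\delta)+\overline{\gamma}_t(\tau)\bigr)$ on the event of Step~3. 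The multiplier is then at most $1+4H\le 5H$, and you recover the stated form up to numerical constants. This is also a natural source of the factor $H$, since regularization shrinkage alone can make $D$ of order one.

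However, this argument needs a lower bound on $\tau$ that is not among the lemma's hypotheses. As written, Step~2 does not go through for general $\tau$.

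A minor point: your Sylvester-identity remark concerns $\log\det(\mathbf{Id}+\tau^{-1}\sum_i\overline{\phi}_i\otimes\overline{\phi}_i)$. Read literally, the statement's $\log\det(\tau^{-1}\bfW_t+\mathbf{Id})$ is infinite in infinite dimension, because $\bfW_t$ already contains $\tau\mathbf{Id}$. Your reading is the sensible one.
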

\begin{proof}
    The bound is a special case of a sharper confidence bound(wrt $\kappa_{\cZ}$) in \cite{metelli2025generalized} and follows immediately from eq. (9,11) applied to the trajectory difference kernel defined in Sec.(\ref{sec:app_B}).
\end{proof}

\begin{lemma}($\ell_{\infty}$ \label{lemma: bounding_ell_infty}bound)
   For the exploration process $\varepsilon_k$ (eq.(\ref{eq:noise_definition})), and the reward vector estimate $\widehat \theta_k$(eq.(\ref{eq:reward_theta_calc})), the following inequality  holds with probability at least $1-\delta$ we have:
    \begin{align*}
        \Gamma_2: \; \max_{z\in \cZ}\left|\phi(z)^{\top}(\widehat\theta_k+\Psi_h \widehat V^{k}_{h+1}+\varepsilon_k)\right|
        \leq (H-h+1)\beta_{\textsc{clip}}(\delta)
    \end{align*}
    where $\beta_{\textsc{clip}}(\delta)=\left(3+ \frac{3\beta_r(\delta)}{\sqrt{\tau}} \sqrt{\log\left(\frac{2}{\delta}\right)+\frac{2d}{\min(\nu,1)}\log(K)}\right)$
\end{lemma}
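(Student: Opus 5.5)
Our plan is to bound the three pieces $\phi(z)^{\top}\widehat\theta_k$, $\phi(z)^{\top}\Psi_h\widehat V^{k}_{h+1}$ and $\phi(z)^{\top}\varepsilon_k$ separately, uniformly in $z\in\cZ$, and then combine them with the triangle inequality. The factor $(H-h+1)$ should come from a backward induction over $h$ that uses the clipping in eq.(\ref{eq:Q_estimate}).

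\textbf{Step 1: the reward estimator.} Since $\|\phi(z)\|_{\cH_k}\le 1$, we have $|\phi(z)^{\top}\widehat\theta_k|\le\|\widehat\theta_k\|_{\cH_k}$. We bound this norm in one of two ways.
\begin{itemize}
\item[(a)] Compare the regularized objective in eq.(\ref{eq:reward_theta_calc}) at $\widehat\theta_k$ and at $0$. This gives $\tau\|\widehat\theta_k\|^2\le (k-1)\log 2$, which is small under the choice $\tau\asymp K^{2/(\beta_p+1)}$, though not by itself $O(1)$.
\item[(b)] Use the confidence bound $\Gamma_1$ of Lemma~\ref{Lemma:reward_confidence_bounds_app}. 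Since $\bfW_k\succeq\tau\mathbf{Id}$, it gives $\|\widehat\theta_k-\theta\|_{\cH_k}\le\beta_r/\sqrt{\tau}$, and since $\|\theta\|_{\cH_k}\le 1$, $|\phi(z)^{\top}\widehat\theta_k|\le 1+\beta_r/\sqrt{\tau}$.
\end{itemize}
We will use route (b), because it matches the form of $\beta_{\textsc{clip}}$.

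\textbf{Step 2: the noise term.} We apply Lemma~\ref{lemma:unifom_bound_function} with failure probability $\delta/2$. This bounds $|\langle\phi(z),\varepsilon_k\rangle|$ by $\beta_r\|\phi(z)\|_{\bfW_k^{-1}}\,[\,C\sqrt{(d/\alpha)\log(\cdots)}+\sqrt{2\log(2/\delta)}\,]$. We then use $\|\phi(z)\|_{\bfW_k^{-1}}\le 1/\sqrt{\tau}$ and the H\"older exponent $\alpha=\min(\nu,1)$. Since $kH^2/\lambda$ is polynomial in $K$, the logarithmic factor is $O(\log K)$, and we obtain a bound of the form $\frac{\beta_r}{\sqrt{\tau}}\sqrt{\log(2/\delta)+\frac{2d}{\min(\nu,1)}\log K}$ up to constants. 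Absorbing the constants gives the factor $3$ in $\beta_{\textsc{clip}}$. If the statement is meant uniformly over $k\le K$, a union bound over episodes only adds another $\log K$ inside the square root.

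\textbf{Step 3: the transition term.} We argue by backward induction on $h$, starting from $\widehat V^{k}_{H+1}=0$. By the clipping in eq.(\ref{eq:Q_estimate}), $\|\widehat V^{k}_{h+1}\|_\infty\le (H-h)\beta_{\textsc{clip}}$. The kernel MDP assumption writes $P_h(\cdot\mid z)=\langle\phi(z),\Psi_h(\cdot)\rangle$, and $P_h(\cdot\mid z)$ is a probability measure. Hence $|\phi(z)^{\top}\Psi_h\widehat V^{k}_{h+1}|=|[P_h\widehat V^{k}_{h+1}](z)|\le\|\widehat V^{k}_{h+1}\|_\infty\le (H-h)\beta_{\textsc{clip}}$.

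\textbf{Step 4: combining.} Adding the three bounds gives
\[
1+\frac{\beta_r}{\sqrt{\tau}}+\frac{3\beta_r}{\sqrt{\tau}}\sqrt{\cdots}+(H-h)\beta_{\textsc{clip}}\le (H-h+1)\beta_{\textsc{clip}},
\]
because $\beta_{\textsc{clip}}\ge 3+\frac{3\beta_r}{\sqrt\tau}\sqrt{\cdots}$ dominates the per-step contribution. The events of Steps 1 and 2 each fail with probability at most $\delta/2$, so a union bound gives total failure probability at most $\delta$.

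\textbf{Main obstacle.} The hardest step is Step 2. Applying Lemma~\ref{lemma:unifom_bound_function} requires controlling the canonical-metric covering of $\cZ$ through the H\"older continuity of the normalized features $W_k^{-1/2}\phi(z)/\|W_k^{-1/2}\phi(z)\|$. The normalization can amplify differences by a factor of order $\sqrt{\|\bfW_k\|/\tau}$, and we must check that this factor stays polynomial in $K$, so that it contributes only $\log K$. We would first try the crude operator-norm sandwich $\tau\mathbf{Id}\preceq\bfW_k\preceq(4kH^2+\tau)\mathbf{Id}$, which suffices for this purpose.
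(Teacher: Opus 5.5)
Your decomposition is the paper's own. You bound $|\phi(z)^{\top}\theta|\le 1$, and the estimation error by $\|\phi(z)\|_{\bfW_k^{-1}}\|\theta-\widehat\theta_k\|_{\bfW_k}\le\beta_r/\sqrt{\tau}$ using $\Gamma_1$ and $\bfW_k\succeq\tau\mathbf{Id}$. The transition term is at most $(H-h)\beta_{\textsc{clip}}$ because of the clipping together with $P_h(\cdot\mid z)$ being a probability measure; the clipping is deterministic, so no induction is actually needed. The noise term comes from Lemma~\ref{lemma:unifom_bound_function}, with $\|\phi(z)\|_{\bfW_k^{-1}}\le 1/\sqrt{\tau}$ and the logarithm bounded by $O(\log K)$. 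Your sandwich $\tau\mathbf{Id}\preceq\bfW_k\preceq(4kH^2+\tau)\mathbf{Id}$ is the correct form of the one the paper writes with $\lambda$ in place of $\tau$.

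The step that fails as written is the final absorption in Step 4. Let $s=\sqrt{\log(2/\delta)+\frac{2d}{\min(\nu,1)}\log K}$. With the noise budgeted at $\frac{3\beta_r}{\sqrt{\tau}}s$, your inequality needs $1+\frac{\beta_r}{\sqrt{\tau}}+\frac{3\beta_r}{\sqrt{\tau}}s\le 3+\frac{3\beta_r}{\sqrt{\tau}}s$, i.e.\ $\beta_r/\sqrt{\tau}\le 2$. But the definition of $\beta_r(\delta)$ in Lemma~\ref{Lemma:reward_confidence_bounds_app} gives $\beta_r/\sqrt{\tau}\ge 3H\kappa_{\cZ}\ge 12H$, since $\kappa_{\cZ}\ge 1/\sigma'(0)=4$. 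So the claimed domination fails in every instance.

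The fix, which is what the paper's final display does, is to budget the noise at $\frac{2\beta_r}{\sqrt{\tau}}s$. Then use $s\ge 1$ (true whenever $\delta\le 2/e$) to get $\frac{\beta_r}{\sqrt{\tau}}+\frac{2\beta_r}{\sqrt{\tau}}s\le\frac{3\beta_r}{\sqrt{\tau}}s$, leaving the additive constant to cover $|\phi(z)^{\top}\theta|\le 1$. Getting coefficient $2$ out of Lemma~\ref{lemma:unifom_bound_function} means absorbing Dudley's unspecified universal constant $C$, which the paper also does silently. Honestly read, the $3$ in $\beta_{\textsc{clip}}$ is a universal constant. Similarly, your $\delta/2$ split produces $\beta_r(\delta/2)$ rather than the $\beta_r(\delta)$ in the statement. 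This is a logarithmic discrepancy that the paper's own un-split union bound glosses over as well.
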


\begin{proof}
We can expand the inequality as:
\begin{align*}
    \left|\phi^{\top}(z)\left(\widehat\theta_k+\Psi^{k}_h\widehat V^{k}_{h+1}+\varepsilon_k\right)\right|&\leq \|\phi(z)\|_{\bfW^{-1}_k}\|\theta-\widehat\theta_k\|_{\bfW_k}+\left|\phi^{\top}\theta+\mathbb{E}_{x\sim P_h(\cdot |z)}\left[\widehat V^{k}_{h+1}\right]\right|+\left|\phi^{\top}(z)\varepsilon_k\right|\leq \\
    &\leq  1+\frac{\beta_r(\delta)}{\sqrt{\tau}}+\left|\mathbb{E}_{x\sim P_h(\cdot |z)}\left[\widehat V^{k}_{h+1}\right]\right|+\left|\phi^{\top}(z)\varepsilon_k\right|
\end{align*}

For the first term note that we have  $\bfW_k\succeq \tau \mathbf{Id}$ and hence $\|\phi(z)\|_{\bfW^{-1}_k}\leq \sqrt{1/\tau}$ and thus immediately  $\|\phi(z)\|_{\bfW^{-1}_k}\|\theta-\widehat\theta_k\|_{\bfW_k}\leq \frac{\beta_r}{\sqrt{\tau}}$. By Assumption.(\ref{assumption:ker_MDP}), $
\|\theta\|_{\cH_k}=1$ and thus $|\phi^{\top}(z)\theta|\leq \|\phi(z)\|_{\cH_k}\|\theta\|_{\cH_k}\leq 1$.\\
It remains to bound the term $\phi^{\top}(z)\varepsilon_k$. To this end, we directly apply the Lemma.(\ref{lemma:unifom_bound_function}):
\begin{align*}
    |\langle \phi(z),\varepsilon_k\rangle|
    \le
    \beta_r\|\phi(z)\|_{W_k^{-1}}C\left(\sqrt{\frac d\alpha\log\!\left(C L_\phi\sqrt{\frac{kH^2+\lambda}{\lambda}}\right)}+\sqrt{2\log(1/\delta)}\right)
\end{align*}

Where the last inequality follows from the GP noise $\varepsilon_k$ norm bound in Lemma.(\ref{Lemma:RKHS_bound_val_func}).
Lastly, by the clipping in the \textsc{PROSTO} we have $\widehat V^{k}_{h+1}(x)=\max_{a\in \cA}\widehat Q^{k}_{h+1}(x,a)\leq \beta_{\textsc{clip}}(\delta)(H-h)$ and thus by combining the previously derived inequalities we have:
\begin{align*}
     &\left|\phi(z)^{\top}(\hat\theta_k+\Psi^{k}_h \widehat V^{k}_{h+1}+\varepsilon_k)\right|\leq \\
     & \leq\left(3+\frac{\beta_r(\delta)}{\sqrt{\tau}}+ \frac{2\beta_r(\delta)}{\sqrt{\tau}} C\left(\sqrt{\frac d\alpha\log\!\left(C L_\phi\sqrt{\frac{kH^2+\lambda}{\lambda}}\right)}+\sqrt{2\log(1/\delta)}\right)\right)(H-h+1)\leq\\
     &\leq \underbrace{\left(3+ \frac{3\beta_r(\delta)}{\sqrt{\tau}} \sqrt{\log\left(\frac{2}{\delta}\right)+\frac{2d}{\alpha}\log(K)}\right)}_{{\beta_{\textsc{clip}}(\delta)}}(H-h+1)
\end{align*}

\end{proof}

\begin{lemma}\label{Lemma:RKHS_bound_val_func}(bounding the Hilbert norm of the linear, deterministic part of $\widehat Q^{k}_{h}$) For Gaussian process defined in the eq.(\ref{eq:noise_definition}) and maximum likelihood estimate of the reward vector in eq.(\ref{eq:reward_theta_calc}), in $k^{\text{th}}$-episode the following bound on the  RKHS norm holds with probability at least $1-\delta$ :
\begin{align*}
\Gamma_3:\; h\leq H\quad &\left\|\widehat\theta_{k}+\widehat\Psi^{k}_{h}\widehat V_{h+1}^{k}\right\|_{\cH_k}\leq\\
&\leq \sqrt{\frac{\kappa_{\cZ}}{3\tau}\left(\log\left(\frac{1}{\delta}\right)+\overline\gamma_k(\tau)\right)}+2\beta_{\textsc{CLIP}}(\delta)H+\frac{2\beta_{\textsc{CLIP}}(\delta)H}{\sqrt{\tau}}\sqrt{2\left(1+\gamma_{k}(\tau)+\log(kH/\delta)\right)}
\end{align*}
We can also bound the norm asymptotically as:

\begin{align*} \left\|\varepsilon_k+\widehat\theta_{k}+\widehat\Psi^{k}_{h}\widehat V_{h+1}^{k}\right\|_{\cH_k}\leq \underbrace{\widetilde\cO\left(\sqrt{\frac{\kappa_{\cZ}\overline{\gamma}_k(\tau)}{3\tau}}+\beta_{\textsc{CLIP}}(\delta)H\max\left(1,\frac{1}{\sqrt{\tau}}\right)\sqrt{\gamma_{k}(\tau)}\right)}_{C_{\cH}}
\end{align*}

\end{lemma}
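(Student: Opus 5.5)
By the triangle inequality I will bound $\|\widehat\theta_k\|_{\cH_k}$ and $\|\widehat\Psi^{k}_h\widehat V^{k}_{h+1}\|_{\cH_k}$ separately. The three terms of $\Gamma_3$ should come from the reward part (first term) and the transition ridge-regression part (last two terms).

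\textbf{Reward part.} I will compare the regularized logistic objective in eq.~(\ref{eq:reward_theta_calc}) at $\widehat\theta_k$ with its value at the true $\theta$. Optimality gives
\[
\tau\|\widehat\theta_k\|^2_{\cH_k}\le \tau\|\theta\|^2_{\cH_k}+\big(\mathcal L_k(\theta)-\mathcal L_k(\widehat\theta_k)\big),
\]
where $\mathcal L_k$ is the negative log-likelihood. I would rather not control the excess likelihood directly, because it is not small in general. Instead I will use the confidence bound $\Gamma_1$ of Lemma~\ref{Lemma:reward_confidence_bounds_app} together with $\bfW_k\succeq\tau\mathbf{Id}$:
\[
\|\widehat\theta_k\|_{\cH_k}\le\|\theta\|_{\cH_k}+\tau^{-1/2}\|\theta-\widehat\theta_k\|_{\bfW_k}\le 1+\beta_r(\delta)/\sqrt{\tau}.
\]
To obtain the stated form $\sqrt{\kappa_{\cZ}(\log(1/\delta)+\overline\gamma_k(\tau))/(3\tau)}$, I will use the self-concordance-style argument of \citet{pmlr-v119-faury20a,metelli2025generalized}. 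The gradient of the penalized likelihood vanishes at $\widehat\theta_k$, and the Hessian is lower bounded by $\kappa_{\cZ}^{-1}\sum_i\overline\phi_i\overline\phi_i^\top+\tau$. The score at $\theta$ is a martingale $\sum_i(y_i-\sigma(\overline\phi_i^\top\theta))\overline\phi_i$, and I will control it by the self-normalized bound, which yields $\log(1/\delta)+\overline\gamma_k(\tau)$ with $\overline\gamma_k$ the information gain of the trajectory-difference kernel. Dividing by $\tau$ then gives the first term. The constant $1$ from $\|\theta\|$ is absorbed into the $2\beta_{\textsc{clip}}H$ term.

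\textbf{Transition part.} The estimator is
\[
\widehat\Psi^k_h\widehat V^k_{h+1}=(\Phi\Phi^\top+\lambda)^{-1}\Phi\,\mathbf v,\qquad v_i=\widehat V^k_{h+1}(x^i_{h+1}).
\]
I will split $v_i=[P_h\widehat V^k_{h+1}](z^i_h)+\eta_i$. For the mean part, $P_h\widehat V^k_{h+1}=\langle\phi(\cdot),\Psi_h\widehat V\rangle$ with $\|\Psi_h\widehat V\|_{\cH_k}\le\|\widehat V\|_\infty\le\beta_{\textsc{clip}}H$, using the kernel MDP assumption and the clipping bound from Lemma~\ref{lemma: bounding_ell_infty}. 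Because the projection $(\Phi\Phi^\top+\lambda)^{-1}\Phi\Phi^\top$ is a contraction, this part contributes at most $\beta_{\textsc{clip}}H$, and the factor 2 absorbs slack. For the noise part, I will write $\|(\Phi\Phi^\top+\lambda)^{-1}\Phi\eta\|_{\cH_k}\le\lambda^{-1/2}\|\Phi\eta\|_{(\Phi\Phi^\top+\lambda)^{-1}}$. The $\eta_i$ are bounded by $2\beta_{\textsc{clip}}H$, so the self-normalized concentration gives $\sqrt{2(1+\gamma_k+\log(kH/\delta))}$, with a union bound over $h\le H$ and episodes producing the $\log(kH/\delta)$.

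\textbf{Main obstacle.} $\widehat V^k_{h+1}$ depends on the data through $\widehat\theta_k$, $\varepsilon_k$ and the past transitions, so $\eta_i$ is not a martingale difference for a fixed function. The standard remedy is a uniform bound over a cover of the value class, which would add a log-covering term. The stated bound has none, however, and its $\tau$ in place of $\lambda$ suggests a loose crude estimate. I would therefore first try the worst-case bound
\[
\|\Phi\eta\|_{(\Phi\Phi^\top+\lambda)^{-1}}\le 2\beta_{\textsc{clip}}H\sqrt{\operatorname{tr}\big(\Phi^\top(\Phi\Phi^\top+\lambda)^{-1}\Phi\big)}\lesssim 2\beta_{\textsc{clip}}H\sqrt{\gamma_k},
\]
which is deterministic and sidesteps the dependence at the cost of an extra log factor. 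Only if this is too lossy would I fall back to covering. The asymptotic statement then follows by adding the $\varepsilon_k$ contribution, understood in the thickened-ball sense of Lemma~\ref{lemma:gp_cover_noise}, and collecting orders.
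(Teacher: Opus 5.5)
The gap is in the noise part of the transition estimate. Your reward part and the mean part of the transition estimate match the paper's proof in substance. For the reward, the paper uses the mean-value identity $(\theta-\widehat\theta_k)\bfG_k=\sum_i\overline\phi_i\eta_i+\tau\theta$ with $\bfG_k\succeq\frac{3}{\kappa_{\cZ}}\bfW_k$, plus the self-normalized bound of \citet{whitehouse2023sublinear}. For the mean part, it bounds $\|\Psi_h\widehat V^{k}_{h+1}\|_{\cH_k}\le\beta_{\textsc{clip}}H$ via the kernel-MDP structure and clipping.

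\textbf{Why the deterministic trace bound fails.} In your notation, let $M:=\Phi^\top(\Phi\Phi^\top+\lambda)^{-1}\Phi=\bfK(\bfK+\lambda\bfI)^{-1}$, where $\bfK$ is the Gram matrix of the $n=k-1$ points $z^i_h$. Your inequality $\|\Phi\eta\|_{(\Phi\Phi^\top+\lambda)^{-1}}\le\|\eta\|_\infty\sqrt{\operatorname{tr}(M)}$ is equivalent to $\eta^\top M\eta\le\|\eta\|_\infty^2\operatorname{tr}(M)$. That is false whenever $M$ has substantial off-diagonal mass; the trace only appears in expectation, for zero-mean uncorrelated noise. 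Concretely, take repeated points $z^i_h\equiv z$ with $c:=k(z,z)$ and $\eta=B\mathbf{1}$. Then $M=\frac{c}{cn+\lambda}\mathbf{1}\mathbf{1}^\top$, so $\operatorname{tr}(M)<1$, yet $\eta^\top M\eta=B^2 n\operatorname{tr}(M)$.

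What does hold deterministically is $\|\Phi\eta\|_{(\Phi\Phi^\top+\lambda)^{-1}}\le\|\eta\|_2\le\sqrt{n}\,\|\eta\|_\infty$ (since $M\preceq\bfI$). The same factor appears in the Cauchy--Schwarz form $\|\eta\|_\infty\sqrt{n\operatorname{tr}(M)}$ behind the weight-norm bound of \citet{jin2020provably}. The example shows this $\sqrt{n}$ is attained once $cn\gg\lambda$. So any argument using only $\|\eta\|_\infty\le 2\beta_{\textsc{clip}}H$ cannot do better than order $\beta_{\textsc{clip}}H\sqrt{k/\lambda}$, not $\beta_{\textsc{clip}}H\sqrt{\gamma_k/\lambda}$. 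In linear MDPs that $\sqrt{k}$ is harmless, because the norm radius enters the covering number only logarithmically. Here the log-covering number is polynomial in the radius, and in any case this is not the claimed bound.

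\textbf{What the paper does instead.} The paper takes exactly the route you wrote first and then abandoned. It invokes the high-probability RKHS-norm bound for kernel ridge regression (Lemma 5 of \citet{Vakili23-KerRL}), with targets controlled by the clipping bound $\Gamma_2$, and union-bounds over $h$.

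Your objection to that route is legitimate, and the paper's proof does not address it. It applies the cited bound directly to $\widehat V^k_{h+1}$, which depends on the same next states $x^i_{h+1}$ that appear in the regression targets, with no uniformity over this data-dependent target. A rigorous version would need the uniform-over-$\cV_k$ argument from the proof of Lemma~\ref{lemma:conf_bound_trans_lemma_app}. That adds $\log\cN_\infty(\cV_k,\epsilon)$ under the square root and a $2\epsilon\sqrt{k/\lambda}$ discretization term. It must also break a circularity: $\cV_k$ is defined with the radius $C_{\cH}$ that this very lemma produces.

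\textbf{Two smaller points.} First, your own computation correctly gives $\lambda^{-1/2}$ and $\gamma_k(\lambda)$. Under the paper's tuning $\tau=H^2\lambda\ge\lambda$, so the $\tau$-form in the statement is a stronger claim, not a hint that a crude estimate suffices. Second, the asymptotic display involving $\varepsilon_k$ cannot hold literally: as the paper itself notes, GP sample paths lie outside $\cH_k$ almost surely. Your thickened-ball reading therefore changes the statement rather than proving it, and the paper's proof does not treat that display either.
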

\begin{proof}
    We will bound the norm of each summand individually and then complete the proof using the triangle inequality with respect to the Hilbert norm.\\
    \textbf{Bounding $\|\widehat\theta_k\|_{\cH_k}$}. We use the mean value theorem to connect the difference $\theta-\widehat\theta_k$ to the standard self-normalizing martingale, which is frequently encountered in the bandit literature \citep{abbasi2011improved}. Following identities in \cite{metelli2025generalized}(Lemma C.3 and eq.(138))\footnote{Also follows from an immediate extension of the mean value argument in \cite{pmlr-v119-faury20a}} we have:
    \begin{align}
        &(\theta-\widehat\theta_k)\bfG_k(\theta,\widehat\theta_k)=\left(\sum_{i=1}^{k-1} \overline\phi_i\eta_i\right)+\tau\theta \label{eq:approx_diff}\\
        &\text{where, }\quad \bfG_k(\theta,\widehat\theta_k)\succeq \frac{3}{\kappa_{\cZ}} \bfW_k\nonumber
    \end{align}
    here $\eta_i=y_i-\sigma\left(\overline\phi^{\top}_i\theta\right)$ is the  noise induces by the preference feedback. Note that $|\eta_i|<2$ hence $\eta_i$ is $1$-sub-Gaussian. Continuing the derivation in eq.(\ref{eq:approx_diff}) we have:
    \begin{align*}
        \left\|\theta-\widehat\theta_k\right\|_{\cH_k}&\leq \left\|\bfG^{-1}_k\left(\theta,\widehat\theta_k\right)\left(\sum_{i =1}^{k-1}\overline\phi_i\eta_i\right)\right\|_{\cH_k}+\left\|\theta\right\|_{\cH_k}\leq \\
        &\leq \sqrt{\frac{\kappa_{\cZ}}{3\lambda}}\left\|\sum_{i=1}^{k-1}\overline{\phi}_i\eta_i\right\|_{\bfW^{-1}_k}+1
    \end{align*}
    where the first inequality follows from $\bfW_k\succeq\tau\bf{Id}$ and the second is a direct application of eq.(\ref{eq:approx_diff}).
    To bound the first summand, we use the concentration bound on the self-normalizing martingale derived in \cite{whitehouse2023sublinear}. This gives us a probability at least $1-\delta$ :
    \begin{align*}
        \left\|\sum_{i=1}^{k-1}\overline{\phi}_i\eta_i\right\|_{\bfW^{-1}_k}\leq \sqrt{\log\left(\frac{1}{\delta}\right)+\overline \gamma_k(\tau)}
    \end{align*}
    where $\overline \gamma_k(\tau)$ is the information gain defined by the trajectory level features(see. Lemma.(\ref{lemma:diff_kernel_gain_bound})). We thus finally have w.p. at least $1-\delta$:
    \begin{align*}
        \left\|\widehat\theta_k\right\|_{\cH_k}\leq \sqrt{\frac{\kappa_{\cZ}}{3\tau}\left(\log\left(\frac{1}{\delta}\right)+\overline\gamma_k(\tau)\right)}+1
    \end{align*}

    \textbf{Bounding $\left\|\widehat\Psi^{k}_{h}\widehat V^{k}_{h+1}\right\|_{\cH_k}$.}
    By Lemma 5 in \cite{Vakili23-KerRL} and Lemma.(\ref{lemma: bounding_ell_infty}) we have:\begin{align*}\left\|\widehat\Psi^{k}_{h}\widehat V^k_{h+1}\right\|_{\cH_k}\leq \|\Psi^{k}_h\widehat V^{k}_{h+1}\|_{\cH_k}+ \frac{2\beta_{\textsc{CLIP}}(\delta)}{\sqrt{\tau}}\sqrt{2\left(1+\gamma_{k}(\tau)+\log(1/\delta)\right)}
    \end{align*} 
    By \cite{yeh2023sample}(Lemma. 3) and the clip condition in Alg.(\ref{alg: algorithm}) we have $\|\Psi_h\widehat V^{k}_{h+1}\|_{\cH_k}\leq \beta_{\textsc{CLIP}}H$. Lastly, we use a union bound over all $H$ transitions within the episode to obtain :
    \begin{align*}\left\|\widehat\Psi^{k}_{h}\widehat V^k_{h+1}\right\|_{\cH_k}\leq \beta_{\textsc{CLIP}}(\delta)H+ \frac{2\beta_{\textsc{CLIP}}(\delta)}{\sqrt{\tau}}\sqrt{2\left(1+\gamma_{k}(\tau)+\log(H/\delta)\right)}
    \end{align*} 

\end{proof}
 Define the set of value functions in episode $k$ as :
    \begin{align}\label{eq:val_set}
        \cV_k=&\left\{\textsc{clip}\left(\max_{a\in \cA}\phi^{\top}(s,a)(\theta+\epsilon)+\beta\sqrt{\phi^{\top}(s,a)\bfB \phi(s,a)}, \pm(H-h+1)\beta_{\textsc{clip}}\right)\right\},\\
        &\text{ where, }\|\theta\|_{\cH_k}\leq C_{\cH}, \|\bfB\|_2\leq \lambda^{-0.5}, \beta\leq C_{\beta}\sqrt{\lambda}, \varepsilon \in \cG_{k ,\epsilon, \delta} \nonumber
    \end{align}

Where $\cG_{k,\epsilon, \delta}$ is the high-probability set defined in Lemma.(\ref{lemma:gp_cover_noise}).
 Of central importance in deriving the confidence bounds for the estimator $\widehat\Psi^{k}_{h}V, V \in \cV_k$ is the $\epsilon$-covering number $\cN_{\infty}(\cV_k, \epsilon)$. In the next lemma, we derive exact dependence on the confidence width of the KRR  estimator $\widehat{\Psi}^{k}_hV$ as a function of regularizers $\lambda, \tau$ and the covering mesh $\epsilon$. This will be crucial later on, when we tune the regret with respect to $\lambda, \tau, \epsilon$.
 
\begin{lemma}\label{lemma:conf_bound_trans_lemma_app}(Confidence bounds for the transition estimate $\widehat\Psi^{k}_h$) For the confidence parameter bounded as $\beta_t(\delta)\leq C_{\beta}\sqrt{\lambda}$ for some suitable constant $C_{\beta}$ we have with probability at least $1-\delta$:
    \begin{align*}
         \Gamma_4:\left\|\left(\widehat\Psi^{k}_{h}-\Psi_{h}\right) V\right\|_{\mathrm{Cov}^{k}_h(\lambda)}\leq \underbrace{4\beta_{\textsc{clip}}\left(\frac{\delta}{2}\right)H\sqrt{\gamma_{K}(\lambda)+2\log\left(\frac{2}{\delta}\right)+2\log\cN_{\infty}(\cV_k, \varepsilon)}+\sqrt{\lambda}H+2\epsilon\sqrt{K}}_{\beta_t(\delta)}
    \end{align*}
    , where $\mathrm{Cov}^{k}_h(\lambda)= \sum_{i\leq k-1}\phi(z^{i}_h)\otimes\phi(z^{i}_h)+\lambda\mathbf{Id}$ is the empirical covariance induced by the state-action pairs played in the $h^{\text{th}}$ instance in each episode and $\cN_{\infty}(\cV_k, \epsilon)$ is the size of $\epsilon$-covering of the value function family $\cV_k$.  In particular, as a function of $\lambda, \tau, \epsilon$ the confidence bound can be written as :
     \begin{align}
      &\left\|\left(\widehat\Psi^{k}_{h}-\Psi_{h}\right) V\right\|_{\mathrm{Cov}^{k}_h(\lambda)}\leq\label{eq:transition_conf_width}\\
      & \leq \widetilde{\cO}\left(H^2\kappa_{\cZ}\sqrt{\left(\frac{\overline{\gamma}_K(\tau)}{\tau}\lor1\right)}\sqrt{\left(\left(\frac{\overline{\gamma}_K(\tau)}{\tau}\lor1\right)\frac{1}{\epsilon^2}\right)^{\frac{1}{\beta_p-1}}+\left(\frac{1}{\epsilon^2}\right)^{\frac{2}{\beta_p-1}}+\gamma_K(\lambda)}+\sqrt{\lambda}H+\epsilon\sqrt{K}\right)\nonumber
    \end{align}
\end{lemma}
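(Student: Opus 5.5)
The approach is the standard kernel least-squares value iteration argument of \citet{jin2020provably} and \citet{Vakili23-KerRL}, adapted to the randomized value class $\cV_k$ of eq.(\ref{eq:val_set}). I would first fix $V\in\cV_k$ and write the ridge estimator as
\[
\widehat\Psi^k_hV=\mathrm{Cov}^k_h(\lambda)^{-1}\sum_{i\le k-1}\phi(z^i_h)V(x^i_{h+1}).
\]
Since $V(x^i_{h+1})=\langle\phi(z^i_h),\Psi_hV\rangle+\xi_i(V)$ with $\xi_i(V)=V(x^i_{h+1})-[P_hV](z^i_h)$, this gives the decomposition
\[
(\widehat\Psi^k_h-\Psi_h)V=\mathrm{Cov}^k_h(\lambda)^{-1}\Bigl(\sum_i\phi(z^i_h)\xi_i(V)-\lambda\Psi_hV\Bigr).
\]
The bias term contributes at most $\sqrt{\lambda}\|\Psi_hV\|_{\cH_k}\le\sqrt{\lambda}H$ in the $\mathrm{Cov}^k_h(\lambda)$-norm. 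Here I would use \citep[Lemma 3]{yeh2023sample}, with the clipping level absorbed into the constant.

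For a fixed $V$ with $\|V\|_\infty\le(H-h+1)\beta_{\textsc{clip}}$, the noise $\xi_i(V)$ is conditionally bounded and hence sub-Gaussian. The self-normalized kernel martingale bound \citep{whitehouse2023sublinear,abbasi2011improved} then gives, with probability at least $1-\delta'$,
\[
\Bigl\|\sum_i\phi(z^i_h)\xi_i(V)\Bigr\|_{\mathrm{Cov}^{-1}}\le 2\beta_{\textsc{clip}}H\sqrt{\gamma_K(\lambda)+2\log(1/\delta')}.
\]
The difficulty is that $\widehat V^k_{h+1}$ depends on the data through $\widehat\theta_k$, $\varepsilon_k$ and the bonus. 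To handle this, I would take an $\epsilon$-net $\cN$ of $\cV_k$ in $\ell_\infty$ and union-bound over it by setting $\delta'=\delta/(2|\cN|)$, which produces the term $2\log\cN_\infty(\cV_k,\epsilon)$. For the actual $V$, I replace it by its net point $\bar V$. The difference satisfies $|\xi_i(V)-\xi_i(\bar V)|\le2\epsilon$, and by Cauchy--Schwarz with $\|\phi\|_{\mathrm{Cov}^{-1}}\le$ its trivial bound, summed over at most $K$ terms, the misspecification contributes $2\epsilon\sqrt K$. The membership $\widehat V^k_{h+1}\in\cV_k$ has to be justified on the events $\Gamma_2,\Gamma_3$ of Lemmas~\ref{lemma: bounding_ell_infty} and \ref{Lemma:RKHS_bound_val_func}, together with the event $\varepsilon_k\in\cG_{k,\epsilon,\delta}$ of Lemma~\ref{lemma:gp_cover_noise}. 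The hypothesis $\beta_t\le C_\beta\sqrt\lambda$ guarantees that the bonus coefficient lies in the allowed range. Splitting $\delta$ into halves gives the first displayed bound.

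For the explicit form (\ref{eq:transition_conf_width}), I would bound $\log\cN_\infty(\cV_k,\epsilon)$ by sub-additivity over the three components of $\cV_k$, each covered at scale $\epsilon/3$ (the max over $a$ and the clipping are $1$-Lipschitz in $\ell_\infty$):
\begin{enumerate}
\item the linear part with $\|\theta\|\le C_{\cH}$, which by eq.(\ref{eq:main_text_cover}) costs $(C_{\cH}^2/\epsilon^2)^{1/(\beta_p-1)}$;
\item the bonus part with $\beta^2\|\bfB\|\lesssim C_\beta^2$, which costs $(1/\epsilon^2)^{2/(\beta_p-1)}$;
\item the GP part, which by Lemma~\ref{lemma:gp_cover_noise} with $d/\nu=2/(\beta_p-1)$ costs $(\beta_r/(\sqrt\tau\epsilon))^{2/(\beta_p-1)}$.
\end{enumerate}
Next I substitute $\beta_r=\widetilde\cO(H\kappa_{\cZ}(\sqrt{\overline\gamma_K(\tau)}+\sqrt\tau))$ from Lemma~\ref{Lemma:reward_confidence_bounds_app}. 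This gives $\beta_r/\sqrt\tau=\widetilde\cO(H\kappa_{\cZ}\sqrt{\overline\gamma_K/\tau\lor1})$, which also controls $\beta_{\textsc{clip}}$. Likewise $C_{\cH}^2=\widetilde\cO(\overline\gamma_K/\tau\lor1)$ up to $H,\kappa_{\cZ}$ factors. Multiplying by the prefactor $4\beta_{\textsc{clip}}H$ yields the $H^2\kappa_{\cZ}\sqrt{\overline\gamma_K/\tau\lor1}$ in front, and the GP term merges with the linear term up to exponent bookkeeping.

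I expect the main obstacle to be the measurability and circularity issue. The class $\cV_k$ is defined through $C_{\cH}$ and $\beta_{\textsc{clip}}$, while the confidence width feeds back into the bonus and hence into $\cV_k$. I would resolve this by fixing $C_\beta$ a priori and verifying consistency via the hypothesis $\beta_t\le C_\beta\sqrt\lambda$. A secondary point is that the GP component has no RKHS norm, so the net must be built from $\cG_{k,\epsilon,\delta}$ directly rather than from an RKHS ball.
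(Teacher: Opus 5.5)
Your proposal follows essentially the same route as the paper. It decomposes the error into the regularization bias ($\le\sqrt{\lambda}H$), the self-normalized term on a net point union-bounded over an $\ell_\infty$ cover of $\cV_k$, and the misspecification term, and then bounds the covering number sub-additively (linear part; bonus part via \citet{yang2020function} under $\beta_t\le C_\beta\sqrt{\lambda}$; GP part via Lemma~\ref{lemma:gp_cover_noise}), substituting $\beta_r$ and the norm bound of Lemma~\ref{Lemma:RKHS_bound_val_func}.

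The one step to tighten is the misspecification term. The $2\epsilon\sqrt{K}$ follows from $\sum_i\phi(z^i_h)\otimes\phi(z^i_h)\preceq\mathrm{Cov}^k_h(\lambda)$, i.e.\ $\bigl\|\sum_i\phi(z^i_h)\Delta_i\bigr\|_{\mathrm{Cov}^k_h(\lambda)^{-1}}\le\|\Delta\|_2\le 2\epsilon\sqrt{K}$. Bounding $\|\phi(z^i_h)\|_{\mathrm{Cov}^k_h(\lambda)^{-1}}$ term by term and then applying Cauchy--Schwarz, as you describe, only gives the weaker $2\epsilon\sqrt{K\gamma_K(\lambda)}$.
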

\begin{proof}
    We apply the self-normalizing martingale concentration inequality in \cite{whitehouse2023sublinear}(Theorem 1) to the noise generated as $\zeta(\overline{V}, z^{k}_{h})=\overline{V}(x^{k}_{h+1})- \phi^{\top}(z^{k}_{h})\Psi_{h}\overline{V}$  where $\overline{V}$ is an arbitrary element in the cover $\overline{V}\in \cV_{\infty}(\epsilon)$ . By definition of $\cV$ and Lemma(\ref{lemma: bounding_ell_infty}), the noise due to using one- point estimate of the value function average can be bounded as $\forall z^k_{h} \; \left|\zeta(\overline{V}, z^{k}_{h})\right|\leq 2\beta_{\textsc{clip}}(\delta)H$ . Now, directly applying  the self-normalizing martingale concentration with probability  at least $1-\delta$, we have:
    \begin{align}\label{eq:self_normalize_ineq}
        \forall \overline{V}\in \cV, \left\|\sum_{i \leq k-1} \phi(z^{i }_h)\zeta(\overline{V}, z^{i}_h)\right\|_{\mathrm{Cov}^{k}_h(\lambda)^{-1}}\leq  4\beta_{\textsc{clip}}\left(\frac{\delta}{2}\right)H\sqrt{\gamma_{K}(\lambda)+2\log\left(\frac{2}{\delta}\right)+2\log\cN_{\infty}(\cV,\epsilon)}
    \end{align}
    To now obtain the bound with respect to $V$ we will bound the error due to using $\overline{V}$ instead of $V$. Denote the error by $\Delta V=\overline{V}-V$ we can now write:
    \begin{align*}
        \left\|\left(\widehat\Psi^{k}_{h}-\Psi_{h}\right) V\right\|_{\mathrm{Cov}^{k}_h(\lambda)}&\leq \left\|\lambda \mathrm{Cov}^{k}_h(\lambda)^{-1}\Psi_{h}V\right\|_{\mathrm{Cov}^{k}_h(\lambda)}+\left\|\mathrm{Cov}^{k}_h(\lambda)^{-1}\sum_{i \leq k-1} \phi(z^{h}_i)\zeta(V, z^{h}_i)\right\|_{\mathrm{Cov}^{k}_h(\lambda)}\leq \\
        &\leq \lambda \underbrace{\|\Psi_{h}V\|_{\mathrm{Cov}^{k}_h(\lambda)^{-1}}}_{(\star)}+\\
        &+\underbrace{\left\|\sum_{i\leq k-1} \phi(z^{i}_h)\zeta(\overline{V}, z^{i}_h)\right\|_{\mathrm{Cov}^{k}_h(\lambda)^{-1}}}_{(\star\star)}+\underbrace{\left\|\sum_{i\leq k-1} \phi(z^h_i)\zeta(\Delta{V}, z^h_i)\right\|_{\mathrm{Cov}^{k}_h(\lambda)^{-1}}}_{(\star\star\star)} 
    \end{align*}
    Note that by definition of a cover $\forall z\in \cZ, \zeta(\Delta V,z)\leq 2\varepsilon$ and by definition of the covariance operator $\mathrm{Cov}^{k}_h(\lambda)$ we also have $\sum_{i\leq k-1}\phi(z^{i}_{h})\otimes\phi(z^{i}_{h})\prec \mathrm{Cov}^{k}_h(\lambda)$ thus:
    \begin{align*}
        (\star\star\star)\leq \sqrt{\sum_{z\in c}\zeta^2(\Delta{V}, z)}\leq 2\epsilon\sqrt{K}
    \end{align*}
    To bound $(\star)$ note that $\|\Psi_{h}V\|_{\cH_k}\leq H$  and thus $(\star)\leq\sqrt{\lambda}H$. Finally, the bound for $(\star\star)$ follows immediately from eq.(\ref{eq:self_normalize_ineq}) and thus we finally have w.p. at least $1-\delta$:
    \begin{align*}
        \left\|\left(\widehat\Psi^{k}_{h}(c)-\Psi_{h}\right) V\right\|_{\mathrm{Cov}^{k}_h(\lambda)}\leq 4\beta_{\textsc{clip}}\left(\frac{\delta}{2}\right)H\sqrt{\gamma_{K}(\lambda)+2\log\left(\frac{2}{\delta}\right)+2\log\cN_{\infty}(\cV, \varepsilon)}+\sqrt{\lambda}H+2\epsilon\sqrt{K}
    \end{align*}
     Next, we instantiate the covering bounds as a function of $\lambda, \tau, \epsilon$.  We bound the covering number $\cN_{\infty}(\cV,\epsilon)$ of the value function class by using the derivation in \cite{yang2020function} and Lemma.(\ref{lemma:gp_cover_noise}), with the distinction that we require a bound with a greater regularizer $\lambda$-granularity. More specifically, efficiently bounding the covering number of the space of the bonus functions with respect to $\lambda$ will be crucial. Define the set of elliptical bonuses  with the bounded operator norm over the RKHS as $\cB=\left\{b |b=\sqrt{\phi^{\top}(s,a)\bfB \phi(s,a)}, \|\bfB\|_2\leq \lambda \right\}$
    By  Lemma(D.3) in \cite{yang2020function}\footnote{The dependency of the log-covering number on the Hilbert norm regularizer, $\lambda$ , in \cite{yang2020function} is collapsed in a constant. The dependency stated here follows immediately from derivation in equations (E.42, E.43) in \cite{yang2020function}} and the assumption(\ref{ass:eigen_assumption}) on eiegen-decay we have:
    \begin{align*}
        \log \cN_{\infty}(\cB, \varepsilon)\leq \cO\left(\left(\frac{1}{\lambda\varepsilon^2}\right)^{\frac{2}{\beta_p-1}}\right) 
    \end{align*}
    Under the condition $\beta_t(\delta)\leq C_{\beta}\sqrt{\lambda}$ stated in this Lemma  and the result of the Lemma.(\ref{lemma:gp_cover_noise}) log covering bound becomes:
    \begin{align*}
        \log\cN_{\infty}(\cV, \epsilon)= \cO\left(\left(\frac{\beta^2_r}{\tau\epsilon^2}\right)^{\frac{1}{\beta_p-1}}+\left(\frac{1}{\epsilon^2}\right)^{\frac{2}{\beta_p-1}}\right)
    \end{align*}
     Plugging in the RKHS bound for the linear  part of the estimator $\widehat{Q}^{k}_{h}$ derived in lemma(\ref{Lemma:RKHS_bound_val_func}) into the equation above we can finalize the confidence bounds as:
       \begin{align*}
      &\left\|\left(\widehat\Psi^{k}_{h}-\Psi_{h}\right) V\right\|_{\mathrm{Cov}^{k}_h(\lambda)}\leq\\
      & \leq \widetilde{\cO}\left(H^2\kappa_{\cZ}\sqrt{\left(\frac{\overline{\gamma}_K(\tau)}{\tau}\lor1\right)}\sqrt{\left(\left(\frac{\overline{\gamma}_K(\tau)}{\tau}\lor1\right)\frac{1}{\epsilon^2}\right)^{\frac{1}{\beta_p-1}}+\left(\frac{1}{\epsilon^2}\right)^{\frac{2}{\beta_p-1}}+\gamma_K(\lambda)}+\sqrt{\lambda}H+\epsilon\sqrt{K}\right)
    \end{align*}
\end{proof}

\begin{corollary}\label{corollary:pointwise_approx}
    For $\forall k\leq K, h\leq H$ and $\forall z\in \cS\times \cA$ we have w.p. $1-\delta$:
    \begin{align}
        \left|\Psi^{k}_h\widehat{V}^{k}_{h+1}(z)-\widehat\Psi^{k}_{h}\widehat{V}^{k}_{h+1}(z)\right|\leq \beta_{t}\left(\delta\right)\sqrt{\phi^{\top}(z)\left(\sum_{i=1}^{k-1} \phi(z^i_h)\otimes\phi(z^i_h)+\lambda \bf{Id}\right)^{-1}\phi(z)}+ 2\epsilon
    \end{align}
    , where $\epsilon$ is the covering mesh of $\cV_{\infty}(\epsilon)$.
\end{corollary}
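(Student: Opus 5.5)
The plan is to reduce the pointwise statement to the operator-norm confidence bound $\Gamma_4$ of Lemma~\ref{lemma:conf_bound_trans_lemma_app} via Cauchy--Schwarz in the weighted norm, and to pay the covering discretization once more in $\ell_\infty$. Throughout we work on the intersection of the events $\Gamma_1,\dots,\Gamma_4$ (union bound, with $\delta$ rescaled and a further union over $h\le H$, $k\le K$ absorbed into the logarithms of $\beta_t$).

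\textbf{Step 1 (membership in the class).} On $\Gamma_2,\Gamma_3$ and the high-probability event of Lemma~\ref{lemma:gp_cover_noise}, I will check that $\widehat V^{k}_{h+1}\in\cV_k$ from eq.~\eqref{eq:val_set}. The RKHS part is bounded by $C_{\cH}$ (Lemma~\ref{Lemma:RKHS_bound_val_func}), the noise satisfies $\varepsilon_k\in\cG_{k,\epsilon,\delta}$, the bonus has the required form with $\bfB=\mathrm{Cov}^k_{h+1}(\lambda)^{-1}$ and $\beta=\beta_t(\delta)\le C_\beta\sqrt\lambda$, and the clipping matches. Hence there is an element $\overline V$ of the $\epsilon$-cover with $\|\overline V-\widehat V^k_{h+1}\|_\infty\le\epsilon$.

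\textbf{Step 2 (Cauchy--Schwarz).} Both $\Psi_h\widehat V^k_{h+1}$ and $\widehat\Psi^k_h\widehat V^k_{h+1}$ belong to $\cH_k$, so their difference at $z$ equals $\langle\phi(z),(\widehat\Psi^k_h-\Psi_h)\widehat V^k_{h+1}\rangle_{\cH_k}$. Inserting $\mathrm{Cov}^k_h(\lambda)^{\pm1/2}$ gives
\[
\left|\phi(z)^\top(\widehat\Psi^k_h-\Psi_h)\widehat V^k_{h+1}\right|\le \|\phi(z)\|_{\mathrm{Cov}^k_h(\lambda)^{-1}}\left\|(\widehat\Psi^k_h-\Psi_h)\widehat V^k_{h+1}\right\|_{\mathrm{Cov}^k_h(\lambda)}.
\]
The first factor is exactly the square root in the claim. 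For the second factor, I will apply $\Gamma_4$ with $V=\widehat V^k_{h+1}$, which is legitimate by Step 1, and obtain $\beta_t(\delta)$.

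\textbf{Step 3 (the additive $2\epsilon$).} The main obstacle is that $\widehat V^k_{h+1}$ is data-dependent. The martingale bound in the proof of $\Gamma_4$ is uniform only over the finite cover, so the split $V=\overline V+\Delta V$ is what makes Step 2 valid. If one prefers not to rely on the $2\epsilon\sqrt K$ term inside $\beta_t$ for the residual, I would instead bound the $\Delta V$ contribution pointwise. Its true part is $|[P_h\Delta V](z)|\le\epsilon$. Its regression part is $\widehat\Psi^k_h\Delta V(z)$, the kernel ridge prediction from targets bounded by $\epsilon$; this is handled by the same operator argument already absorbed in $\beta_t$, which contributes at most another $\epsilon$ of bias. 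Together these give the $2\epsilon$ slack. Since $\Gamma_4$ holds simultaneously for all $k,h$ after the union bound, the final inequality is uniform over $k\le K$, $h\le H$ and all $z$, with probability $1-\delta$.
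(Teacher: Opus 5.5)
Your argument is correct and is essentially the paper's (implicit) proof. The corollary comes from the uniform bound $\Gamma_4$ of Lemma~\ref{lemma:conf_bound_trans_lemma_app} via Cauchy--Schwarz in the $\mathrm{Cov}^k_h(\lambda)$-weighted norm, which is exactly the computation displayed where the corollary is invoked in the proof of Theorem~\ref{Thrm:main_theorem_app}. Note that $\beta_t(\delta)$ already contains the $2\epsilon\sqrt{K}$ discretization term, so your Step~2 alone gives the bound and the $+2\epsilon$ is slack. Your optional pointwise route in Step~3 is therefore unnecessary, and as written it would not work: a kernel ridge prediction from $\epsilon$-bounded targets is in general only controlled by $2\epsilon\sqrt{K}\,\|\phi(z)\|_{\mathrm{Cov}^k_h(\lambda)^{-1}}$, not by $\epsilon$.
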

In other words, in addition to widening the confidence width by $\epsilon\sqrt{K}$ we also pay a $2\epsilon$ additive approximation error for using the $\cV_{\infty}(\epsilon)$ covering. Next, we show that an "approximate optimism" holds for the approximate value function of the policy $\pi$, $\widehat V^{\pi'}$, with constant probability for a fixed episode up to a factor of $\epsilon$.
\begin{lemma}\label{lemma:optimism_lemma_app}
    Let $\widetilde V^{\pi'}_1(x^{k}_1)=\mathbb{E}_{\pi'}\left[\sum_{h=1}^{H}\phi(x^k_h, a^k_h)^{\top}\left(\widehat\theta_k+\varepsilon_k\right)\right]$ with probability at least $\Phi(-1)-2\delta$ we have:
    \begin{align}\label{eq:approx_optimism_app}
        \Gamma_5:\quad\forall x^{k}_1, V_1^{\star}(x^{k}_{1})-\widehat V^{\pi}_{1}(x^{k}_{1})+\widetilde V^{\pi'}_1(x^{k}_{1})-V^{\pi'}_1(x^{k}_{1})\leq 2\epsilon H
    \end{align}
Where $\Phi$ is the cumulative density function(CDF) of a standard normal variable $\sim \cN(0, 1)$ and $\epsilon$ is the mesh of the $\cV_{\infty}(\epsilon)$ cover.
\end{lemma}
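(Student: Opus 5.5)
The plan is to split the left-hand side of \eqref{eq:approx_optimism_app} into an optimism term for $\pi$ and an estimation term for $\pi'$. We handle the first by a value-difference (simulation) argument along the trajectory of $\pi^\star$, and the second by anti-concentration of the Gaussian exploration noise. Throughout we work on the good event $\Gamma_1\cap\Gamma_2\cap\Gamma_4$ (Lemmas \ref{Lemma:reward_confidence_bounds_app}, \ref{lemma: bounding_ell_infty}, \ref{lemma:conf_bound_trans_lemma_app}), which costs the $2\delta$ in the probability. On this event, Corollary \ref{corollary:pointwise_approx} gives the pointwise bound
\[
\bigl|\Psi_h\widehat V^k_{h+1}(z)-\widehat\Psi^k_h\widehat V^k_{h+1}(z)\bigr|\le b^k_h(z)+2\epsilon .
\]
Hence the transition part of $\widehat Q^k_h$ is optimistic up to $2\epsilon$ per step, provided clipping is inactive or harmless. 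Clipping is harmless because Lemma \ref{lemma: bounding_ell_infty} shows the unclipped quantity already lies in $\pm(H-h+1)\beta_{\textsc{clip}}$.

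\textbf{Step 1: value difference.} We use the standard simulation lemma along $\pi^\star$, together with $\widehat V^k_h(x)\ge \widehat Q^k_h(x,\pi^\star_h(x))$ by greediness. This gives
\[
V_1^\star(x_1^k)-\widehat V^{\pi}_1(x_1^k)\le \mathbb E_{\pi^\star}\Bigl[\sum_{h=1}^H \phi(z_h)^\top\bigl(\theta-\widehat\theta_k-\varepsilon_k\bigr)\Bigr]+2\epsilon H .
\]
Here the transition errors are absorbed by the bonus $b^k_h$ and Corollary \ref{corollary:pointwise_approx}. The reward part therefore reduces to the trajectory-level quantity $\langle \theta-\widehat\theta_k-\varepsilon_k,\ \bar\phi^{\pi^\star}\rangle$, where $\bar\phi^{\pi}:=\mathbb E_\pi[\sum_h\phi(z_h)]$ is the expected feature occupancy.

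\textbf{Step 2: combine with the $\pi'$ term.} By definition,
\[
\widetilde V^{\pi'}_1-V^{\pi'}_1=\langle \widehat\theta_k+\varepsilon_k-\theta,\ \bar\phi^{\pi'}\rangle .
\]
The two reward pieces therefore combine into
\[
\langle \theta-\widehat\theta_k,\ \bar\phi^{\pi^\star}-\bar\phi^{\pi'}\rangle-\langle\varepsilon_k,\ \bar\phi^{\pi^\star}-\bar\phi^{\pi'}\rangle .
\]
This is exactly why the reward estimator uses trajectory-difference features and the covariance $\bfW_k$. On $\Gamma_1$ the first term is at most $\beta_r\|\bar\phi^{\pi^\star}-\bar\phi^{\pi'}\|_{\bfW_k^{-1}}$.

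\textbf{Step 3: anti-concentration.} Conditionally on $\mathcal F_{k-1}$, the second term is Gaussian with standard deviation $\beta_r\|\bar\phi^{\pi^\star}-\bar\phi^{\pi'}\|_{\bfW_k^{-1}}$, by \eqref{eq:noise_definition}. Hence $\langle\varepsilon_k,\bar\phi^{\pi^\star}-\bar\phi^{\pi'}\rangle\ge \beta_r\|\cdot\|_{\bfW_k^{-1}}$ with probability $\Phi(-1)$, and the reward pieces are then nonpositive. That leaves only $2\epsilon H$.

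The main obstacle is in Step 3: $\pi'$ itself depends on $\varepsilon'_k$, and $\pi$ depends on $\varepsilon_k$. We need the direction $\bar\phi^{\pi^\star}-\bar\phi^{\pi'}$ to be independent of $\varepsilon_k$. This holds if $\varepsilon_k$ is the noise in $\widetilde V^{\pi'}$ while the greedy choice of $\pi'$ is driven by the independent sample $\varepsilon'_k$. So I would first pin down this assignment, i.e. which of the two independent noises appears in which term, so that conditioning on $(\mathcal F_{k-1},\varepsilon'_k)$ leaves a fixed deterministic direction. If instead the same noise drives both, I would fall back on the symmetric argument of \citet{cassel2024near}, using the two independent draws.
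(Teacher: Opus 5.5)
Your proposal follows the paper's proof essentially step for step. The steps in common are: the \citet{shani2020optimistic} value-difference decomposition along $\pi^\star$ with the greedy term dropped; the clipped lower bound $\widehat Q^k_h\ge \phi^\top(\widehat\theta_k+\varepsilon_k+\Psi_h\widehat V^k_{h+1})-2\epsilon$ from Corollary~\ref{corollary:pointwise_approx} and Lemma~\ref{lemma: bounding_ell_infty}; merging with the $\pi'$ term into the direction $\alpha=\bar\phi^{\pi^\star}-\bar\phi^{\pi'}$; Lemma~\ref{Lemma:reward_confidence_bounds_app} for $\alpha^\top(\theta-\widehat\theta_k)$; and Gaussian anti-concentration of $\alpha^\top\varepsilon_k$ at level $\Phi(-1)$ with a union bound. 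Your independence concern is resolved exactly as you anticipate: the statement puts $\varepsilon_k$ in $\widetilde V^{\pi'}_1$ while $\pi'$ is greedy with respect to the independent draw $\varepsilon'_k$, so $\alpha$ is fixed given $(\mathcal F_{k-1},\varepsilon'_k)$, which the paper uses only implicitly (also, intersecting three $\delta$-events costs $3\delta$ rather than $2\delta$, a looseness the paper shares).
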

\begin{proof}
    We  start by using the value function difference decomposition from \cite{shani2020optimistic}:
    \begin{align*}
        V^{\star}_1(x^k_1)-\widehat V^{\pi}_1(x^{k}_1)&=
        \underbrace{\mathbb{E}_{\pi^{\star}}\left[\sum_{h=1}^{H} \widehat Q^{k}_{h}(x^{k}_h,\pi^{\star}(x_h))-\widehat Q^{k}(x^{k}_h, \pi(x^{k}_h))\right]}_{(*)}+\\
        &+\underbrace{\mathbb{E}_{\pi^{\star}}\left[\sum_ {h=1}^{H}\phi^{\top}(x^{k}_h,\pi^{\star}(x^{k}_h))(\theta+\Psi^{k}_h\widehat V^{\pi}_{h+1})-\widehat Q^{k}(x^{k}_h, \pi^{\star}(x^{k}_h))\right]}_{(**)}
    \end{align*}
    Note that $\pi$ is a greedy policy with respect to $\widehat Q^{k}_{h}$ and thus $(*)\leq 0$. Before  moving on to $(**)$ we use Lemma(\ref{lemma: bounding_ell_infty}) to transform $\widehat Q^k_ h$ in more manageable form:
    \begin{align}
        \widehat Q^{k}_h(z)&=\bigg[\phi^{\top}(z)\left(\widehat\theta_k+\varepsilon_k+\widehat\Psi^{k}_h\widehat V^{\pi}_{h+1}\right)+b^{k}_h(z)\bigg]_{\beta_{\textsc{clip}}(\delta)(H-h+1)}=\nonumber\\
        &=\bigg[\phi^{\top}(z)\left(\widehat\theta_k+\varepsilon_k+\Psi^{k}_h\widehat V^{\pi}_{h+1}\right)+\phi^{\top}(z)\left(\widehat\Psi^{k}_h-\Psi^{k}_h\right)\widehat V^{\pi}_{h}+b^{k}_h(z)\bigg]_{\beta_{\textsc{clip}}(\delta)(H-h+1)}\geq \nonumber\\
        &\underbrace{\geq}_{\text{Corollary.(\ref{corollary:pointwise_approx})}} \bigg[\phi^{\top}(z)\left(\widehat\theta_k+\varepsilon_k+\Psi^{k}_h\widehat V^{\pi}_{h+1}\right)+2\epsilon\bigg]_{\beta_{\textsc{clip}}(\delta)(H-h+1)}=\nonumber\\
        &=\phi^{\top}(z)\left(\widehat\theta_k+\varepsilon_k+\Psi^{k}_h\widehat V^{\pi}_{h+1}\right)+2\epsilon\label{eq:help_eq_6}
    \end{align}
    The inequality follows from Lemma.(\ref{lemma:conf_bound_trans_lemma_app}) and the monotonicity of the clip operator and the last equality follows from Lemma.(\ref{lemma: bounding_ell_infty}). Note that, for the inequality derived in eq.(\ref{eq:help_eq_6}) to hold, we need $\Gamma_2\cap\Gamma_4$ to hold; thus, the inequality holds w.p. at least $1-2\delta$.\\
    To now provide a bound for $(**)$,  we plug in the inequality derived in eq.(\ref{eq:help_eq_6}) :
    \begin{align*}
        &{\mathbb{E}_{\pi^{\star}}\left[\sum_ {h=1}^{H}\phi^{\top}(x^{k}_h,\pi^{\star}(x^{k}_h))(\theta+\Psi_h\widehat V^{\pi}_{h+1})-\widehat Q^{k}_h(x^{k}_h, \pi^{\star}(x^{k}_h))\right]}\leq\\
    &\leq \mathbb{E}_{\pi^{\star}}\left[\sum_{h=1}^{H} \phi^{\top}(x^{k}_h,a^{\star}_h))\left(\theta-\widehat \theta_k-\varepsilon_k\right)\right]+2\epsilon H=\\
    &=\mathbb{E}_{\pi^{\star}}\left[\sum_{h=1}^{H} \phi^{\top}(x_h,a^{\star}_h))\right]\left(\theta-\widehat\theta_k-\varepsilon_k\right)+2\epsilon H
    \end{align*}
    The last equality follows as $\varepsilon_k,\widehat\theta_k$ are independent of the transition dynamics in the $k^{\text{th}}$ episode. For the same reason, we can rewrite the second difference as:
    \begin{align*}
        \widetilde V^{\pi'}_1(x^{k}_{1})-V^{\pi'}_1(x^{k}_{1})=\mathbb{E}_{\pi'}\left[\sum_{h=1}^{H}\phi^{\top}(x^k_h, \pi'(x^k_h))\right]\left(\widehat\theta_k+\varepsilon_k- \theta\right)
    \end{align*}
    Plugging this back into the equation above we have:
    \begin{align*}
        &V_1^{\star}(x^{k}_{1})-\widehat V^{\pi}_{1}(x^{k}_{1})+\widetilde V^{\pi'}_1(x^{k}_{1})-V^{\pi'}_1(x^{k}_{1})\leq\\
        &\leq \underbrace{\left(\mathbb{E}_{\pi^{\star}}\left[\sum_{h=1}^{H} \phi^{\top}(x^k_h,a^{\star}_h))\right]-\mathbb{E}_{\pi'}\left[\sum_{h=1}^{H}\phi^{\top}(x^k_h, \pi'(x^k_h))\right]\right)}_{\alpha}\left(\theta-\widehat\theta_k-\varepsilon_i\right)+ 2\epsilon H\leq \\
        &\leq  \beta_r(\delta)\|\alpha\|_{\bfW^{-1}_k}- \alpha^{\top}\varepsilon_k+2\epsilon H
    \end{align*}
Where the third line is the consequence of the confidence bound Lemma(\ref{Lemma:reward_confidence_bounds_app}). By using the parametric from of the distribution  of $\varepsilon_k$ in  Lemma.(\ref{lemma:unifom_bound_function}) we have $\alpha^{\top}\varepsilon_k\sim\cN(0,\beta^2_r(\delta)\|\alpha\|^2_{\bfW^{-1}_k})$. Thus, by the union bound with the event in eq.(\ref{eq:help_eq_6}), we have that the event in eq.(\ref{eq:approx_optimism_app}) holds w.p. at least $\Phi(-1)-2\delta$.
\end{proof}

After obtaining approximate optimism, we turn to bounding the estimation error $\widehat V_1^{\pi}-V_1^{\pi}$. In particular, in the $i^{\text{th}}$ episode we need to bound the product $\overline{\phi}^{\top}_i\varepsilon_i$, where $\overline{\phi}_i$ is the trajectory difference defined in eq.(\ref{eq:reward_theta_calc}). Usual sub-Gaussian concentration bounds are inapplicable here as $\overline{\phi}_i, \varepsilon_i$ are not independent, given that policy $\pi$ samples all actions during the $i^{\text{th}}$ episode $\{a^{i}_h\}_{h=1}^{H}$ based on the exploration noise $\varepsilon_i$.\\
To resolve this issue, we use uniform bounds that hold across the entire trajectory-difference space. 
\begin{lemma}\label{lemma:bounding_estimation_noise_app}
    Consider the Gaussian process $\varepsilon_k$ (eq.(\ref{eq:noise_definition})) generated at the start of $k^{\text{th}}$ episode  and two arbitrary trajectories $\{z_i\}_{i=1}^{H},\{z'_i\}_{i=1}^{H}\in \mathbb{R}^{dH}$. With probability at least $1-\delta$  we have :
    \begin{align}\label{eq:bounding_eestimation_noise_app}
        \forall\{z_h\}_{h=1}^{H},\{z'_h\}_{h=1}^{H}\; &\left|\left(\sum_{h=1}^{H} \phi\left(z_h\right)-\phi\left(z'_h\right)\right)^{\top}\varepsilon\right|\leq\\
        &\leq \beta_r(\delta)\left\|\left(\sum_{h=1}^{H} \phi\left(z_h\right)-\phi\left(z'_h\right)\right)\right\|_{\bfW^{-1}_k} C
        \sqrt{
            \frac{dH}{\alpha}
            \log\!\left(
                C L_\phi
            \sqrt{\frac{kH^2+\lambda}{\lambda}}\right)}
    ,\nonumber
    \end{align}
    for brevity we introduce $\beta_a(\delta)=\beta_r(\delta)C
        \sqrt{
            \frac{dH}{\alpha}
            \log\!\left(
                C L_\phi
            \sqrt{\frac{kH^2+\lambda}{\lambda}}\right)}$
\end{lemma}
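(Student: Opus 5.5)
The plan is to redo the proof of Lemma~\ref{lemma:unifom_bound_function}, but with index set $\cT:=\cZ^{H}\times\cZ^{H}\subset[0,1]^{2dH}$ (pairs of trajectories) instead of $\cZ$. Conditionally on $\cF_{k-1}$, Lemma~\ref{lemma:unifom_bound_function} gives the parametrisation $\langle f,\varepsilon_k\rangle=\beta_r G(\bfW_k^{-1/2}f)$, where $G$ is isonormal on $\cH_k$. This representation is linear in $f$, so it extends from $f=\phi(z)$ to $f=\overline\phi(t):=\sum_h\phi(z_h)-\phi(z'_h)$ for $t=(\{z_h\},\{z'_h\})\in\cT$.

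Because the claim holds simultaneously for all $t\in\cT$, the dependence between the realized trajectory and $\varepsilon_k$ does not matter afterwards. I will define the self-normalized process
\[
Y_t:=\frac{\langle\overline\phi(t),\varepsilon_k\rangle}{\beta_r\|\overline\phi(t)\|_{\bfW_k^{-1}}}.
\]
On pairs with $\overline\phi(t)=0$ the inequality is trivial. Elsewhere $Y_t$ is a centred unit-variance Gaussian process, and its canonical metric is $d_Y(t,t')=\|u_t-u_{t'}\|_{\cH_k}$ with $u_t=\bfW_k^{-1/2}\overline\phi(t)/\|\bfW_k^{-1/2}\overline\phi(t)\|$.

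\textbf{Metric entropy.} I will bound $d_Y$ in three steps.
\begin{itemize}
\item Use the elementary inequality $\|a/\|a\|-b/\|b\|\|\le 2\|a-b\|/\|a\|$.
\item Use the operator sandwich $\tau\mathbf{Id}\preceq\bfW_k\preceq(kH^2+\tau)\mathbf{Id}$ (the paper writes $\lambda$).
\item Use the H\"older continuity $\|\phi(z)-\phi(w)\|\le L_\phi\|z-w\|^\alpha$, applied coordinatewise, which gives $\|\overline\phi(t)-\overline\phi(t')\|\le L_\phi\sum_h(\|z_h-w_h\|^\alpha+\|z'_h-w'_h\|^\alpha)$.
\end{itemize}
Together these should give $d_Y(t,t')\lesssim C_{k,\lambda}L_\phi\,\mathrm{poly}(H)\,\|t-t'\|^{\alpha}/\|\overline\phi(t)\|$. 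Covering the cube $[0,1]^{2dH}$ then yields $\log N(\cT,d_Y,\eta)\lesssim\frac{2dH}{\alpha}\log(CC_{k,\lambda}L_\phi H/\eta)$. The dimension now scales as $dH$ instead of $d$, which is the source of the $\sqrt{dH/\alpha}$ factor. Dudley's integral over $\eta\in[0,2]$ and the same elementary integral bound used before then give $\mathbb E\sup_t|Y_t|\le C\sqrt{\frac{dH}{\alpha}\log(CL_\phi C_{k,\lambda})}$, with the $\log H$ absorbed into constants or the log.

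\textbf{High probability.} Next I apply Borell--TIS to the symmetrised process $sY_t$, $s\in\{\pm1\}$. This adds $\sqrt{2\log(1/\delta)}$ to the bound on $\sup_t|Y_t|$, holding with probability $1-\delta$. The statement as written has no separate $\sqrt{\log(1/\delta)}$ term, so I will either absorb it into $C$ (valid for fixed $\delta$, since $dH\log(\cdot)\ge 1$) or carry it as an explicit additive term. Multiplying back by $\beta_r\|\overline\phi(t)\|_{\bfW_k^{-1}}$ then gives \eqref{eq:bounding_eestimation_noise_app}.

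\textbf{Main obstacle.} The hardest step is the Lipschitz control of the normalized map $t\mapsto u_t$ near pairs where $\|\overline\phi(t)\|_{\bfW_k^{-1}}$ is small, since trajectories that nearly coincide make the normalisation blow up. My first attempt is to avoid normalising there. For pairs with $\|\overline\phi(t)\|\le\eta_0$, I will instead bound the unnormalized process $\langle\overline\phi(t),\varepsilon_k\rangle$, whose canonical metric is genuinely H\"older with constant $\beta_r L_\phi H/\sqrt\tau$. If that fails, I will slice $\cT$ into dyadic shells by the value of $\|\overline\phi(t)\|_{\bfW_k^{-1}}$, run the Dudley/Borell--TIS argument on each shell, and union-bound over the $O(\log K)$ shells. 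Either route should only cost logarithmic factors that are absorbed into $C$ and the logarithm.
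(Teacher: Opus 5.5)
Your outline is the paper's own proof: the paper just says to rerun Lemma~\ref{lemma:unifom_bound_function} on $\cZ^{2H}$ with $\overline\phi$ in place of $\phi$. Your ``main obstacle'' paragraph correctly identifies the step that sentence skips. In Lemma~\ref{lemma:unifom_bound_function}, the map $z\mapsto u_z$ is H\"older only because $\|\bfW_k^{-1/2}\phi(z)\|_{\cH_k}$ is bounded below uniformly in $z$. That holds because $\|\phi(z)\|_{\cH_k}$ is constant for a stationary kernel and $\bfW_k$ is bounded above by a multiple of $\mathbf{Id}$. The difference feature $\overline\phi(t)$ has no such lower bound: it vanishes whenever the two trajectories visit the same multiset of points.

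The gap is your closing claim that either fallback costs only logarithmic factors absorbable into $C$. In the shell route, nothing bounds $\|\overline\phi(t)\|_{\bfW_k^{-1}}$ away from zero, so there are infinitely many dyadic shells, not $O(\log K)$. On the shell where $\|\overline\phi(t)\|_{\bfW_k^{-1}}\approx 2^{-j}$, the H\"older constant of $t\mapsto u_t$ is of order $2^{j}$. Dudley then returns a factor $\sqrt{(dH/\alpha)\,j}\approx\sqrt{(dH/\alpha)\log(1/\|\overline\phi(t)\|_{\bfW_k^{-1}})}$, which is unbounded over $\cZ^{2H}$. The unnormalized route gives only an additive bound, of order $\beta_r\eta_0\sqrt{(dH/\alpha)\log(1/\eta_0)}$ on $\{\|\overline\phi(t)\|_{\bfW_k^{-1}}\le\eta_0\}$. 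That cannot be rewritten as a fixed multiple of $\|\overline\phi(t)\|_{\bfW_k^{-1}}$ for pairs with $\|\overline\phi(t)\|_{\bfW_k^{-1}}\ll\eta_0$.

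No refinement can close this, because the inequality as stated is false when $\alpha=\nu<1$. In the first episode, $\bfW_1=\tau\mathbf{Id}$, so $\varepsilon_1=(\beta_r/\sqrt{\tau})f$ with $f\sim\mathbf{GP}(0,k)$. Take trajectories that coincide except at one step, where they are $z\neq z'$. The claim then becomes $|f(z)-f(z')|\le C'\|\phi(z)-\phi(z')\|_{\cH_k}=C'\sqrt{2(1-k(z,z'))}\asymp C'\|z-z'\|^{\nu}$ for all $z,z'$, with $C'$ independent of $z,z'$. But Mat\'ern-$\nu$ sample paths have exact uniform modulus of continuity $r^{\nu}\sqrt{\log(1/r)}$; for $\nu=1/2$, $d=1$ this is L\'evy's modulus theorem for the Ornstein--Uhlenbeck process. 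Hence $\sup_{z\neq z'}|f(z)-f(z')|/\|z-z'\|^{\nu}=\infty$ almost surely. With $H\ge 2$, second differences $\phi(z+\delta)+\phi(z-\delta)-2\phi(z)$ break the inequality in the same way for $\nu<2$.

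What your first fallback does prove, once quantified, is a version with additive slack. On $\{\|\overline\phi(t)\|_{\bfW_k^{-1}}\ge\eta_0\}$ the normalization is H\"older with constant of order $1/\eta_0$, and on the complement the unnormalized process is small. So with probability $1-\delta$, uniformly in $t$, $|\overline\phi(t)^{\top}\varepsilon_k|\le\beta_a'\,(\|\overline\phi(t)\|_{\bfW_k^{-1}}+\eta_0)$, where $\beta_a'=C\beta_r\bigl(\sqrt{(dH/\alpha)\log(C/\eta_0)}+\sqrt{2\log(1/\delta)}\bigr)$. Choosing $\eta_0=1/K$ costs only $\widetilde{\cO}(\beta_a')$ after summing over episodes. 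This weaker statement is what the proof of Theorem~\ref{Thrm:main_theorem_app} can actually use, with the constraint set in \eqref{eq:worst_case_noise_app} redefined to match. Keep the Borell--TIS term explicit as written there: absorbing it into $C$ would make the ``universal'' constant depend on $\delta$.
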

\begin{proof}

We apply exactly the same reasoning as in Lemma(\ref{lemma:unifom_bound_function}) to the domain $\cZ^{2H}$ and the trajectory difference feature defined in App.(\ref{sec:app_B}) $\overline{\phi}(\tau, \tau')=\sum_{z\in \tau, z'\in \tau'} \phi(z)-\phi(z')$.
\end{proof}

We now extend the optimism Lemma (\ref{lemma:optimism_lemma_app}) over all episodes by following the derivation analogous to \cite{zanette2020frequentist,wu2023making}. The idea is that the algorithm is optimistic every $\approx\frac{1}{\Phi(-1)}$ episodes, and that, for the remaining time, we can bound the optimism error by conditioning on the high-probability bound in Lemma. (\ref{lemma:bounding_estimation_noise_app}).\\
First define the "worst case" random process $\varepsilon^{\dag}_i$ as :
\begin{align}\label{eq:worst_case_noise_app}
    \varepsilon^{\dag}_i=\mathrm{argmin}_{\varepsilon \in \cT}\widehat V^{\pi}(x^i_1,\varepsilon)-\widetilde{V}^{\pi'}_{1}(x^{i}_1,\varepsilon)
\end{align}
where $\cT$ is the set of all functions over $\cZ$ satisfying noise-estimation bound eq.(\ref{eq:bounding_eestimation_noise_app}). Also, define the "worst-case" value functions as :
\begin{align}\label{eq:worst_case_val_app}
    \widehat V^{\dag,\pi}(x^{i}_1)= \widehat V^{\pi}\left(x^i_1,\varepsilon^{\dag}_i\right) \qquad \widetilde V^{\dag,\pi'}(x^{i}_1)= \widetilde V^{\pi'}\left(x^i_1,\varepsilon^{\dag}_i\right)
\end{align}
Note that the value functions $\widehat V^{\dag,\pi}(x^{i}_1),\widetilde V^{\dag,\pi'}(x^{i}_1)$ are no longer random functions with respect to the random process $\varepsilon_i$ as they are now parametrized by the(deterministic) "worst-case" process $\varepsilon^{\dag}_i$.
\begin{lemma}(Extending the optimism to the high probability regime) \label{lemma:regre_decomp} For the worst case value functions defined in eq.(\ref{eq:worst_case_val_app}) we have with probability at least $1-\delta$:
\begin{align}\label{eq:regret_decompoposition_app}
      &\sum_{i=1}^{K}V_1^{\star}(x^{i}_{1})-\widehat V^{\pi}_{1}(x^{i}_{1})+\widetilde V^{\pi'}_1(x^{i}_{1})-V^{\pi'}_1(x^{i}_{1})\leq\\
      &\leq \frac{1}{(\Phi(-1)-2\delta)}\sum_{i=1}^{K}\left(\left(\widehat V^{\pi}_{1}(x^{i}_{1})-\widetilde V^{\pi'}_1(x^{i}_{1})\right)+\left(-\widehat V^{\dag,\pi}(x^{i}_1)+\widetilde V^{\dag,\pi'}(x^{i}_1)\right)\right)+\\
      &+6\sqrt{K}H\beta_{\textsc{clip}}(\delta)\sqrt{\log\left(\frac{1}{\delta}\right)}+2\epsilon KH \nonumber 
\end{align}    
\end{lemma}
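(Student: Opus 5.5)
This lemma is the frequentist-Thompson-sampling argument of \cite{zanette2020frequentist,wu2023making}, adapted to the pair of value functions. Write $\Delta_i(\varepsilon):=\widehat V^{\pi}_1(x^i_1,\varepsilon)-\widetilde V^{\pi'}_1(x^i_1,\varepsilon)$. By the definition in eq.(\ref{eq:worst_case_noise_app}), $\Delta_i(\varepsilon^{\dag}_i)=\widehat V^{\dag,\pi}(x^i_1)-\widetilde V^{\dag,\pi'}(x^i_1)$ is the minimum of $\Delta_i$ over $\cT$. This quantity is $\cF_{i-1}$-measurable.

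On the event $\Gamma_5$ of Lemma~\ref{lemma:optimism_lemma_app}, which has conditional probability at least $p:=\Phi(-1)-2\delta$, we have
\begin{align*}
V^\star_1(x^i_1)-V^{\pi'}_1(x^i_1)-2\epsilon H\le \Delta_i(\varepsilon_i).
\end{align*}
The left-hand side does not depend on $\varepsilon_i$ except through $V^{\pi'}$. I will handle this by moving the $\pi'$ term into the sampled quantity, so that the comparison is between $V^\star_1-V^{\pi'}_1$ and $\Delta_i$. Let $E_i:=\Gamma_5\cap\{\varepsilon_i\in\cT\}$. On $\{\varepsilon_i\in\cT\}$ we have $\Delta_i(\varepsilon_i)\ge \Delta_i(\varepsilon^\dag_i)$.

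The standard step then goes as follows.
\begin{align*}
\text{gap}_i:=V^\star_1-\widehat V^{\pi}_1+\widetilde V^{\pi'}_1-V^{\pi'}_1=\big(V^\star_1-V^{\pi'}_1-\Delta_i^\dag\big)-\big(\Delta_i(\varepsilon_i)-\Delta_i^\dag\big).
\end{align*}
Bound the first bracket by its conditional expectation restricted to $E_i$. On $E_i$ it is at most $\Delta_i(\varepsilon_i)-\Delta_i^\dag+2\epsilon H$. Since the first bracket is essentially $\cF_{i-1}$-measurable, up to the $\pi'$ dependence, this gives
\begin{align*}
V^\star_1-V^{\pi'}_1-\Delta_i^\dag\le \tfrac{1}{p}\,\mathbb E_{i-1}\big[\Delta_i(\varepsilon_i)-\Delta_i^\dag\big]+2\epsilon H.
\end{align*}
Here we use that $\Delta_i(\varepsilon_i)-\Delta_i^\dag\ge0$ on $\cT$, and that $\mathbb P_{i-1}(E_i)\ge p$ after absorbing $\mathbb P(\cT^c)\le\delta$ via Lemma~\ref{lemma:bounding_estimation_noise_app}.

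Next, sum over $i$ and replace each $\mathbb E_{i-1}[\Delta_i(\varepsilon_i)-\Delta_i^\dag]$ by its realization. The differences form a martingale difference sequence. By Lemma~\ref{lemma: bounding_ell_infty} (event $\Gamma_2$) and the clipping, each is bounded by $O(H\beta_{\textsc{clip}}(\delta))$. Azuma--Hoeffding then gives a deviation of at most $6\sqrt K H\beta_{\textsc{clip}}(\delta)\sqrt{\log(1/\delta)}$ with probability $1-\delta$. The extra second bracket $-(\Delta_i(\varepsilon_i)-\Delta_i^\dag)$ is nonpositive on $\cT$, so it can be dropped, or kept with weight $1\le 1/p$. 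This yields the stated form
\begin{align*}
\tfrac1p\sum_i\big(\Delta_i(\varepsilon_i)-\Delta_i^\dag\big)+6\sqrt K H\beta_{\textsc{clip}}\sqrt{\log(1/\delta)}+2\epsilon KH.
\end{align*}
The dependence of $p$ on $1/p$ in the martingale term is absorbed into the constant 6 because $p$ is a fixed constant.

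\textbf{Main obstacle.} The hardest step is making the first bracket genuinely independent of $\varepsilon_i$. $V^{\pi'}_1$ depends on $\varepsilon'_i$, not on $\varepsilon_i$, but $\widetilde V^{\pi'}$ as defined uses $\varepsilon_i$. So $\Delta_i$ must be viewed as a function of the pair of noises, and the conditioning must be done with $\varepsilon'_i$ (equivalently, $\pi'$) fixed, using the independence of $\varepsilon_i$ and $\varepsilon'_i$ given $\cF_{i-1}$. I would state the optimism event conditionally on $\varepsilon'_i$. Lemma~\ref{lemma:optimism_lemma_app} holds for arbitrary $\pi'$, so this conditional version is legitimate. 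A second care point is ensuring $\varepsilon_i^\dag$ and the minimum over $\cT$ are well defined; if attainment fails, I would replace $\Delta_i^\dag$ by the infimum.
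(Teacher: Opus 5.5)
Your proposal is correct and follows essentially the same route as the paper. Like the paper, you compare against the $\varepsilon_i$-independent worst case $\varepsilon^{\dag}_i$, use the constant-probability optimism of Lemma~\ref{lemma:optimism_lemma_app} together with nonnegativity of $\Delta_i(\varepsilon_i)-\Delta_i^{\dag}$ off the optimism event to get the factor $1/(\Phi(-1)-2\delta)$, and then apply Azuma--Hoeffding to $\mathbb{E}_{\varepsilon_i}[\Delta_i]-\Delta_i$. Your explicit conditioning on $\varepsilon'_i$, your treatment of $\{\varepsilon_i\notin\cT\}$, and your isolation of the $\varepsilon_i$-independent term before dividing by $p$ make precise what the paper does implicitly; the last of these also keeps $2\epsilon H$ free of $1/p$.

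One correction: the $1/p$ multiplying the Azuma term is roughly between $6.3$ and $12.6$ here, so it cannot be absorbed into the literal constant $6$. The paper's own argument produces and silently drops the same factor, so this affects only the stated constant, not the $\widetilde{\cO}$ regret bound.
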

\begin{proof}
    Note that as a direct consequence of Lemma.(\ref{lemma:optimism_lemma_app}) we have:
    \begin{align*}
        V_1^{\star}(x^{i}_{1})-V^{\pi'}_1(x^{i}_{1})\leq \mathbb{E}_{\varepsilon_i}\left [\widehat V^{\pi}_{1}(x^{i}_{1})-\widetilde V^{\pi'}_1(x^{i}_{1})\bigg|\Gamma_5\right]+2H\epsilon
    \end{align*}
    Furthermore, by the definition of worst-case value functions, we can write:
    \begin{align}\label{eq:help_eq_1}
        &V_1^{\star}(x^{i}_{1})-\widehat V^{\pi}_{1}(x^{i}_{1})+\widetilde V^{\pi'}_1(x^{i}_{1})-V^{\pi'}_1(x^{i}_{1})\leq \\
        &\mathbb{E}_{\varepsilon_i}\left [\widehat V^{\pi}_{1}(x^{i}_{1})-\widetilde V^{\pi'}_1(x^{i}_{1})-\left(\widehat V^{\dag,\pi}(x^{i}_1)-\widetilde V^{\dag,\pi'}(x^{i}_1)\right)\bigg| \Gamma_5\right]+2\epsilon H \nonumber
    \end{align}
    where the worst-case value functions are non-random with respect to the exploration process $\varepsilon_i$; thus, they are non-random with respect to the event $\Gamma_5$, spanned by the process. We can thus freely include the expression within the conditional expectation. By total conditioning theorem on $\Gamma_5$ we also have:
    \begin{align}\label{eq:help_eq_2}
         &2\epsilon H+ \mathbb{E}_{\varepsilon_i}\left [\widehat V^{\pi}_{1}(x^{i}_{1})-\widetilde V^{\pi'}_1(x^{i}_{1})-\left(\widehat V^{\dag,\pi}(x^{i}_1)-\widetilde V^{\dag,\pi'}(x^{i}_1)\right)\right]\geq\\
         &\geq \mathbf{P}(\Gamma_5)\mathbb{E}_{\varepsilon_i}\left[2\epsilon H+\widehat V^{\pi}_{1}(x^{i}_{1})-\widetilde V^{\pi'}_1(x^{i}_{1})-\left(\widehat V^{\dag,\pi}(x^{i}_1)-\widetilde V^{\dag,\pi'}(x^{i}_1)\right)\bigg| \Gamma_5\right]+\nonumber\\
         &+\mathbf{P}(\overline{\Gamma_5})\underbrace{\mathbb{E}_{\varepsilon_i}\left [\widehat V^{\pi}_{1}(x^{i}_{1})-\widetilde V^{\pi'}_1(x^{i}_{1})-\left(\widehat V^{\dag,\pi}(x^{i}_1)-\widetilde V^{\dag,\pi'}(x^{i}_1)\right)\bigg| \overline{\Gamma_5}\right]}_{\geq 0}\geq\nonumber \\
         &\geq \mathbf{P}(\Gamma_5)\left(V_1^{\star}(x^{i}_{1})-\widehat V^{\pi}_{1}(x^{i}_{1})+\widetilde V^{\pi'}_1(x^{i}_{1})-V^{\pi'}_1(x^{i}_{1})\right)  \nonumber \text{, by eq.(\ref{eq:help_eq_1})} 
    \end{align}

    Lastly to remove the expectation operator $\mathbb{E}_{\varepsilon_i}[
    \cdot]$ we note that the sequence:
    \begin{align*}
        \left\{\mathbb{E}_{\varepsilon_i}\left [\widehat V^{\pi}_{1}(x^{i}_{1})-\widetilde V^{\pi'}_1(x^{i}_{1})\right]-\left(\widehat V^{\pi}_{1}(x^{i}_{1})-\widetilde V^{\pi'}_1(x^{i}_{1})\right)\right\}_{i=1}^{K}
    \end{align*}
    is a martingale difference sequence absolutely bounded as $4H\beta_{\textsc{clip}}(\delta)$ due to the Lemma.(\ref{lemma: bounding_ell_infty}). Due to Azuma-Hoefding, we have the bound w.p. at least $1- \delta$:
    \begin{align*}
        \left|\sum_{i=1}^{K} \mathbb{E}\left [\widehat V^{\pi}_{1}(x^{i}_{1})-\widetilde V^{\pi'}_1(x^{i}_{1})\right]-\left(\widehat V^{\pi}_{1}(x^{i}_{1})-\widetilde V^{\pi'}_1(x^{i}_{1})\right)\right|\leq 6\sqrt{K}H\beta_{\textsc{clip}}(\delta)\sqrt{\log\left(\frac{1}{\delta}\right)} 
    \end{align*}
    Adding eq.(\ref{eq:help_eq_2}) over $K$ episodes and using the result of Lemma.(\ref{lemma:optimism_lemma_app}) gives the desired inequality.
\end{proof}

We are now ready to state and prove the regret bound of Algorithm(\ref{alg: algorithm})\\

\begin{theorem}\label{Thrm:main_theorem_app}
    Consider the algorithm described in Algorithm.(\ref{alg: algorithm}), under Assumptions(\ref{ass:eigen_assumption}), (\ref{eq:kernel-mdp}) and with trajectory-preference based feedback described in eq.(\ref{eq:pref-model}). By setting the error probability sufficiently small $\delta\leq 0.25\Phi(-1)$\footnote{Any $\delta<0.5\Phi(-1)-O(1)$ would suffice.} the cumulative regret defined in eq.(\ref{eq:regret}), can be bounded with probability at least $1-\delta$ as:
    \begin{align}\label{eq: final_reg_boud} 
        \mathfrak{R}(K)\leq \widetilde{\mathcal{O}}\left((\kappa_{\cZ}\max\left(H^{2.5}K^{1-\frac{(\beta_p-1)^2}{2\beta_p(\beta_p+1)}},H^{3}K^{1-\frac{\beta_p-1}{2(\beta_p+1)}}\right)\right)
    \end{align}
    , where $\kappa_{\cZ}$ is the maximum inverse gradient defined in Sec.(\ref{sec:problem}).
\end{theorem}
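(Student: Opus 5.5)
We start from the pairwise regret \eqref{eq:regret} and write each summand as
\[
V_1^\star-\tfrac12(V_1^{\pi}+V_1^{\pi'})=\tfrac12\bigl[(V_1^\star-\widehat V^\pi_1+\widetilde V^{\pi'}_1-V^{\pi'}_1)+(\widehat V^\pi_1-\widetilde V^{\pi'}_1)+(V_1^\star-V_1^\pi)-(\ldots)\bigr].
\]
Since the roles of $\pi,\pi'$ are symmetric (independent draws $\varepsilon_k,\varepsilon'_k$), it suffices to control the first bracket plus an ``estimation'' term $\widehat V^\pi_1-V^\pi_1$ and a symmetric counterpart.

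The first bracket, summed over episodes, is handled by Lemma~\ref{lemma:regre_decomp}. This reduces it to $\frac{1}{\Phi(-1)-2\delta}$ times a sum of differences $(\widehat V^\pi_1-\widetilde V^{\pi'}_1)-(\widehat V^{\dag,\pi}_1-\widetilde V^{\dag,\pi'}_1)$, plus $\widetilde{\mathcal O}(\sqrt K H\beta_{\textsc{clip}})+2\epsilon KH$. The constraint $\delta\le 0.25\Phi(-1)$ keeps the prefactor a constant.

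Each remaining difference splits into two pieces.
\begin{itemize}
\item[(a)] Terms of the form $\widehat V^\pi_1-V^\pi_1$. We unroll via the Bellman recursion along the realized trajectory, using Corollary~\ref{corollary:pointwise_approx} and the clipping of Lemma~\ref{lemma: bounding_ell_infty}. Each step contributes $2b^k_h(z^k_h)+2\epsilon$, a reward error $\phi^\top(z^k_h)(\widehat\theta_k+\varepsilon_k-\theta)$, and a martingale difference from transitions, which we control by Azuma.
\item[(b)] Reward-direction terms involving $\varepsilon_k$ and $\varepsilon^\dag_k$ against trajectory-difference features. By Lemma~\ref{Lemma:reward_confidence_bounds_app} and Lemma~\ref{lemma:bounding_estimation_noise_app}, these are bounded by $(\beta_r+\beta_a)\,\|\overline\phi_k\|_{\bfW_k^{-1}}$ up to martingale corrections. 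The corrections come from replacing expected feature sums by realized ones, and are again handled by Azuma with range $\widetilde{\mathcal O}(H\beta_{\textsc{clip}})$.
\end{itemize}

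Summing the two pieces, Cauchy--Schwarz and the elliptical potential lemma give
\[
\sum_k\|\overline\phi_k\|_{\bfW_k^{-1}}\le\sqrt{K\,\overline\gamma_K(\tau)}\max(1,H/\sqrt\tau),
\qquad
\sum_{k,h}b^k_h\le\beta_t H\sqrt{K\gamma_K(\lambda)}.
\]
The total is therefore
\[
\widetilde{\mathcal O}\Bigl(\beta_a\sqrt{K\overline\gamma_K(\tau)}+H\beta_t\sqrt{K\gamma_K(\lambda)}+\epsilon KH+\sqrt K H\beta_{\textsc{clip}}\Bigr).
\]
All high-probability events $\Gamma_1$ through $\Gamma_5$ and the Azuma events are combined by a union bound, with $\delta$ rescaled by $\mathrm{poly}(K,H)$.

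The final step, and the main obstacle, is the tuning. We substitute $\gamma_K(\lambda)=\widetilde{\mathcal O}\bigl((K/\lambda)^{1/\beta_p}\bigr)$ (and similarly $\overline\gamma_K(\tau)$ with the $H^2$ scaling of trajectory features), $\beta_r=\widetilde{\mathcal O}(H\kappa_{\cZ}(\sqrt{\overline\gamma_K}+\sqrt\tau))$, and $\beta_t$ from \eqref{eq:transition_conf_width}, and then plug in the stated $\tau,\lambda,\epsilon$.

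Two checks are needed here. First, the consistency condition $\beta_t\le C_\beta\sqrt\lambda$ of Lemma~\ref{lemma:conf_bound_trans_lemma_app} must hold. With $\lambda\asymp K^{2/(\beta_p+1)}$, we have $\sqrt\lambda H$ and $\epsilon\sqrt K$ of comparable order, and the covering term $(1/\epsilon^2)^{2/(\beta_p-1)}$ becomes polylogarithmic in scale times $K^{2/(\beta_p+1)}$. Second, the exponents must be balanced. The $\epsilon KH$ term gives $H^3\kappa_{\cZ}K^{1-\frac{\beta_p-1}{2(\beta_p+1)}}$. The $\beta_t\sqrt{K\gamma_K(\lambda)}$ term, where $\gamma_K(\lambda)\asymp K^{\frac{\beta_p-1}{\beta_p(\beta_p+1)}}$, produces the exponent $1-\frac{(\beta_p-1)^2}{2\beta_p(\beta_p+1)}$ with $H^{2.5}$. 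I expect the delicate part to be verifying that the $\tau$-dependent covering term $\bigl(\beta_r^2/(\tau\epsilon^2)\bigr)^{1/(\beta_p-1)}$ does not dominate. This requires $\beta_r/\sqrt\tau=\widetilde{\mathcal O}(H\kappa_{\cZ})$, which holds because $\overline\gamma_K(\tau)\lesssim\tau$ at the chosen $\tau$. The remaining work is checking each of the finitely many terms against the two claimed maxima.
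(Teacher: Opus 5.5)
Your proposal follows the paper's proof essentially step for step: the approximate-optimism term is reduced via Lemma~\ref{lemma:regre_decomp}, with $\delta\le 0.25\Phi(-1)$ keeping $1/(\Phi(-1)-2\delta)$ bounded; the estimation terms are handled with Corollary~\ref{corollary:pointwise_approx}, clipping, elliptical-potential sums for $b^k_h$ and for $\|\overline\phi_k\|_{\bfW_k^{-1}}$ weighted by $\beta_r,\beta_a$, and Azuma corrections; and $\tau,\lambda,\epsilon$ are tuned in the same way. One bookkeeping point, which applies equally to the paper: at the chosen $\epsilon$ you have $\beta_t\ge 2\epsilon\sqrt K=2H^2\kappa_{\cZ}K^{1/(\beta_p+1)}$, so the bonus term $H\beta_t\sqrt{K\gamma_K(\lambda)}$ is of order $\kappa_{\cZ}H^{3}K^{1-\frac{(\beta_p-1)^2}{2\beta_p(\beta_p+1)}}$, not $H^{2.5}$ (the $H^{2.5}$ factor actually comes from $\beta_a\sqrt{K\overline\gamma_K(\tau)}$, with $\beta_a\approx\sqrt H\,\beta_r$ and $\beta_r\approx H\kappa_{\cZ}\sqrt\tau$), so your attribution of $H^{2.5}$ to the transition term does not check out, and the argument as written gives the correct $K$-exponent but an extra $\sqrt H$ relative to the stated bound.
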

\begin{proof}
    We will only bound the regret incurred by the first policy $\pi$ and then double the regret bound and half the probability of error $\delta$. This is sufficient since $\pi'$ is an i.i.d. copy of $\pi$, so the same regret guarantees apply to $\pi'$ with the same error probability. We proceed with the proof in 2 steps. We will hold off tuning the regularizers $\lambda, \tau$, and the mesh cover $\varepsilon$ and  first develop a general asymptotic regret expression in these values.\\

    We first rewrite the regret expression in a more convenient way that allows the use of the Lemma. (\ref{lemma:optimism_lemma_app}):
    \begin{align}
       &\sum_{k=1}^{K}V_1^\star(x_1^k)-V_1^{\pi}(x_1^k)=\label{eq:regret_1}\\
           =&\sum^{K}_{k=1}\underbrace{\left(V_1^{\star}(x^{i}_{1})-\widehat V^{\pi}_{1}(x^{i}_{1})+\widetilde V^{\pi'}_1(x^{i}_{1})-V^{\pi'}_1(x^{i}_{1})\right)}_{(\dag)}+\underbrace{\left(\widehat V^{\pi}_{1}(x^{i}_{1})-V^{\pi}_{1}(x^{i}_{1})+V^{\pi'}_1(x^{i}_{1})-\widetilde V^{\pi'}_1(x^{i}_{1})\right)}_{(\dag\dag)}\nonumber
    \end{align}

    \noindent\textbf{Step 1.} Bounding $(\dag)$.\\
    
    Recall by Lemma(\ref{lemma:regre_decomp}) we have:
    \begin{align*}
        &(\dag)\leq \widetilde\cO\left(\epsilon KH\right)+\frac{1}{\Phi(-1)-2\delta}\sum_{i=1}^{K}\left(\left(\widehat V^{\pi}_{1}(x^{i}_{1})-\widetilde V^{\pi'}_1(x^{i}_{1})\right)+\left(-\widehat V^{\dag,\pi}(x^{i}_1)+\widetilde V^{\dag,\pi'}(x^{i}_1)\right)\right)
    \end{align*}
    We can further expand the second term as:
    \begin{align}\label{eq:help_eq_4}
        &\sum_{i=1}^{K}\left(\widehat V^{\pi}_{1}(x^{i}_{1})-\widetilde V^{\pi'}_1(x^{i}_{1})\right)+\left(-\widehat V^{\dag,\pi}(x^{i}_1)+\widetilde V^{\dag,\pi'}(x^{i}_1)\right)=\\
        &=\sum_{i=1}^{K}\underbrace{\left(\widehat V^{\pi}_{1}(x^{i}_{1})-V_1^{\pi}(x^{i}_1)+V^{\pi'}(x^{i}_1)-\widetilde V^{\pi'}_1(x^{i}_{1})\right)}_{(\ddag)}+\underbrace{\left(V_1^{\pi}(x^{i}_1)-\widehat V^{\dag,\pi}(x^{i}_1)+\widetilde V^{\dag,\pi'}(x^{i}_1)-V^{\pi'}(x^{i}_1)\right)}_{(\ddag\ddag)}\nonumber
    \end{align}
    Note that $(\ddag)\equiv(\dag\dag)$ and thus it is sufficient to bound $(\ddag),(\ddag\ddag)$.
    \begin{align*}
        &(\ddag)= \sum_{k=1}^{K}V^{\pi'}(x^{i}_1)-\widetilde V^{\pi'}_1(x^{i}_{1})+\\
        &+\sum_{k=1}^{K}\mathbb{E}_{\pi}\left[\sum_{h=1}^{H}\widehat Q^{k}_ {h}(x^k_h,\pi(x^k_h))-\widehat Q^{k}_{h}(x^k_h,a^{\star \pi k}_h))\right]
        +\mathbb{E}_{\pi}\left[\sum_{h=1}^{H} \widehat Q^{k}_{h}(x^k_h,\pi(x^k_h))-\phi^{\top}(x^k_h,\pi(x^k_h))\left(\theta+\Psi_{h}\widehat V^{k}_{h+1}\right)\right]
    \end{align*}
    Where the identity is the decomposition due to \cite{shani2020optimistic}(Lemma 1) applied to the greedy policy of the state-action value estimate $\widehat Q^{k}_h(\cdot)$. Note that due to the greedy nature of the policy $\pi$ we have $\pi(x^k_h)\equiv {a}^{\star\pi k}_{h}$ thus the second summand is zero. We will now further bound the third summand.\\
    Using the parametric form of $\widehat Q^{k}_{h}$ in eq.(\ref{eq:Q_estimate}) we can further write:
    \begin{align*}
        &\mathbb{E}_{\pi}\left[\sum_{h=1}^{H} \widehat Q^{k}_{h}(z^{k}_h)-\phi^{\top}(z^{k}_h)(\theta+\Psi_{h}\widehat V^{k}_{h+1}))\right]=\\
        &=\mathbb{E}_{\pi}\left[\sum_{h=1}^{H} \textsc{clip}\left(\phi^{\top}(z^{k}_h)\left(\widehat\Psi^{k}_h\widehat V^{k}_{h+1}+\widehat\theta_k+\varepsilon_{k}+b^{k}_h(z^{k}_h)\right), \pm\beta_{\textsc{clip}}(H-h+1) \right)-\phi^{\top}(z^{k}_h)(\theta+\Psi_{h}\widehat V^{k}_{h+1}))\right]\leq\\
        &\leq \mathbb{E}_{\pi}\left[\sum_{h=1}^{H} -\phi^{\top}(z^{k}_{h})\left(\theta-\hat\theta_k-\varepsilon_{k}\right)+\phi^{\top}(z^{k}_{h})\left(\hat\Psi^{k}_h(z^{k}_{h})-\Psi^k_h\right)\widehat V^k_{h+1}+b^{k}_h(z^k_h)\right]
    \end{align*}

    For the ease of notation, we introduce a shorthand for the state-action pair $z^k_h:=\left(x^k_h,\pi(x^k_h)\right)$. Note that in the second line we used the inequality $\textsc{clip}\left(x,\pm\beta_{\textsc{clip}}(H-h+1)\right)\leq x, \forall x\geq -\beta_{\textsc{clip}}(\delta)(H-h+1)$. \\
    We can now use the confidence bounds from the Corollary.(\ref{corollary:pointwise_approx}) as:
    \begin{align*}
        \left|\phi^{\top}(z^{k}_h)\left(\widehat\Psi^{k}_h-\Psi^k_h\right)\widehat V^k_{h+1}\right|&\leq \left\|\phi(z)\right\|_{\mathrm{Cov}_h^{k}( \lambda)^{-1}}\left\|\left(\widehat\Psi^{k}_h-\Psi^k_h\right)\widehat V^k_{h+1}\right\|_{\mathrm{Cov}_h^{k}( \lambda)^{-1}}+2\epsilon\leq \\
        &\leq \beta_t(\delta)\left\|\phi(z)\right\|_{\mathrm{Cov}^{k}_h(\lambda)^{-1}}+2\epsilon\equiv b^{k}_h(z)+2\epsilon
    \end{align*}
    Plugging this into the  equation above, we have:
    \begin{align*}
        &\sum_{k=1}^{K}\mathbb{E}_{\pi}\left[\sum_{h=1}^{H} \widehat Q^{k}_{h}(z^k_h)-\phi^{\top}(z^k_h)(\theta+\Psi_{h}\widehat V^{k}_{h+1}))\right]\leq\\
        & \leq\sum_{k=1}^{K}\mathbb{E}_{\pi}\left[\sum_{h=1}^{H} -\phi^{\top}(z^k_h)\left(\theta-\widehat\theta_k-\varepsilon_{k}\right)+2b^{k}_h(z^k_h)+2\epsilon \right]\leq\\
        &\leq \sum_{k=1}^{K}\mathbb{E}_{\pi}\left[\sum_{h=1}^{H} -\phi^{\top}(z^k_h)\left(\theta-\widehat\theta_k-\varepsilon^{(1)}_{k}\right)\right]+\mathbb{E}_{\pi}\left[\sum_{h=1}^{H}\sum_{k=1}^{K} 2b^{k}_h(z^k_h)+2\epsilon\right]
    \end{align*}
     We can further bound the sum of exploration bonuses across all episodes using bounds on the sum of elliptic potentials in \cite{whitehouse2023sublinear}(Lemma 5). We will first bound the sum of square bonuses and use Cauchy-Schwartz to finish:
    \begin{align}\label{eq:help_eq_11}
        \sum_{k=1}^{K} \left(b^{k}_h(z^{k}_h)\right)^2=\sum_{k=1}^{K} \beta^2_t(\delta)\left\|\phi(z^{k}_h)\right\|^2_{\mathrm{Cov}_h^{k}( \lambda)^{-1}}&=\beta^2_t(\delta)\sum_{k=1}^{K}\left\|\phi(z^{k}_h)\right\|^2_{\mathrm{Cov}_h^{k}( \lambda)^{-1}}\leq\\
        &\leq 2\beta^2_t(\delta)\gamma_{K}(\lambda)
    \end{align}
     After applying Cauchy-Schwarz to the eq.(\ref{eq:help_eq_11}) we have:
     \begin{align*}
         \sum_{h=1}^{H}\sum_{k=1}^{K} b^{k}_h(z^k_h)\leq H\beta_t(\delta) \sqrt {2K\gamma_K(\lambda)}
        \end{align*}
     Once again, to remove the expectation operator $\mathbb{E}_{\pi}$ with respect to the episode dynamics we use Azuma-Hoefding to obtain w.p. at least $1-\delta$:
     \begin{align*}
         \left|\sum_{k=1}^{K} b^{k}_{h}(z^{k}_h)-\mathbb{E}_{\pi}\left[\sum_{k=1}^{K}b^{k}_{h}(z^{k}_h)\right]\right|\leq \sqrt{K\log\left(\frac{2}{\delta}\right)}
     \end{align*}
      and hence:
      \begin{align}\label{eq:expected_potetntial_sum}
          \mathbb{E}_{\pi}\left[\sum_{h=1}^{H}\sum_{k=1}^{K} b^{k}_h(z^k_h)\right]\leq \widetilde{\cO}\left( H\beta_t(\delta) \sqrt {2K\gamma_K(\lambda)}\right)
      \end{align}
      To bound $(\ddag)$ it remains to bound:
      \begin{align*}
          &\sum_{k=1}^{K}\left(V^{\pi'}(x^{i}_1)-\widetilde V^{\pi'}_1(x^{i}_{1})+\mathbb{E}_{\pi}\left[\sum_{h=1}^{H} -\phi^{\top}\left(z^{(1),k}_h\right)\left(\theta-\widehat\theta_k-\varepsilon_{k}\right)\right]\right)=\\
          =&\sum_{k=1}^{K}\left(\mathbb{E}_{\pi'}\left[\sum_{h=1}^{H}\phi^{\top}\left(z'^{k}_h\right)\right]\left(\theta-\widehat\theta_k-\varepsilon_{k}\right)-\mathbb{E}_{\pi}\left[\sum_{h=1}^{H}\phi^{\top}\left(z^{k}_h\right)\left(\theta-\widehat\theta_k-\varepsilon_{k}\right)\right]\right)=\\
          =&\sum_{k=1}^{K}\mathbb{E}_{z^{k}_h\sim \pi, z'^{k}_h\sim \pi'}\left[\left(\sum_{h=1}^{H} \phi^{\top}\left(z'^{k}_h\right)-\phi^{\top}\left(z^{k}_h\right)\right)\left(\theta-\widehat\theta_k-\varepsilon_{k}\right)\right]\\
      \end{align*}
      Note that $\overline{\phi}_k=\left(\sum_{h=1}^{H} \phi\left(z'^{k}_h\right)-\phi\left(z^{k}_h\right)\right)$ is exactly the trajectory-difference feature (as defined in eq.(\ref{eq:trajectory_lvl_feature})). By applying the Lemmas.(\ref{Lemma:reward_confidence_bounds_app}),     (\ref{lemma:bounding_estimation_noise_app}) we have  :
      \begin{align}
          &\left|\overline{\phi}^{\top}_i\left(\theta-\widehat\theta_k\right )\right|\leq \|\overline{\phi}_k\|_{\mathbf{W}^{- 1}_k}\beta_r(\delta)\\
            &\left|\overline{\phi}^{\top}_i\varepsilon^{(1)}_k\right|\leq \beta_a(\delta)\|\overline{\phi}_k\|_{\mathbf{W}^{-1}_k}
      \end{align}
      Lastly, to finish, we use the standard bound on the sum of elliptic potentials on the trajectory-difference kernel function $\overline{k}$ \citep{srinivas2010gaussian} and Azuma-Hoefding to pass the inequality through the expectation operator $\mathbb{E}_{\pi, \pi'}[\cdot]$:
      \begin{align}
          \sum_{k=1}^{K}\mathbb{E}_{z^{k}\sim \pi, z'^{k}_h\sim \pi'}&\left[\left(\sum_{h=1}^{H} \phi^{\top}\left(z'^{k}_h\right)-\phi^{\top}\left(z^{k}_h\right)\right)\left(\theta-\widehat\theta_k-\varepsilon_{k}\right)\right]\leq \\
          &\leq 2\max(\beta_a(\delta), \beta_{r}(\delta))\sqrt{2K\overline{\gamma}_K(\tau)}
      \end{align}
      Adding the previous inequalities, we now finally have the  bound $(\ddag)$ as :
      \begin{align}\label{eq:help_eq_12}
          (\ddag)\leq \widetilde{\mathcal{O}}\left( H\beta_t(\delta) \sqrt {2K\gamma_K(\lambda)}+ 2KH\epsilon+2\max(\beta_a(\delta), \beta_{r}(\delta))\sqrt{2K\overline{\gamma}_K(\tau)}\right)
      \end{align}
      We now turn our attention to the term $(\ddag\ddag)$ in eq.(\ref{eq:help_eq_4}). The approach to bounding $(\ddag\ddag)$ is identical to the approach we used for $(\ddag)$.\\
      Namely, although the "worst-case" noise $\varepsilon_k^{\dag}$ replaces the exploration process $\varepsilon_{k}$, note that $\varepsilon_k^{\dag}$ is independent of the current episode trajectory and moreover by definition in eq(\ref{eq:worst_case_noise_app}) satisfies expression in Lemma.(\ref{lemma:bounding_estimation_noise_app}). Thus equations eq.(\ref{eq:expected_potetntial_sum})(invariant of the exploration noise) and eq.(\ref{eq:help_eq_12}) continue to hold for $\varepsilon^{\dag}_k$ and hence:
      \begin{align*}
          (\ddag\ddag)\leq \widetilde{\mathcal{O}}\left( H\beta_t(\delta) \sqrt {2K\gamma_K(\lambda)}+ 2KH\epsilon+2\max(\beta_a(\delta), \beta_{r}(\delta))\sqrt{2K\overline{\gamma}_K(\tau)}\right)
      \end{align*}
      By simply adding $(\ddag), (\ddag\ddag)$ we have:
      \begin{align*}
          (\dag)\leq \widetilde{\mathcal{O}}\left( H\beta_t(\delta) \sqrt {2K\gamma_K(\lambda)}+ 2KH\epsilon+2\max(\beta_a(\delta), \beta_{r}(\delta))\sqrt{2K\overline{\gamma}_K(\tau)}\right)
      \end{align*}
      \textbf{Step 2.} Obtaining the final regret bound and tunning $\lambda,\tau, \varepsilon$.\\

      As noted in beginning  of \textbf{Step 1.} second summand of eq.(\ref{eq:regret_1}) ,$(\dag\dag)$ is equal to $(\ddag)$. Thus we can apply the bound derived in eq.(\ref{eq:help_eq_12}) to $(\dag\dag)$ and finally obtain w.p. at least $1-\delta$:
      \begin{align}\label{eq:regret_bound_1}
          &\sum_{k=1}^{K}V_1^\star(x_1^k)-V_1^{\pi}(x_1^k)\leq \nonumber\\
          &\leq\widetilde{\mathcal{O}}\left( H\beta_t(\delta) \sqrt {2K\gamma_K(\lambda)}+ 2KH\epsilon+2\max(\beta_a(\delta), \beta_{r}(\delta))\sqrt{2K\overline{\gamma}_K(\tau)}\right)
      \end{align}
      where the dependency of $\delta$ in the bound is poly-logarithmic. Before finalizing the regret bound, we tune the expression in eq.(\ref{eq:regret_bound_1}) with respect to $\lambda, \tau, \epsilon$.  We first plug in the expression derived for $\beta_r, \beta_{t},\beta_a$ in Lemmas.(\ref{Lemma:reward_confidence_bounds_app},\ref{lemma:conf_bound_trans_lemma_app},\ref{lemma:bounding_estimation_noise_app}) in eq.(\ref{eq:regret_bound_1}) to obtain:
      \begin{align}\label{eq:help_eq_13}
          &\left(H^2\kappa_{\cZ}\sqrt{\left(\frac{\overline{\gamma}_K(\tau)}{\tau}\lor1\right)}\sqrt{\left(\left(\frac{\overline{\gamma}_K(\tau)}{\tau}\lor1\right)\frac{1}{\epsilon^2}\right)^{\frac{1}{\beta_p-1}}+\left(\frac{1}{\epsilon^2}\right)^{\frac{2}{\beta_p-1}}+\gamma_K(\lambda)}+\sqrt{\lambda}H+\epsilon\sqrt{K}\right)\cdot\nonumber\\
          &\cdot \sqrt{K\gamma_K(\lambda)}+ KH\epsilon+ 3H\kappa_{\cZ}\left(\sqrt{\overline{\gamma}_K(\tau)}+\sqrt{\tau}\right)\sqrt{\overline\gamma_K(\tau)K}
      \end{align}
      Here the equation in (\ref{eq:help_eq_13}) bounds that in eq.(\ref{eq:regret_bound_1}) up to poly-logaritmic factors. All bounds in the following derivation are to be understood asymptotically, i.e., $f(K)>g(K)$ if $\lim_{K\rightarrow \infty} \frac{g(K)}{f(K)}=0$.\\
      
      To bound the information gain terms as functions of $K, \tau, \lambda$ we use the result of \cite{vakili2021information} and Lemma.(\ref{lemma:diff_kernel_gain_bound}). We now finally have the asymptotic upper bound on the expression in eq.(\ref{eq:regret_bound_1}) as:
      \begin{align*}
          &\left(\sqrt{\left(\frac{\left(\frac{K}{\tau}\right)^{\frac{1}{\beta_p}}}{\tau}\lor1\right)}\sqrt{\left(\left(\frac{\left(\frac{K}{\tau}\right)^{\frac{1}{\beta_p}}}{\tau}\lor1\right)\frac{1}{\epsilon^2}\right)^{\frac{1}{\beta_p-1}}+\left(\frac{1}{\epsilon^2}\right)^{\frac{2}{\beta_p-1}}+\left(\frac{K}{\lambda}\right)^{\frac{1}{\beta_p}}}+\sqrt{\lambda}+\epsilon\sqrt{K}\right)\cdot\nonumber\\
          &\cdot \sqrt{K\left(\frac{K}{\lambda}\right)^{\frac{1}{\beta_p}}}+ K\epsilon+ 3\left(\sqrt{\left(\frac{K}{\tau}\right)^{\frac{1}{\beta_p}}}+\sqrt{\tau}\right)\sqrt{\left(\frac{K}{\tau}\right)^{\frac{1}{\beta_p}}K}
      \end{align*}
      Before moving on to the optimization, we will impose an assumption that $\tau>\left(\frac{K}{\tau}\right)^{\frac{1}{\beta_p}}$, solve the global optimization problem, and later confirm this. To tune the expression we paramtrize it with respect to the regularizers and the covering mesh as $\lambda=K^{a_1},\tau=K^{a_2},\epsilon=K^{-b}$, whence  we obtain the following piece-wise optimization problem:
      \begin{align*}
          \min_{a_1,a_2, b\in \mathbb{R}^{\geq 0}} &\sqrt{K^{1+\frac{1-a_1}{\beta_p}}\max \left(K^{\frac{4b}{\beta_p-1}}, K^{a_1}, K^{\frac{1-a_1}{\beta_p}}\right)}+ \\
          &+K^{1+\frac{1-a_1}{\beta_p}-b}+\sqrt{K^{1+\frac{1-a_2}{\beta_p}}\max\left(K^{\frac{1-a_2}{\beta_p}}, K^{a_2}\right)}
      \end{align*}

      Note that as $K$ is big enough, this effectively becomes a linear optimization problem:
      \begin{align}\label{eq:help_eq_14}
          \min_{a_1,a_2, b\in \mathbb{R}^{\geq 0}} & 0.5\left(1+\frac{1-a_1}{\beta_p}+\max \left(\frac{4b}{\beta_p-1}, a_1, \frac{1-a_1}{\beta_p}\right)\right)\lor \nonumber\\
          &\lor\left(1+\frac{1-a_1}{\beta_p}-b\right)\lor 0.5\left(1+\frac{1-a_2}{\beta_p}+\max\left(\frac{1-a_2}{\beta_p}, a_2\right)\right)
      \end{align}
      which, while tedious, is solvable by hand. Specifically for $\lambda, \tau=\widetilde\cO\left(K^{\frac{2}{\beta_p+1}}\right), \epsilon=\widetilde{\cO}\left(K^{-\frac{\beta_p-1}{2(\beta_p+1)}}\right)$ we have $\tau\geq \widetilde{\cO}(\overline{\gamma}_K(\tau)),\sqrt{\lambda} \sim \beta_t$ satisfying the optimization condition and the confidence width upper bound in Lemma. (\ref{lemma:conf_bound_trans_lemma_app}) respectively. Furthermore, the regret bound in eq.(\ref{eq:regret_bound_1}) now becomes:
      \begin{align*}
          \sum_{k=1}^{K}V_1^\star(x_1^k)-V_1^{\pi}(x_1^k)\leq \widetilde{\cO}\left(K^{1-\frac{(\beta_p-1)^2}{2\beta_p(\beta_p+1)}}\right)=o(K)
      \end{align*}
      which is indeed sublinear in $K$.
      
      Further, note that the only difference in policies  $\pi,\pi'$ is the realization of (i.i.d) exploration noise $\left\{\varepsilon'_k\right\}_{k=1}^{K}$ and thus the bound in eq.(\ref{eq:regret_bound_1}) applies to policy $\pi'$ as well. We finally also choose $\lambda, \tau, \epsilon$ as  functions of the episode length $\kappa_{\cZ}, H$  as :
      \begin{align*}
        & \tau= H^2\log(K)^{\beta_p}K^{\frac{2}{\beta_p+1}}\\
          &\lambda= \log(K)^{\beta_p}K^{\frac{2}{\beta_p+1}}\\
          &\epsilon=H^{2}\kappa_{\cZ}K^{-\frac{\beta_p-1}{2(\beta_p+1)}}
      \end{align*}
      so that $\frac{\overline{\gamma}_K(\tau)}{\tau}$, $\frac{\beta_{t}(\delta)}{\sqrt{\lambda}}$ can be optimized
      as functions of constants of $H,\kappa_{\cZ}$.
      Thus, finally, w.p. $1- \delta$ we have:
      \begin{align*}
          \mathfrak{R}(K)\leq \widetilde{\cO}\left(\kappa_{\cZ}\max\left(H^{2.5}K^{1-\frac{(\beta_p-1)^2}{2\beta_p(\beta_p+1)}},H^{3}K^{1-\frac{\beta_p-1}{2(\beta_p+1)}}\right)\right)
      \end{align*}

    \end{proof}
\newpage
\section{Appendix B.}\label{sec:app_B}
In this section, we derive some commonly seen quantities in GP optimization on the case of the trajectory-difference kernel $\overline{k}$

An important quantity in establishing tractable regret bounds in GP algorithms is \textit{information gain}\citep{vakili2021information}. For a set of points $\cZ_T=\{z_1, z_2,\dots z_T\}$ the information gain is defined as $\gamma_{\cZ_T}(\rho)=\max_{\cZ_T}\log \det(\bfI_{T}+\rho^{-1}\bfK_{\cZ_T})$ along with the worst case information gain $\gamma_T(\rho)=\mathrm{sup}_{\cZ_T}\gamma_{\cZ_T}(\rho)$. Information gain is provably sub-linear in $T$, and its asymptotic bounds are derived in \cite{vakili2021information}. Specifically relevant to our work are the information-gain bounds for the \textit{trajectory-difference} kernel $\overline{\gamma}_K(\rho)$, which we derive here.

\begin{definition}(Trajectory difference kernel $\overline{k}$)
     Consider the trajectory-difference feature function $\overline{\phi}:\mathbb{R}^{2dH}\rightarrow \cH_{k}$, mapping ordered trajectory pairs of state-action samples to the  RKHS of the kernel $k$, $\cH_k$:
\begin{align*}
    \overline{\phi}(\tau, \tau')=\sum_{z\in \tau, z'\in \tau'} \phi(z)-\phi(z')
\end{align*}
 and  let $\overline{k}((\tau_1,\tau'_1),(\tau_2,\tau'_2))=\langle\overline{\phi}(\tau_1, \tau_1'), \overline{\phi}(\tau_2, \tau_2')\rangle_{\cH_k}$ be the the kernel function induced by the said features.\footnote{The inner product is inherited from $\cH_k$, clearly $\cH_{\overline{k}}\subseteq{\cH_k}$}
\end{definition}
\begin{lemma}\label{lemma:diff_kernel_gain_bound}(bounding information gain of the trajectory-difference features) Consider the  the information gain defined as in \cite{vakili2021information} for the case of the trajectory-difference kernel. The following bound on the information gain holds:
\begin{align*}
    \overline{\gamma}_t(\rho)=&\max_{\{(\tau_i,\tau'_i)\}^{t}_{i=1}} \; \log \det\left(\rho^{-1}\sum_{i=1}^{t}\overline{\phi}(\tau_i,\tau'_i)\otimes \overline{\phi}(\tau_i,\tau'_i)+\mathbf{Id}\right)\leq\\
    &\leq  \left(\frac{2C_pF^2tH^2 }{\rho}\right)^{\frac{1}{\beta_p}}\log^{-\frac{1}{\beta_p}}\left(1+\frac{2tH^2}{\rho}\right)\log{K}
\end{align*}
Where $\det(\cdot)$ is the operator, Fredholm determinant \citep{SVM_Book}. In particular we have $\overline{\gamma
}_t(\rho)\leq \widetilde{\cO}\left(\left(\frac{tH^2}{\rho}\right)^{\frac{1}{\beta_p}}\right)$

\end{lemma}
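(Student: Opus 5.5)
The approach is to reduce the trajectory-difference information gain to a spectral computation in the Mercer basis $\{\zeta_j\}=\{\sqrt{\lambda_j}\psi_j\}$ of $\cH_k$, and then reuse the truncation argument of \cite{vakili2021information}. Since $\overline{\phi}(\tau,\tau')=\sum_{h}\phi(z_h)-\phi(z'_h)$ is a finite linear combination of $2H$ canonical features, its coordinates are
\[
\langle \overline{\phi},\zeta_j\rangle_{\cH_k}=\sqrt{\lambda_j}\Big(\sum_h \psi_j(z_h)-\psi_j(z'_h)\Big).
\]
Assumption~\ref{ass:eigen_assumption} ($\|\psi_j\|_\infty\le F$) then gives $|\langle\overline{\phi},\zeta_j\rangle|\le 2HF\sqrt{\lambda_j}$. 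We also have $\|\overline{\phi}\|_{\cH_k}\le 2H$, because every feature lies in the unit ball.

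The first step is to fix a truncation level $D$ and split $\cH_k=\cH_D\oplus\cH_D^\perp$, where $\cH_D=\mathrm{span}\{\zeta_1,\dots,\zeta_D\}$. Write $\overline{\phi}_i=P_D\overline{\phi}_i+P_D^\perp\overline{\phi}_i$. By the standard argument of \cite{vakili2021information}, the Fredholm log-determinant $\log\det(\mathbf{Id}+\rho^{-1}\sum_i\overline{\phi}_i\otimes\overline{\phi}_i)$ is controlled by two terms:
\begin{itemize}
\item[(a)] a $D$-dimensional log-determinant, bounded by $D\log(1+\frac{t\cdot 4H^2}{\rho D})$ via AM--GM on the trace, since $\mathrm{tr}\le \sum_i\|\overline{\phi}_i\|^2\le 4tH^2$;
\item[(b)] a tail term $\rho^{-1}\sum_i\|P_D^\perp\overline{\phi}_i\|^2$.
\end{itemize}
This split uses subadditivity of $\log\det$ over the block decomposition, or equivalently the $\log\det(\mathbf{Id}+A+B)\le \log\det(\mathbf{Id}+A)+\mathrm{tr}(B)$ type bound used there.

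The second step bounds the tail (b). We have
\[
\|P_D^\perp\overline{\phi}_i\|^2=\sum_{j>D}\lambda_j\Big(\sum_h\psi_j(z_h)-\psi_j(z'_h)\Big)^2\le 4H^2F^2\sum_{j>D}\lambda_j .
\]
With $\lambda_j\le C_pj^{-\beta_p}$ this sum is at most $\frac{C_p}{\beta_p-1}D^{1-\beta_p}$, so the tail contributes $O\big(\frac{tH^2F^2C_p}{\rho}D^{1-\beta_p}\big)$.

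The last step balances the two terms. Choose $D\asymp\big(\frac{C_pF^2tH^2}{\rho}\big)^{1/\beta_p}\log^{-1/\beta_p}(1+\frac{2tH^2}{\rho})$, as in \cite{vakili2021information}. Both terms are then of order $\big(\frac{C_pF^2tH^2}{\rho}\big)^{1/\beta_p}$ times a logarithm, and bounding that logarithm by $\log K$ (since $t\le K$) gives the stated form, including the factor $2$ and the $\log^{-1/\beta_p}$ correction.

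The main obstacle is making step (a) rigorous for an operator on an infinite-dimensional space. The cleanest route is the Gram form: by Sylvester's identity the Fredholm determinant equals $\det(\bfI_t+\rho^{-1}\overline{\bfK}_t)$. One then splits $\overline{\bfK}_t=\overline{\bfK}^{D}_t+\overline{\bfK}^{\perp}_t$ and applies $\log\det(\bfI+A+B)\le\log\det(\bfI+A)+\log\det(\bfI+(\bfI+A)^{-1/2}B(\bfI+A)^{-1/2})\le\log\det(\bfI+A)+\mathrm{tr}(B)$, with rank$(A)\le D$. The secondary concern is that the constants ($2H$ versus $H$, and the $F^2$ factor) match the statement; I would track them through the per-coordinate bound above, and absorb any slack into the $\widetilde{\cO}$ conclusion $\overline{\gamma}_t(\rho)\le\widetilde{\cO}((tH^2/\rho)^{1/\beta_p})$.
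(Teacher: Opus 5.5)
Your proposal is correct, but it does not follow the paper's proof. The paper never expands $\overline{\phi}$ in the Mercer basis. Since each $\overline{\phi}_i$ is a signed sum of $2H$ canonical features, Cauchy--Schwarz gives $\sum_i\overline{\phi}_i\otimes\overline{\phi}_i\preceq 2H\sum_i\sum_h\big(\phi(z^i_h)\otimes\phi(z^i_h)+\phi(z'^i_h)\otimes\phi(z'^i_h)\big)$. Monotonicity of $\log\det$ then bounds $\overline{\gamma}_t(\rho)$ by the information gain of the base kernel $k$ at $2tH$ points with regularizer $\rho/(2H)$. The known polynomial-decay bound is then applied as a black box.

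You instead re-run the truncation argument on the difference features themselves. You need only $\|\overline{\phi}\|_{\cH_k}\le 2H$ and the envelope $|\langle\overline{\phi},\zeta_j\rangle_{\cH_k}|\le 2HF\sqrt{\lambda_j}$. Your Gram-matrix split into a rank-$D$ part plus a remainder, together with $\log\det(\mathbf{I}+A+B)\le\log\det(\mathbf{I}+A)+\mathrm{tr}(B)$, is rigorous.

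If you unroll the paper's black box with the correct regularizer $\rho/(2H)$, you get exactly your two-term bound $D\log\big(1+\frac{4tH^2}{\rho D}\big)+\frac{4tH^2F^2}{\rho}\sum_{j>D}\lambda_j$, so neither route is sharper. The reduction is shorter and modular: any information-gain bound for $k$, e.g.\ for the SE kernel, transfers verbatim. Your version is self-contained and shows that only a norm bound and a per-coordinate envelope on the features are used.

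One caveat: your balancing does not reproduce the displayed constant ``including the factor $2$''. With the natural choice of $D$ you get $\frac{\beta_p}{\beta_p-1}\big((4C_pF^2tH^2/\rho)^{1/\beta_p}\log^{-1/\beta_p}(1+4tH^2/\rho)+1\big)\log(1+4tH^2/\rho)$. Replacing $\log(1+4tH^2/\rho)$ by $\log K$ is also valid only up to lower-order logarithmic terms. The paper's chain has the same slack: its domination yields regularizer $\rho/(2H)$, not the $\rho/H$ it writes, and that slip is where the $2$ comes from. So what either argument honestly delivers is the $\widetilde{\cO}\big((tH^2/\rho)^{1/\beta_p}\big)$ conclusion.
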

\begin{proof}
Let and $\tau_i=\left(z^{i}_1,z^{i}_2,\dots z^{i}_H\right)$ and $\tau'_i=\left(z'^{i}_1,z'^{i}_2,\dots z'^{i}_H\right)$. We can expand the expression within the $\det(\cdot)$  as :
\begin{align*}
    &\sum_{i=1}^{t}\overline{\phi}(\tau_i,\tau'_i)\otimes \overline{\phi}(\tau_i,\tau'_i)=\sum_{i=1}^{t}\sum_{z,z'\in \tau_i\times\tau'_i} (-1)^{\overline{(z,z')\in \tau_i \lor (z,z')\in \tau'_i}}\phi(z)\otimes\phi(z') 
\end{align*}
Note that $\pm\left(\phi(z)\otimes\phi(z')+\phi(z)\otimes\phi(z')\right)\preceq \phi(z)\otimes\phi(z)+\phi(z')\otimes\phi(z')$ by "completing the square" and thus we have:
\begin{align*}
    &\sum_{i=1}^{t}\overline{\phi}(\tau_i,\tau'_i)\otimes \overline{\phi}(\tau_i,\tau'_i)\preceq 2H\sum_{i=1}^{t}\sum_{h=1}^{H} \left(\phi(z^{i}_h)\otimes\phi(z^{i}_h)+\phi(z'^{i}_h)\otimes\phi(z'^{i}_h)\right)
\end{align*}
    Note that the left side of the above equation is a symmetric operator and thus $\log$ is monotonically increasing over operators we have:
    \begin{align*}
        &\log \det\left(\rho^{-1}\sum_{i=1}^{t}\overline{\phi}(\tau_i,\tau'_i)\otimes \overline{\phi}(\tau_i,\tau'_i)+\mathbf{Id}\right)\leq\\
        &\leq \log\det\left(\frac{2H}{\rho}\sum_{i=1}^{t}\sum_{h=1}^{H} \left(\phi(z^{i}_h)\otimes\phi(z^{i}_h)+\phi(z'^{i}_h)\otimes\phi(z'^{i}_h)\right)+\mathbf{Id}\right)\leq\\
        &\leq  \max_{\{x_i\}_{i=1}^{2tH}} \log\det\left(\frac{2H}{\rho}\sum_{i=1}^{2tH}\phi(x_i)\otimes\phi(x_ i)+\mathbf{Id}\right)= \gamma_{2tH}\left(\frac{\rho}{H}\right)
    \end{align*}
    Where $\gamma_t(\cdot)$ is the information gain defined wrt the original kernel $k$. We can now directly use the information gain bound in \cite{vakili2021information} to finish:
    \begin{align*}
        \overline{\gamma}_t(\rho)\leq \left(\frac{2C_pF^2tH^2 }{\rho}\right)^{\frac{1}{\beta_p}}\log^{-\frac{1}{\beta_p}}\left(1+\frac{2tH^2}{\rho}\right)\log{K}
    \end{align*}
\end{proof}

\newpage

\section{Appendix C}\label{sec:discuss_appendix}

In this section, we add a more technical discussion around the difficulty of adapting the ideas from the works that consider linear trajectory-feedback setting and action-level feedback in kernel MDPs.
\paragraph{Difficulty of extending current approaches to kernel RL preference feedback.}\label{sec:difficulty_kernel_rl_app} 
Extending preference-based RL to kernel MDPs introduces technical obstacles absent in both linear preference-based RL and kernel RL with action-level rewards. The central challenge is controlling the log-covering number of the value-function class, which directly determines the widths of confidence intervals and is essential for optimism-based analyses~\citep{jin2020provably}.

In linear MDPs, the log-covering number is bounded polynomially in the ambient dimension, does not grow with the number of episodes, and only exhibits logarithmic growth with respect to the covering parameter $\epsilon$~\citep{jin2020provably}. Consequently, the covering mesh $\epsilon$ can be set to $1/\text{poly}(K)$ while the log-covering number remains $\widetilde{\cO}(1)$, leaving the regret order unchanged. In kernel settings, the situation is fundamentally different. Recall that the parameter $\beta_p = 2\nu/d + 1$ captures the smoothness of the kernel relative to the input dimension; the log-covering number of the value function class $\cV$  satisfies:~\citep{Vakili23-KerRL, yang2020function}:
\begin{align}\label{eq:appendix_cover}
    &\cV=\left\{\textsc{clip}\left(\max_{a\in \cA}\phi^{\top}(z)\theta+\beta\sqrt{\phi^{\top}(z)\bfB \phi(z)}, C_{\text{clip}}\right), \|\theta\|_{\cH_k}\leq R_{\text{norm}}, \|\bfB\|_2\leq \lambda^{-1}, \beta\leq B_{\text{optim}}\right\}\nonumber\\
    &\log \cN_{\infty}(\cV, \epsilon)= \cO\left(\left(\frac{R^{2}_{\text{norm}}}{\epsilon^2}\right)^{\frac{1}{\beta_p-1}}+\left(\frac{B_{\text{optim}}^2}{\lambda\epsilon^2}\right)^{\frac{2}{\beta_p-1}}\right)
\end{align}
The critical difficulty is that the exponent $1/(\beta_p - 1)$ diverges as $\beta_p \to 1$ (i.e., when $d \gg 2\nu$). Since the estimator norm $R_{\text{norm}}$, the confidence width $B_{\text{optim}}$, and the covering mesh $\epsilon$ are all polynomial in $K$, this means the log-covering number itself can grow as an arbitrary-degree polynomial in $K$, rendering standard regret analyses vacuous. It is almost universally the case that the hardest modes of operation for kernel algorithms(both bandit and MDPs) are exactly when $\beta_p\sim 1$. Frequently, algorithms fail to achieve sub-linearity in the "left tail" of the smoothness parameter $\beta_p$ \cite{chowdhury2019online,yang2020function,maran2024no, srinivas2010gaussian}\\
Kernel preference feedback adds an additional layer of difficulty to bounding the log covering numbers; exploration necessitates the addition of a linear exploration bonus in the form of a GP(see eq.(\ref{eq:noise_definition})). This exploration noise is not sufficiently smooth and lies outside of the $\nu$-Mat\'ern RKHS w.p. 1 \citep{kanagawa1807gaussian}, making the bounds in eq.(\ref{eq:main_text_cover}) inapplicable. While sampling-based exploration in RL is certainly not new \cite{chowdhury2017kernelized}, value iteration poses new challenges for Thompson sampling. While  GP-TS \cite{chowdhury2019online} can simply disregard the complexity of the estimated/sampled posterior and only concentrate on the $\ell_{\infty}$ bound, value iteration requires regularity of the estimators, implicitly through the covering bounds.

\paragraph{Comparison to the work of \cite{Vakili23-KerRL}}The current state of the art for action-level kernel MDPs \citep{maran2024local} considers space partitioning in which the Bellman operator admits a low-error, locally linear approximation and avoids value iteration over a nonparametric functional family altogether. While \cite{maran2024local} achieves an impressive no-regret performance for a wide family of smooth MDPs, domain partitioning is fundamentally unapplicable in trajectory-level feedback.  The stochasticity of transitions prevents advance determination of which trajectory (and thus which part of the space) will be realized, precluding localized Bellman operator estimation. In addition, standard elliptic exploration bonuses cannot be computed causally; they require knowing the whole trajectory before the episode begins. The work of \citep{Vakili23-KerRL} recently claimed a very impressive \textit{order-optimal} regret under action-level feedback; however, \cite{maran2024local} pointed out a subtle flaw in the use of domain partitioning in conjunction with value iteration. Namely, space localization methods aim to obtain tighter confidence bounds by constructing estimators that use only samples from a small subset of the domain. The issue with this method in value iteration is that while we can control the state-action pair $(x^{h}_i, a^h_i)$, there is no way to localize the next transition state $x^{h+1}_i$. This is crucial in value-iteration, as $V(x^{h+1}_i)$ is used as an estimate for $\Psi_{h}V(x^{h}_i, a^h_i)$. Our work avoids this issue by using uniform covering bounds; that is, we do not localize the value function to any part of the domain. As a result, our covering bounds are much less optimistic than those in \citep{Vakili23-KerRL} and require careful tuning of the cover parameter $\epsilon$ to obtain a no-regret performance\\

An additional research direction relevant to preference learning with kernels is trajectory-level feedback learning in linear MDPs. Causality of the exploration bonus in linear MDP's is adressed by sampling Gaussian noise from the trajectory covariance~\citep{shani2020optimistic, cassel2024near}. Exploration in linear MDPs requires exploration only along $d$ directions, hence the addition of the exploratory noise does not alter the order of the log covering bounds in \cite{jin2020provably}.

\paragraph{Our technical contributions.}
The key to obtaining tight confidence bounds and meaningful regret guarantees is controlling the RKHS norm of the $Q$-function estimator. Our contribution addresses this through two main techniques applied to (i) the kernel logistic ridge regression (KLRR) reward estimator (eq.~\ref{eq:reward_theta_calc}) and (ii) the exploration noise (eq.~\ref{eq:noise_definition}).

\textbf{Tuning the reward regularizer $\tau$.} 
The central idea for controlling the KLRR estimator's RKHS norm is careful tuning of the regularizer $\tau$. Increasing $\tau$ places more weight on the norm penalty, yielding smaller-norm estimators, but creates a bias-variance trade-off. Critically, $\tau$ also controls the RKHS norm of the exploration process: as shown in \citet{whitehouse2023sublinear}, tuning $\tau$ enables tighter bounds on the confidence width $\beta_{\text{reward}}(\delta)$, allowing us to reduce exploration variance (and thus RKHS norm) when the estimator is more certain. However, the relationship between $\beta_{\text{reward}}(\delta)$ and $\tau$ is nonlinear; increasing $\tau$ excessively can worsen the confidence width of the KLRR estimator.

\textbf{Covering bounds for the posterior GP $\varepsilon_k$.}
As shown in \cite{kanagawa1807gaussian}, $\varepsilon_k$ does not possess sufficient smoothness to be in $\nu$-Mat\'ern RKHS and instead is a member of a more coarse RKHS, $\nu'=\nu-d/2$ Mat\'ern reproducing Hilbert space. As a result, the usual covering bounds in \cite{yang2020function} are inapplicable, and moreover, for $\beta_p<2$, the sampled GP does not belong to any Mat\'ern RKHS, thus necessitating an entirely different approach. To this end, we use the result of \cite{van2008rates}(Theorem 2.1) linking the log-covering bounds of the GP(with covariance $k(\cdot, \cdot)$)  sample paths and the probability that the GP stays within a($\ell_\infty$) ball(small ball probability). Intuitively, while the GP sample paths lie outside of any $\nu$-Mat\'ern RKHS
ball, with high probability they have a neighbor inside of a (reasonably sized) RKHS ball that is only $\epsilon$, away in $\ell_{\infty}$. Hence $\ell_{\infty}$, $\epsilon$-thickened RKHS ball contains almost all GP sample paths and has small metric entropy. With machinery from \cite{van2008rates}, we modify their result for the posterior GP $\varepsilon_k$. The posterior GP $\varepsilon_k$ is \textit{less random} than its prior $\mathbf{GP}(0, k)$ and thus we expect that the small ball probability for $\varepsilon_k$ is at least that of $\mathbf{GP}(0,k)$. This proves to be the case, and in a satisfying way, large $\tau$ necessitates less exploration noise while large $\beta_{\text{reward}}$ increases the noise. Indeed the log-covering bounds for the sample paths of $\varepsilon_k$ can be bounded (with high probability) as
\begin{align*}
   &\varepsilon_k\in G_{k, \epsilon,\delta} \quad \text{w.p.}\quad 1-\delta\\
   &\log \mathcal N_\infty(\mathcal G_{k, \epsilon,\delta},\epsilon)
    \le\widetilde{\mathcal O}\left(\left(\frac{\beta_r}{\sqrt{\tau}\epsilon}
        \right)^{\frac{2}{(\beta_p-1)}}+\log(1/\delta)\right).
\end{align*}

\textbf{Tuning the transition regularizer $
\lambda$ and the covering parameter $\epsilon$.}
The transition regularizer $\lambda$ governs a bias-variance tradeoff 
analogous to $\tau$. The elliptic exploration bonus is defined through a norm induced by a $\lambda \mathbf{Id}$-shifted covariance operator, hence altering $\lambda$ directly alters the local behavior of the elliptic bonus and hence the covering bounds. A small $\lambda$ 
can provide highly confident(transition) estimator but inflates the covering number of the bonus; 
a large $\lambda$ reduces the covering number at the cost of overly 
pessimistic confidence intervals.
The covering mesh $\epsilon$ introduces a tradeoff that is unique to the kernel regime. In linear MDPs, the log covering number depends only polylogarithmically on $1/\epsilon$, so $\epsilon$ can be tuned freely without polynomial cost. In kernel MDPs, however, eq.~\eqref{eq:main_text_cover} shows that this dependence is polynomial in $1/\epsilon$, and can be of 
arbitrarily high degree in hard kernel instances ($\beta_p\to 1$). 
The two sides of the $\epsilon$ tradeoff are as follows. We run kernel regression on the nearest neighbor of $\widehat{V}^{k}_{h+1}$ in the 
$\ell_\infty$ cover rather than on $\widehat{V}^{k}_{h+1}$ itself, which 
introduces a misspecification error in learning the next-state value 
function $\Psi_h\widehat{V}^{k}_{h+1}$. This causes $\beta_{\text{trans}}$ 
to scale as $\widetilde{\mathcal{O}}(\epsilon\sqrt{K})$,%
\footnote{See eq.~\eqref{eq:transition_conf_width} for details.} 
so a large $\epsilon$ leads to trivially wide confidence intervals. 
Conversely, a small $\epsilon$ controls misspecification but implies
large bounds on the covering number. We tune $\epsilon$ to balance these two effects.

\newpage

\section{Appendix D}\label{sec:exp_app}

In this section, we run a simple simulation emprirically confirming the theoretically proven upper bounds in Theorem \ref{Thrm:main_theorem} \\

We run \textsc{PROSTO} on the synthetic MDP with $\cS=[0,1]^2,\cA=[0,1]$. The transition function is parametrized as a spherical Gaussian function:
\begin{align}
    P_h(x'|x,a)\sim\exp\left(-\frac{\|x'-[x\; a]^{\top}\mathbf{A}\|^2_2}{2}\right)\quad \mathbf{A}=\begin{bmatrix}
    0.5& 0 & 0.5\\
    0& 0.5 & 0.5\\
   \end{bmatrix}^{\top}\label{eq:MDP_2}
\end{align}
We plot the cumulative and average regret of \textsc{PROSTO} for frequently used functions in Bayesian optimization: Hartman \cite{picheny2013benchmark} and Ackley \cite{eriksson2019scalable}. We parametrize both as a function state-action $\mathbf{z}=(s,a)$ variable below:
\begin{align}
 &r_{\text{hartman}}(\mathbf{z}) = -\sum_{i=1}^{4} \alpha_i \exp\!\left(-\sum_{j=1}^{3} A_{ij}(z_j - P_{ij})^2\right), \qquad \mathbf{z}\in[0,1]^3\label{eq:hartman}\\
 &\boldsymbol{\alpha} = (1,\;1.2,\;3,\;3.2)^\top\quad A = \begin{pmatrix} 3 & 10 & 30 \\ 0.1 & 10 & 35 \\ 3 & 10 & 30 \\ 0.1 & 10 & 35 \end{pmatrix}, \qquad P = 10^{-4}\begin{pmatrix} 3689 & 1170 & 2673 \\ 4699 & 4387 & 7470 \\ 1091 & 8732 & 5547 \\ 381 & 5743 & 8828 \end{pmatrix}\nonumber\\
 & r_{\text{ackley}}(\mathbf{z}) = -20\exp\!\left(-0.2\sqrt{\frac{1}{3}\sum_{i=1}^{3}z_i^2}\right) - \exp\!\left(\frac{1}{3}\sum_{i=1}^{3}\cos(2\pi z_i)\right) + 20 + \exp(1) \label{eq:ackley}
\end{align}

Both functions are scaled and translated so that their image is $[0,1]$. Note that both functions are non-convex and admit several local minima over the optimization domain.
\begin{figure}[htbp]
    \centering

    \begin{subfigure}{0.48\linewidth}
        \centering
        \includegraphics[width=\linewidth]{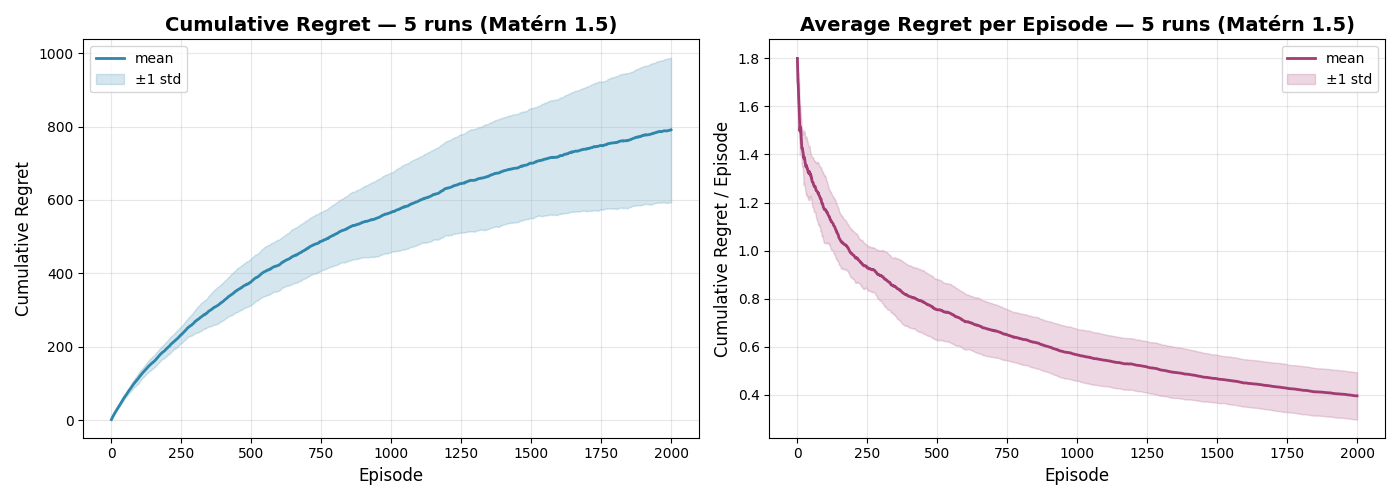}
        \caption{Hartman reward function, Mat\'ern $1.5$ kernel}\label{img:hartman_1}
    \end{subfigure}
    \hfill
    \begin{subfigure}{0.48\linewidth}
        \centering
        \includegraphics[width=\linewidth]{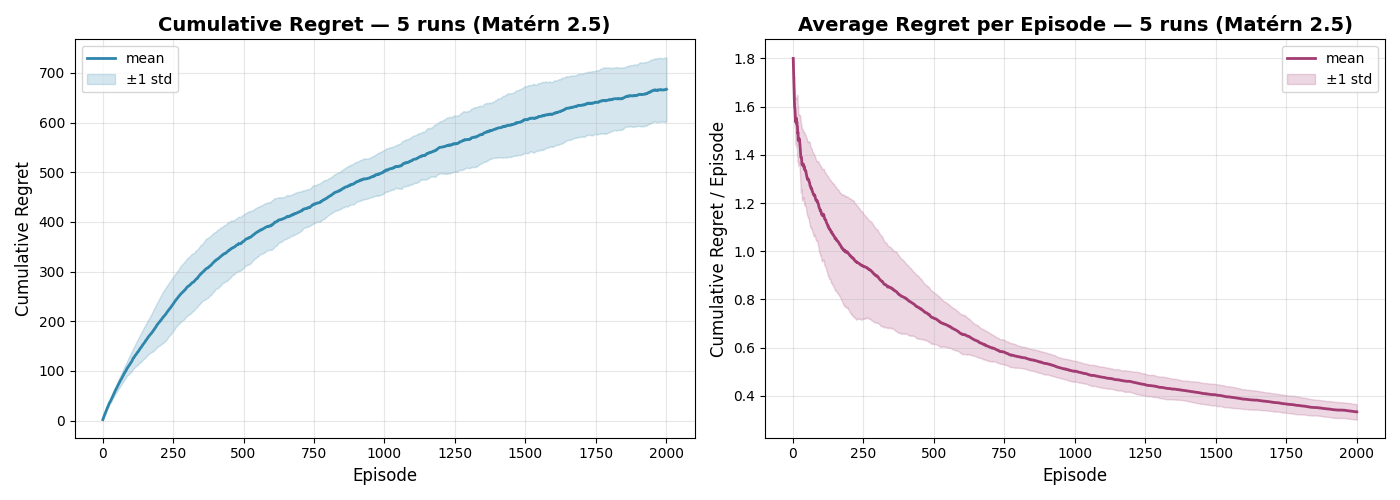}
        \caption{Hartman reward function, Mat\'ern $2.5$ kernel}\label{img:hartman_2}
    \end{subfigure}

    \vspace{0.3cm}

    \begin{subfigure}{0.48\linewidth}
        \centering
        \includegraphics[width=\linewidth]{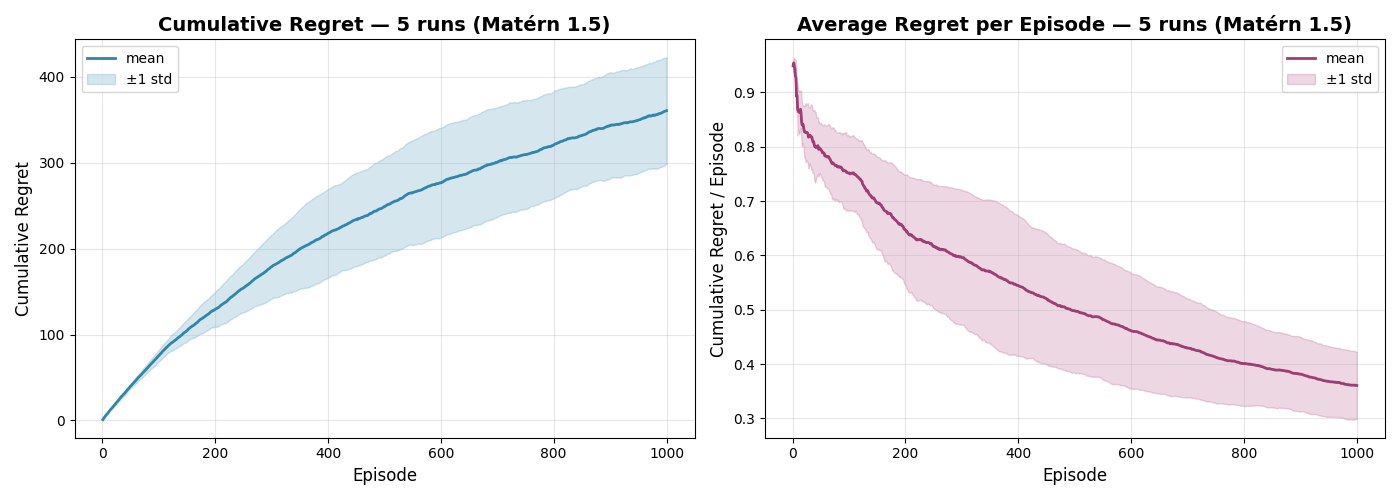}
        \caption{Ackley reward function, Mat\'ern $1.5$ kernel}
    \end{subfigure}
    \hfill
    \begin{subfigure}{0.48\linewidth}
        \centering
        \includegraphics[width=\linewidth]
        {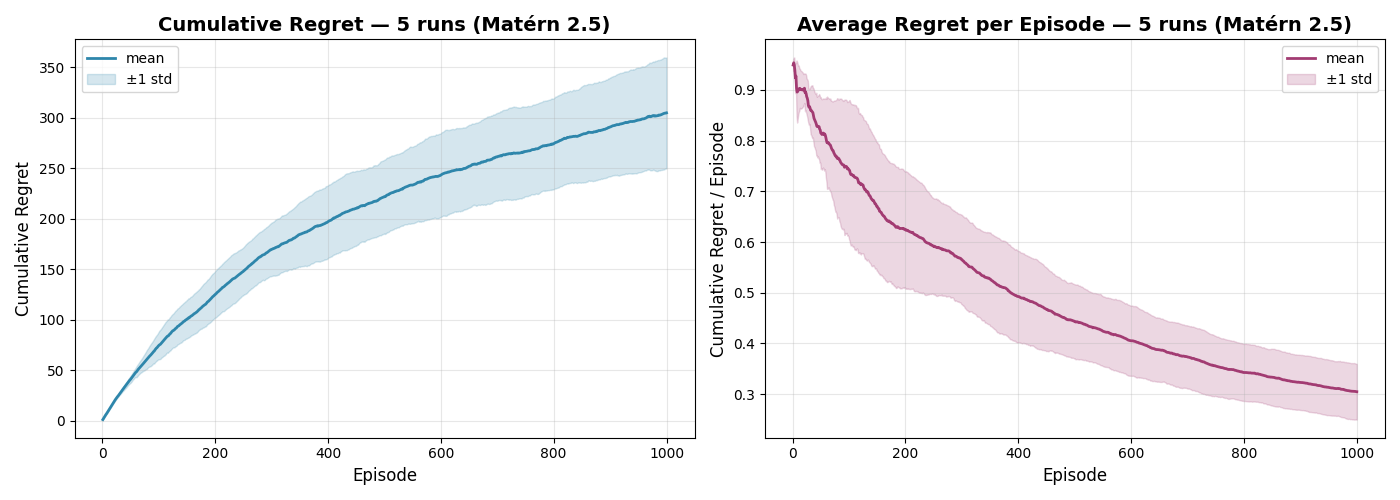}
        \caption{Ackley reward function, Mat\'ern $2.5$ kernel}
    \end{subfigure}

    \caption{Cumulative and average regret for an MDP with Hartman (\ref{eq:hartman})(top row) and Ackley (\ref{eq:ackley})(bottom row) reward functions.}
    \label{fig:hartman_regret}
\end{figure}
We plot the cumulative and average regret of $\textsc{PROSTO}$, where the average regret is defined as $\mathfrak{R}_{\text{avg}}(k)=\frac{\mathfrak{R}(k)}{k}$. From the above picture, we can see $\mathfrak{R}_{\text{avg}}(k)$ decays to zero as the episode count $k$ increases, thus demonstrating a sub-linear regret guarantee.\\
We also plot the log dependency $\log{\mathfrak{R}}(\log k)$ of the cumulative regret as a function of log episode count. This graph offers an interesting comparison between the empirical log regret and the (now linear) upper bound. We plot the log dependency for Hartman and Branin \cite{picheny2013benchmark} reward function

\begin{figure}[htbp]
    \centering

    \begin{subfigure}{0.48\linewidth}
        \centering
        \includegraphics[width=\linewidth]{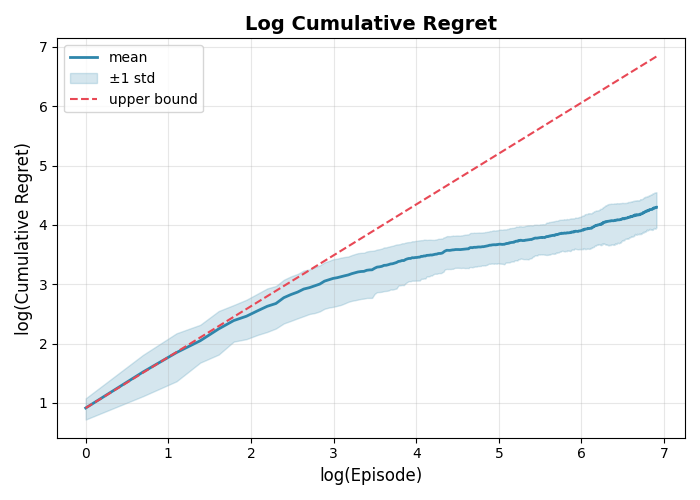}
        \caption{Mat\'ern $2.5$ kernel}
    \end{subfigure}
    \hfill
    \begin{subfigure}{0.48\linewidth}
        \centering
        \includegraphics[width=\linewidth]{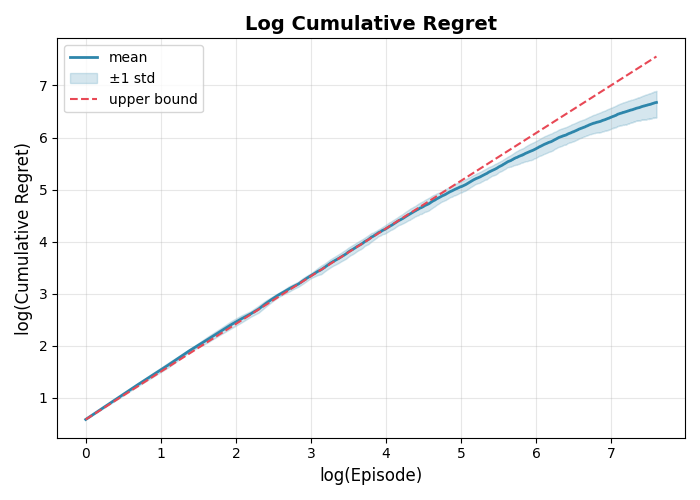}
        \caption{Mat\'ern $1.5$ kernel}
    \end{subfigure}
    \caption{Log regret dependency for Branin(left) and Hartman(right) reward functions}
\end{figure}

The log upper bound is calculated as:
\begin{align*}
    \text{upper}(\log k)=\left(1-\frac{(\beta_p-1)^2}{2\beta_p(\beta_p+1)}\right)\log{k}
\end{align*}
where the kernel smoothness parameter can be calculated as described in the Sec.(\ref{sec:problem}). As seen in the figure, the predicted upper bound lies above the empirical log regret.

\subsection{Discussion on computational complexity of \textsc{PROSTO}}\label{subsec:comp_complexity}

\paragraph{Computational efficiency.}
We now discuss the computational cost of PROSTO.  The two computationally intesive operations in our algorithm are: \textbf{(i)} Sample the posterior GP noise eq.(\ref{eq:noise_definition}) and \textbf{(ii)} greedily optimizing the surrogate $Q$ function $\widehat{Q}^{k}_{h}$ over the entire action set. The remaining operations are logistic regression and $\ell_2$-norm kernel ridge regression, both of which are the result of a convex optimization objective and can thus be efficiently computed over the number of episodes $K$. While various speedups, such as RLS \cite{calandriello2019gaussian}, infrequent Gramian calculation, momentum gradient methods etc.,  are possible for both, here we focus on \textbf{(i), (ii)}.

\paragraph{Calculating (i).}

While $\varepsilon_k$ is not necessarily a member of a $\nu$-Mat\'ern RKHS by a similar argument to that in Lemma.(\ref{lemma:unifom_bound_function}) applied do a slightly differently constructed feature $\phi(z,z')=\phi(z)-\phi(z')$ we have:
\begin{align*}
    \forall z, z'\in  \cZ\langle \phi(z)-\phi(z'), \varepsilon_k\rangle \leq \widetilde{\cO}\left( C
        \sqrt{
            \frac {2d}{\alpha}
            \log\!\left(
                C L_\phi
            \sqrt{\frac{k+\lambda}{\lambda}}\right)}\|\phi(z)-\phi(z')\|_{\bfW^{-1}_k}
        \right)
\end{align*}
Note that LHS contain an $\log(1/\|z-z'\|_2)$ factor which is collapsed into $\widetilde{\cO}(\cdot)$.
By additionally using smoothness of the covariance induced norm $\|\cdot\|_{\bfW^{-1}_k}$ essentially gives us $\alpha=\min(\nu, 1)$ modulus of continuity for the posterior GP $\varepsilon_k$. Equivalently, we only need to calculate $\varepsilon_k$ in $\cO(K^{2d/\alpha})$(say equally spaced into a grid $\cG$) points from the domain $\cZ$ at the price of $\widetilde{\cO}(H)$ additive regret error terms. \\
After this point, this is a tractable, but computationally intensive task. Essentially we need to generate $\mathbb{R}^{\cO(K^{2d/\alpha})}$ dimensional , unbiased Gaussian from a covariance 
$$\mathrm{Cov}(z_1,z_2)=\beta^2_{\text{reward}}(\delta)\langle\phi(z_1),\phi(z_2)\rangle_{\bf{W}^{-1}_k}, (z_1, z_2)\in \cG\times \cG$$
, which can be done by simply diagonalizing $\{\mathrm{Cov}(z_1,z_2)\}_{(z_1,z_2)\in \cG\times \cG}$. Hence the complexity of $\cO(K^{6d/\alpha})$. We note that other sampling-based methods in GP bandits almost ubiquitously assume a finite discretization of the domain \citep{chowdhury2019online, salgia2023random}. 

\paragraph{Calculating (ii)}
For continuous action spaces, the greedy step can either be implemented using a discretization of the state-action space $\cZ$ or it can be abstracted via an
optimization oracle for $\arg\max_{a\in\mathcal A} \widehat Q_h^k(x,a)$ and ommited from overall complexity considerations
as is standard in kernel bandit and kernel RL analyses \citep{salgia2023random, valko2013finite, yang2020function, li2022gaussian} . Here we develop the former argument in more detail, in fact we show that for Mate\'rn kernels the greedy optimization step can always be found in $\cO(K^{2d/\alpha})$  run time while paying for additional additive $\widetilde{\cO}(H)$ regret. The argument here is similar to \textit{"discrete to  continous"} confidence bounds guarantees  seen often in kernel bandits \citep{vakili2021optimal, li2022gaussian}.\\
In case of \textsc{PROSTO} the surrogate state-action function can be written as:
\begin{align*}
    \widehat{Q}^{h}_{k}(z)= \langle \phi(z), \widehat{r}\rangle_{\cH_k}+ \beta \|\phi(z)\|_{\bfB^{-1}}
\end{align*}
where specifically in Alg.(\ref{alg: algorithm}), $\widehat{r}$ is the sum of KLRR and the exploratory GP bonus, and $\bfB$ is the covariance operator where $\|\bfB\|_2\leq \lambda^{-1}$. While $\widehat{r}$ is $\alpha$ H\"older continuous by the same argument as in \textbf{(i)}. To settle the approximation of the elliptic exploration bonus, we simply use the continuity of the operator $\bfB$ induced norm:
\begin{align*}
    &\left|\|\phi(z_1)\|_{\bfB^{-1}}- \|\phi(z_2)\|_{\bfB^{-1}}\right|\leq\\
    &\leq \|\phi(z_1)- \phi(z_2)\|_{\bfB^{-1}}\leq\lambda^{-1/2}\|(\phi(z_1)-\phi(z_2)\|_{\cH_k}\leq \cO\left(\lambda^{-1/2}\|z_1-z_2\|^{\alpha}_2\right)
\end{align*}

Not let $\cG$ be a mesh grid, with mesh size $K^{-2/\alpha}$ and  let $[z]= \mathrm{argmin}_{g \in \cG}\|z-g\|_2$. By previous argument we have:
\begin{align*}
    \left|\widehat{Q}^{h}_{k}(z)-\widehat{Q}^{h}_{k}([z])\right|\leq \cO\left(\max(\beta\lambda^{-1/2}, \|\widehat{r}\|_{\cH_k})\|z_1-z_2\|^{\min(\nu, 1)}\right)
\end{align*}
while  we have developed stronger bounds for $\beta,\|\widehat{r}\|_{\cH_k}$ here a trivial upper bound or $K$ suffices. Finally we have
\begin{align*}
    \left|\widehat{Q}^{h}_{k}(z)-\widehat{Q}^{h}_{k}([z])\right|\leq \cO\left(\frac{1}{K}\right)
\end{align*}

hence using discrete version of $\widehat{Q}$ incurs at most $\widetilde{\cO}(H)$ additive cost while requiring only polynomial run-time as claimed.

\paragraph{Comparison with computational aspects of prior kernel RL algorithms.}
This computational guarantee should be contrasted with existing no-regret algorithms for
kernel and smooth MDPs.  Algorithms such as KOVI\citep{yang2020function} are computationally efficient in the
usual kernel-LSVI sense, but their regret guarantees are not sublinear for the hard low-smoothness Mat\'ern regimes considered here. On the other hand, the recent smooth-MDP
algorithms that obtain no-regret guarantees at this level of generality rely on optimization oracles for which a polynomial implementation is unclear.  In
particular, \citep{maran2024local} explicitly notes that the corresponding constrained continuous
optimization problems are computationally intractable for Smooth Kernel MDP's
and that a naive discretization leads to a complexity exponential $K$.  In particular, for the worst-case complexity in \citep{maran2024local} for the chosen covering parameter is $\widetilde{O}\left(\exp\left(K^{\frac{d}{(\beta_p+1)}}\right)\right)$.\footnote{As noted in Appendix D in the same paper the complexity of solving the non-convex optimization is $\cO\left(K^{1.5\sum_{h\leq H} N_hd_h}\right)$ where $d_h$ is the dimension of the local embedding and $N_h$ is the total number of embeddings. The complexity shown here is directly obtained by plugging parameters obtain in Corollary 19 of the same paper.}

It is unclear if the approach could possibly be implemented efficiently for less general kernel MDPs.\\ 
A domain-partitioning approach in KRVI \cite{Vakili23-KerRL} has been claimed to
achieve both order-optimal regret and polynomial runtime under state-action feedback. However, as explained in this work and \citep{maran2024local}, there is an issue with the covering argument for the surrogate value function, and it is unclear how it can be patched.
 Consequently, to the best of our knowledge, PROSTO is the first
 algorithm under the preference kernel MDP model that simultaneously achieves
sublinear regret for all kernels in the Mat\'ern family and admits an implementation whose
dependence on the number of episodes $K$ is polynomial.


\newpage
\section*{NeurIPS Paper Checklist}

\begin{enumerate}

\item {\bf Claims}
    \item[] Question: Do the main claims made in the abstract and introduction accurately reflect the paper's contributions and scope?
    \item[] Answer:  \answerYes{} 
    \item[] Justification: All the claims stated in the abstract and introduction are further developed in the main text and proven in the Appendix.
    \item[] Guidelines:
    \begin{itemize}
        \item The answer \answerNA{} means that the abstract and introduction do not include the claims made in the paper.
        \item The abstract and/or introduction should clearly state the claims made, including the contributions made in the paper and important assumptions and limitations. A \answerNo{} or \answerNA{} answer to this question will not be perceived well by the reviewers. 
        \item The claims made should match theoretical and experimental results, and reflect how much the results can be expected to generalize to other settings. 
        \item It is fine to include aspirational goals as motivation as long as it is clear that these goals are not attained by the paper. 
    \end{itemize}

\item {\bf Limitations}
    \item[] Question: Does the paper discuss the limitations of the work performed by the authors?
    \item[] Answer:  \answerYes{} 
    \item[] Justification: The limitations of our work are mentioned in the conclusion section and through out the main result and related works section.
    \item[] Guidelines:
    \begin{itemize}
        \item The answer \answerNA{} means that the paper has no limitation while the answer \answerNo{} means that the paper has limitations, but those are not discussed in the paper. 
        \item The authors are encouraged to create a separate ``Limitations'' section in their paper.
        \item The paper should point out any strong assumptions and how robust the results are to violations of these assumptions (e.g., independence assumptions, noiseless settings, model well-specification, asymptotic approximations only holding locally). The authors should reflect on how these assumptions might be violated in practice and what the implications would be.
        \item The authors should reflect on the scope of the claims made, e.g., if the approach was only tested on a few datasets or with a few runs. In general, empirical results often depend on implicit assumptions, which should be articulated.
        \item The authors should reflect on the factors that influence the performance of the approach. For example, a facial recognition algorithm may perform poorly when image resolution is low or images are taken in low lighting. Or a speech-to-text system might not be used reliably to provide closed captions for online lectures because it fails to handle technical jargon.
        \item The authors should discuss the computational efficiency of the proposed algorithms and how they scale with dataset size.
        \item If applicable, the authors should discuss possible limitations of their approach to address problems of privacy and fairness.
        \item While the authors might fear that complete honesty about limitations might be used by reviewers as grounds for rejection, a worse outcome might be that reviewers discover limitations that aren't acknowledged in the paper. The authors should use their best judgment and recognize that individual actions in favor of transparency play an important role in developing norms that preserve the integrity of the community. Reviewers will be specifically instructed to not penalize honesty concerning limitations.
    \end{itemize}

\item {\bf Theory assumptions and proofs}
    \item[] Question: For each theoretical result, does the paper provide the full set of assumptions and a complete (and correct) proof?
    \item[] Answer:  \answerYes{} 
    \item[] Justification: All theoretical statements are proven in the appendix.
    \item[] Guidelines:
    \begin{itemize}
        \item The answer \answerNA{} means that the paper does not include theoretical results. 
        \item All the theorems, formulas, and proofs in the paper should be numbered and cross-referenced.
        \item All assumptions should be clearly stated or referenced in the statement of any theorems.
        \item The proofs can either appear in the main paper or the supplemental material, but if they appear in the supplemental material, the authors are encouraged to provide a short proof sketch to provide intuition. 
        \item Inversely, any informal proof provided in the core of the paper should be complemented by formal proofs provided in appendix or supplemental material.
        \item Theorems and Lemmas that the proof relies upon should be properly referenced. 
    \end{itemize}

    \item {\bf Experimental result reproducibility}
    \item[] Question: Does the paper fully disclose all the information needed to reproduce the main experimental results of the paper to the extent that it affects the main claims and/or conclusions of the paper (regardless of whether the code and data are provided or not)?
    \item[] Answer:  \answerYes{} 
    \item[] Justification: The experiments are described in detail in Sec.\ref{sec:exp_app}. 
    \item[] Guidelines:
    \begin{itemize}
        \item The answer \answerNA{} means that the paper does not include experiments.
        \item If the paper includes experiments, a \answerNo{} answer to this question will not be perceived well by the reviewers: Making the paper reproducible is important, regardless of whether the code and data are provided or not.
        \item If the contribution is a dataset and\slash or model, the authors should describe the steps taken to make their results reproducible or verifiable. 
        \item Depending on the contribution, reproducibility can be accomplished in various ways. For example, if the contribution is a novel architecture, describing the architecture fully might suffice, or if the contribution is a specific model and empirical evaluation, it may be necessary to either make it possible for others to replicate the model with the same dataset, or provide access to the model. In general. releasing code and data is often one good way to accomplish this, but reproducibility can also be provided via detailed instructions for how to replicate the results, access to a hosted model (e.g., in the case of a large language model), releasing of a model checkpoint, or other means that are appropriate to the research performed.
        \item While NeurIPS does not require releasing code, the conference does require all submissions to provide some reasonable avenue for reproducibility, which may depend on the nature of the contribution. For example
        \begin{enumerate}
            \item If the contribution is primarily a new algorithm, the paper should make it clear how to reproduce that algorithm.
            \item If the contribution is primarily a new model architecture, the paper should describe the architecture clearly and fully.
            \item If the contribution is a new model (e.g., a large language model), then there should either be a way to access this model for reproducing the results or a way to reproduce the model (e.g., with an open-source dataset or instructions for how to construct the dataset).
            \item We recognize that reproducibility may be tricky in some cases, in which case authors are welcome to describe the particular way they provide for reproducibility. In the case of closed-source models, it may be that access to the model is limited in some way (e.g., to registered users), but it should be possible for other researchers to have some path to reproducing or verifying the results.
        \end{enumerate}
    \end{itemize}

\item {\bf Open access to data and code}
    \item[] Question: Does the paper provide open access to the data and code, with sufficient instructions to faithfully reproduce the main experimental results, as described in supplemental material?
    \item[] Answer:  \answerNo{} 
    \item[] Justification: We run synthetic experiments with detailed descriptions of the MDP environment sufficient in its own. 
    \item[] Guidelines:
    \begin{itemize}
        \item The answer \answerNA{} means that paper does not include experiments requiring code.
        \item Please see the NeurIPS code and data submission guidelines (\url{https://neurips.cc/public/guides/CodeSubmissionPolicy}) for more details.
        \item While we encourage the release of code and data, we understand that this might not be possible, so \answerNo{} is an acceptable answer. Papers cannot be rejected simply for not including code, unless this is central to the contribution (e.g., for a new open-source benchmark).
        \item The instructions should contain the exact command and environment needed to run to reproduce the results. See the NeurIPS code and data submission guidelines (\url{https://neurips.cc/public/guides/CodeSubmissionPolicy}) for more details.
        \item The authors should provide instructions on data access and preparation, including how to access the raw data, preprocessed data, intermediate data, and generated data, etc.
        \item The authors should provide scripts to reproduce all experimental results for the new proposed method and baselines. If only a subset of experiments are reproducible, they should state which ones are omitted from the script and why.
        \item At submission time, to preserve anonymity, the authors should release anonymized versions (if applicable).
        \item Providing as much information as possible in supplemental material (appended to the paper) is recommended, but including URLs to data and code is permitted.
    \end{itemize}

\item {\bf Experimental setting/details}
    \item[] Question: Does the paper specify all the training and test details (e.g., data splits, hyperparameters, how they were chosen, type of optimizer) necessary to understand the results?
    \item[] Answer: \answerYes{} 
    \item[] Justification: All details of the experiment are stated in Sec.(\ref{sec:exp_app}).
    \item[] Guidelines:
    \begin{itemize}
        \item The answer \answerNA{} means that the paper does not include experiments.
        \item The experimental setting should be presented in the core of the paper to a level of detail that is necessary to appreciate the results and make sense of them.
        \item The full details can be provided either with the code, in appendix, or as supplemental material.
    \end{itemize}

\item {\bf Experiment statistical significance}
    \item[] Question: Does the paper report error bars suitably and correctly defined or other appropriate information about the statistical significance of the experiments?
    \item[] Answer: \answerYes{} 
    \item[] Justification: All experiments are run 5 times and the one std deviations are shown on the graphs.
    \item[] Guidelines:
    \begin{itemize}
        \item The answer \answerNA{} means that the paper does not include experiments.
        \item The authors should answer \answerYes{} if the results are accompanied by error bars, confidence intervals, or statistical significance tests, at least for the experiments that support the main claims of the paper.
        \item The factors of variability that the error bars are capturing should be clearly stated (for example, train/test split, initialization, random drawing of some parameter, or overall run with given experimental conditions).
        \item The method for calculating the error bars should be explained (closed form formula, call to a library function, bootstrap, etc.)
        \item The assumptions made should be given (e.g., Normally distributed errors).
        \item It should be clear whether the error bar is the standard deviation or the standard error of the mean.
        \item It is OK to report 1-sigma error bars, but one should state it. The authors should preferably report a 2-sigma error bar than state that they have a 96\% CI, if the hypothesis of Normality of errors is not verified.
        \item For asymmetric distributions, the authors should be careful not to show in tables or figures symmetric error bars that would yield results that are out of range (e.g., negative error rates).
        \item If error bars are reported in tables or plots, the authors should explain in the text how they were calculated and reference the corresponding figures or tables in the text.
    \end{itemize}

\item {\bf Experiments compute resources}
    \item[] Question: For each experiment, does the paper provide sufficient information on the computer resources (type of compute workers, memory, time of execution) needed to reproduce the experiments?
    \item[] Answer:\answerYes{}
    \item[] Justification: Our experiments are synthetic simulations and could be run on just  CPU over the course of few hours.
    \item[] Guidelines:
    \begin{itemize}
        \item The answer \answerNA{} means that the paper does not include experiments.
        \item The paper should indicate the type of compute workers CPU or GPU, internal cluster, or cloud provider, including relevant memory and storage.
        \item The paper should provide the amount of compute required for each of the individual experimental runs as well as estimate the total compute. 
        \item The paper should disclose whether the full research project required more compute than the experiments reported in the paper (e.g., preliminary or failed experiments that didn't make it into the paper). 
    \end{itemize}
    
\item {\bf Code of ethics}
    \item[] Question: Does the research conducted in the paper conform, in every respect, with the NeurIPS Code of Ethics \url{https://neurips.cc/public/EthicsGuidelines}?
    \item[] Answer: \answerYes{} 
    \item[] Justification: We conform to ethics guidelines.
    \item[] Guidelines:
    \begin{itemize}
        \item The answer \answerNA{} means that the authors have not reviewed the NeurIPS Code of Ethics.
        \item If the authors answer \answerNo, they should explain the special circumstances that require a deviation from the Code of Ethics.
        \item The authors should make sure to preserve anonymity (e.g., if there is a special consideration due to laws or regulations in their jurisdiction).
    \end{itemize}

\item {\bf Broader impacts}
    \item[] Question: Does the paper discuss both potential positive societal impacts and negative societal impacts of the work performed?
    \item[] Answer: \answerNA{} 
    \item[] Justification: It is a theoretical paper.
    \item[] Guidelines:
    \begin{itemize}
        \item The answer \answerNA{} means that there is no societal impact of the work performed.
        \item If the authors answer \answerNA{} or \answerNo, they should explain why their work has no societal impact or why the paper does not address societal impact.
        \item Examples of negative societal impacts include potential malicious or unintended uses (e.g., disinformation, generating fake profiles, surveillance), fairness considerations (e.g., deployment of technologies that could make decisions that unfairly impact specific groups), privacy considerations, and security considerations.
        \item The conference expects that many papers will be foundational research and not tied to particular applications, let alone deployments. However, if there is a direct path to any negative applications, the authors should point it out. For example, it is legitimate to point out that an improvement in the quality of generative models could be used to generate Deepfakes for disinformation. On the other hand, it is not needed to point out that a generic algorithm for optimizing neural networks could enable people to train models that generate Deepfakes faster.
        \item The authors should consider possible harms that could arise when the technology is being used as intended and functioning correctly, harms that could arise when the technology is being used as intended but gives incorrect results, and harms following from (intentional or unintentional) misuse of the technology.
        \item If there are negative societal impacts, the authors could also discuss possible mitigation strategies (e.g., gated release of models, providing defenses in addition to attacks, mechanisms for monitoring misuse, mechanisms to monitor how a system learns from feedback over time, improving the efficiency and accessibility of ML).
    \end{itemize}
    
\item {\bf Safeguards}
    \item[] Question: Does the paper describe safeguards that have been put in place for responsible release of data or models that have a high risk for misuse (e.g., pre-trained language models, image generators, or scraped datasets)?
    \item[] Answer:\answerNA{} 
    \item[] Justification: 
    \item[] Guidelines:
    \begin{itemize}
        \item The answer \answerNA{} means that the paper poses no such risks.
        \item Released models that have a high risk for misuse or dual-use should be released with necessary safeguards to allow for controlled use of the model, for example by requiring that users adhere to usage guidelines or restrictions to access the model or implementing safety filters. 
        \item Datasets that have been scraped from the Internet could pose safety risks. The authors should describe how they avoided releasing unsafe images.
        \item We recognize that providing effective safeguards is challenging, and many papers do not require this, but we encourage authors to take this into account and make a best faith effort.
    \end{itemize}

\item {\bf Licenses for existing assets}
    \item[] Question: Are the creators or original owners of assets (e.g., code, data, models), used in the paper, properly credited and are the license and terms of use explicitly mentioned and properly respected?
    \item[] Answer: \answerNA{} 
    \item[] Justification: 
    \item[] Guidelines:
    \begin{itemize}
        \item The answer \answerNA{} means that the paper does not use existing assets.
        \item The authors should cite the original paper that produced the code package or dataset.
        \item The authors should state which version of the asset is used and, if possible, include a URL.
        \item The name of the license (e.g., CC-BY 4.0) should be included for each asset.
        \item For scraped data from a particular source (e.g., website), the copyright and terms of service of that source should be provided.
        \item If assets are released, the license, copyright information, and terms of use in the package should be provided. For popular datasets, \url{paperswithcode.com/datasets} has curated licenses for some datasets. Their licensing guide can help determine the license of a dataset.
        \item For existing datasets that are re-packaged, both the original license and the license of the derived asset (if it has changed) should be provided.
        \item If this information is not available online, the authors are encouraged to reach out to the asset's creators.
    \end{itemize}

\item {\bf New assets}
    \item[] Question: Are new assets introduced in the paper well documented and is the documentation provided alongside the assets?
    \item[] Answer: \answerNA{} 
    \item[] Justification: 
    \item[] Guidelines:
    \begin{itemize}
        \item The answer \answerNA{} means that the paper does not release new assets.
        \item Researchers should communicate the details of the dataset\slash code\slash model as part of their submissions via structured templates. This includes details about training, license, limitations, etc. 
        \item The paper should discuss whether and how consent was obtained from people whose asset is used.
        \item At submission time, remember to anonymize your assets (if applicable). You can either create an anonymized URL or include an anonymized zip file.
    \end{itemize}

\item {\bf Crowdsourcing and research with human subjects}
    \item[] Question: For crowdsourcing experiments and research with human subjects, does the paper include the full text of instructions given to participants and screenshots, if applicable, as well as details about compensation (if any)? 
    \item[] Answer: \answerNA{} 
    \item[] Justification: 
    \item[] Guidelines:
    \begin{itemize}
        \item The answer \answerNA{} means that the paper does not involve crowdsourcing nor research with human subjects.
        \item Including this information in the supplemental material is fine, but if the main contribution of the paper involves human subjects, then as much detail as possible should be included in the main paper. 
        \item According to the NeurIPS Code of Ethics, workers involved in data collection, curation, or other labor should be paid at least the minimum wage in the country of the data collector. 
    \end{itemize}

\item {\bf Institutional review board (IRB) approvals or equivalent for research with human subjects}
    \item[] Question: Does the paper describe potential risks incurred by study participants, whether such risks were disclosed to the subjects, and whether Institutional Review Board (IRB) approvals (or an equivalent approval/review based on the requirements of your country or institution) were obtained?
    \item[] Answer: \answerNA{} 
    \item[] Justification: 
    \item[] Guidelines:
    \begin{itemize}
        \item The answer \answerNA{} means that the paper does not involve crowdsourcing nor research with human subjects.
        \item Depending on the country in which research is conducted, IRB approval (or equivalent) may be required for any human subjects research. If you obtained IRB approval, you should clearly state this in the paper. 
        \item We recognize that the procedures for this may vary significantly between institutions and locations, and we expect authors to adhere to the NeurIPS Code of Ethics and the guidelines for their institution. 
        \item For initial submissions, do not include any information that would break anonymity (if applicable), such as the institution conducting the review.
    \end{itemize}

\item {\bf Declaration of LLM usage}
    \item[] Question: Does the paper describe the usage of LLMs if it is an important, original, or non-standard component of the core methods in this research? Note that if the LLM is used only for writing, editing, or formatting purposes and does \emph{not} impact the core methodology, scientific rigor, or originality of the research, declaration is not required.
    \item[] Answer: \answerNA{} 
    \item[] Justification: 
    \item[] Guidelines:
    \begin{itemize}
        \item The answer \answerNA{} means that the core method development in this research does not involve LLMs as any important, original, or non-standard components.
        \item Please refer to our LLM policy in the NeurIPS handbook for what should or should not be described.
    \end{itemize}

\end{enumerate}

\end{document}